\def\techreport{}
\newcommand\demo{}
\newcommand{\citep}[1]{{\cite{#1}}}
\definecolor{color1}{rgb}{0.1,0.498039215686275,0.9549019607843137}
\definecolor{alizarin}{rgb}{0.82, 0.1, 0.26}
\definecolor{antiquewhite}{rgb}{0.98, 0.92, 0.84}
\definecolor{azure}{rgb}{0.94, 1.0, 1.0}
\definecolor{offwhite}{rgb}{0.98, 0.97, 0.97}
\definecolor{pigment}{rgb}{0.2, 0.2, 0.6}
\newcommand{\cons}[0]{C_{\theta_{1:N}}}
\newcommand{\consh}[0]{\hat{C}_{\theta_{1:N}}}
\renewcommand{\xRightarrow}[2][]{\ext@arrow 0359\Rightarrowfill@{#1}{#2}}
\definecolor{color1}{rgb}{0.1,0.498039215686275,0.9549019607843137}
\definecolor{alizarin}{rgb}{0.82, 0.1, 0.26}
\definecolor{antiquewhite}{rgb}{0.98, 0.92, 0.84}
\definecolor{azure}{rgb}{0.94, 1.0, 1.0}
\definecolor{offwhite}{rgb}{0.98, 0.97, 0.97}
\definecolor{pigment}{rgb}{0.2, 0.2, 0.6}
\newcommand{\upmdp}[0]{\mathcal{M}^{\mathbb{P}}_\Theta}
\DeclareMathOperator\supp{supp}
\providecommand*{\cupdot}{%
  \mathbin{%
    \mathpalette\@cupdot{}%
  }%
}
\newcommand*{\@cupdot}[2]{%
  \ooalign{%
    $\m@th#1\cup$\cr
    \sbox0{$#1\cup$}%
    \dimen@=\ht0 %
    \sbox0{$\m@th#1\cdot$}%
    \advance\dimen@ by -\ht0 %
    \dimen@=.5\dimen@
    \hidewidth\raise\dimen@\box0\hidewidth
  }%
}
\newcommand{\until}{\, \mathcal{U} \,}
\newenvironment{proofsketch}{%
  \proof}{\endproof}
   \def\@citecolor{blue}%
   \def\@urlcolor{blue}%
   \def\@linkcolor{blue}%
\def\orcidID#1{\smash{\href{http://orcid.org/#1}{\protect\raisebox{-1.25pt}{\protect\includegraphics{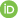}}}}}
\newmdtheoremenv[
  innerleftmargin=5pt,
  innerrightmargin=5pt,
  innertopmargin=5pt,
  innerbottommargin=5pt,
  skipabove=10pt,
  skipbelow=10pt
]{myproblem}{Problem}
\newcommand{\startpara}[1]{{%
\vskip6pt\noindent
{\bf #1.}}}
\begin{document}
\title{Certifiably Robust Policies \\ for Uncertain Parametric Environments}
%
%\titlerunning{Abbreviated paper title}
% If the paper title is too long for the running head, you can set
% an abbreviated paper title here
%
 \author{Yannik Schnitzer~\orcidID{0000-0001-7406-3440} \and Alessandro Abate~\orcidID{0000-0002-5627-9093} \and David Parker~\orcidID{0000-0003-4137-8862}}
%\author{Anonymous submission}

%
\authorrunning{Y. Schnitzer et al.}
% First names are abbreviated in the running head.
% If there are more than two authors, 'et al.' is used.
%
\institute{University of Oxford, Oxford, UK \email{\{yannik.schnitzer,alessandro.abate,david.parker\}@cs.ox.ac.uk}}
 
%\institute{}
%
\maketitle              % typeset the header of the contribution
\begin{abstract}

We present a data-driven approach for producing policies that are provably robust across unknown stochastic environments. Existing approaches can learn models of a single environment as an interval Markov decision processes (IMDP) and produce a robust policy with a probably approximately correct (PAC) guarantee on its performance. However these are unable to reason about the impact of environmental parameters underlying the uncertainty. We propose a framework based on parametric Markov decision processes with unknown distributions over parameters. We learn and analyse IMDPs for a set of unknown sample environments induced by parameters.
The key challenge is then to produce meaningful performance guarantees that combine the two layers of uncertainty: (1) multiple environments induced by parameters with an unknown distribution; (2) unknown induced environments which are approximated by IMDPs. We present a novel approach based on scenario optimisation that yields a single PAC guarantee quantifying the risk level for which a specified performance level can be assured in unseen environments, plus a means to trade-off risk and performance.
We implement and evaluate our framework using multiple robust policy generation methods on a range of benchmarks. We show that our approach produces tight bounds on a policy's performance with high confidence.

%\ys{check for clearity}
% \keywords{Markov decision processes  \and Uncertainty \and Verification.}
\end{abstract}
\section{Introduction}
\label{sec:intro}
% Intuitively, speaking in the drone delivery example from Figure~\ref{fig:drone}, this sampling of partially unknown MDPs can be interpreted as taking the drone outside on a new day, encountering a new instantiation of the upMDP with new parameters for, e.g., wind, temperature, or precipitation. It may be the case that for some parameters in our model, we can measure the respective quantities in our current environment and are aware of the exact dependency of the stochastic behavior on these quantities. However, it may as well be the case that we can not exactly infer the respective probabilities in our system since (a) the quantities determining a parameter in our model are not exactly measurable or observable in our environment; (b) the exact dependency of a parameter on observable quantities is unknown. 
% \begin{wrapfigure}[16]{r}{0.5\textwidth}
%     \centering
%     \input{text/procedure_flow}
%     \caption{Procedure}
%     \label{fig:procedure}
% \end{wrapfigure}+

Ensuring the safety and robustness of autonomous systems in safety-critical tasks, such as unmanned aerial vehicles (UAVs), %~\citep{DBLP:journals/sttt/BadingsCJJKT22, uav},
robotics %~\citep{sharedautonomys-2022/7, riskawaremotion-2022/2, decisionmakingu-2022/}
or autonomous control, %~\cite{DBLP:journals/corr/abs-2312-06344,DBLP:conf/qest/RickardBRA23},
is paramount. A standard model for sequential decision making in these settings is a Markov decision process (MDP), which provides a stochastic model of the environment.
However, real-world dynamics are complex, not fully known, and may evolve over time. 
Reasoning about \textit{epistemic uncertainty}, which quantifies the lack of knowledge about the environment,
can help construct \textit{robust} policies that perform well across multiple possible stochastic environments. 

% Therefore, models with \textit{epistemic uncertainty}~\citep{badings2023decisionmaking,DBLP:journals/sttt/BadingsCJJKT22,DBLP:journals/corr/charpentier2022,coppola2024datadriven,ghosh2021generalization,DBLP:conf/tacas/Junges0DTK16,DBLP:journals/ior/NilimG05,DBLP:conf/qest/RickardBRA23,DBLP:journals/mor/WiesemannKR13} are considered which account for uncertainty resulting from a lack of knowledge of the system or the environment under investigation. Uncertain models help construct \textit{robust} policies that perform well across the various possible stochastic environments. %~\citep{DBLP:conf/aaai/BadingsA00PS22, DBLP:conf/aaai/BadingsRA023, DBLP:journals/corr/abs-2404-01726}.

\begin{figure}[h]
    \centering
    \begin{subfigure}[b]{0.47\textwidth}
        \centering
        \hspace{-15pt}
        \scalebox{0.55}{
        \input{text/UAV-plan}
        }
        \caption{UAV motion planning environment with sample trajectories~\citep{DBLP:journals/sttt/BadingsCJJKT22}.}
        \label{fig:drone}
    \end{subfigure}
    \hfill
    \begin{subfigure}[b]{0.47\textwidth}
        \centering
        \includegraphics[width = \textwidth]{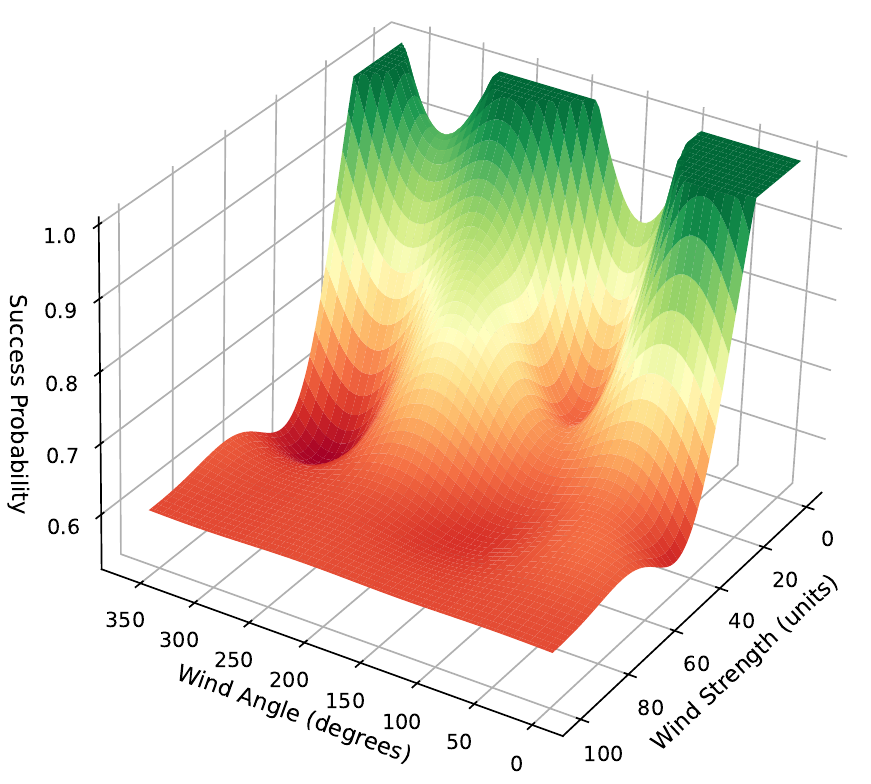} % Placeholder for the second subfigure
        \caption{Probability of task completion $J(\pi,\theta)$ for different parameter valuations $\theta$.}
        \label{fig:droneeval}
    \end{subfigure}
    \caption{Example parametric environment with induced performance function.}
    \vspace{-16pt}
\end{figure}

Consider the UAV motion planning problem shown in Figure~\ref{fig:drone}, based on~\cite{DBLP:journals/sttt/BadingsCJJKT22}. The goal is to navigate the drone safely to the target zone (green box) whilst avoiding obstacles (red regions). The drone's dynamics are influenced by weather conditions, such as wind strength or direction, potentially perturbing the drone from its intended route. These conditions can vary over time and may be difficult to observe exactly.  % and temperature.
In low disturbance conditions (e.g., light wind), the drone can safely take the shorter route to the target (black dashed line). However, the drone should fly safely under all conditions, even if it flies overly cautiously in some. Therefore, a robust policy might take a detour through a less cluttered region of the environment (longer blue dashed line), ensuring a high probability of task completion even under more severe disturbances.

% % \dave{Clarify unobserved params vs. unknown influence?}
% The (stochastic) dynamics of the drone are impacted by environmental parameters such as wind speed or direction. The values of these parameters are unknown - they can vary over time and may be difficult to observe - but some conditions may be more probable than others.

Epistemic uncertainty about the environment can be captured using \emph{uncertain} MDPs, such as \emph{interval MDPs} (IMDPs), which define a range of possible values for the probability of each transition between states of the model~\cite{DBLP:journals/ai/GivanLD00,DBLP:journals/mor/WiesemannKR13}.
Under assumptions of independence, techniques such as robust dynamic programming~\cite{DBLP:journals/mor/Iyengar05,DBLP:journals/ior/NilimG05} can then be used to efficiently generate \emph{robust} policies for these IMDPs, i.e., policies that are optimal under \emph{worst-case} assumptions about the true values of the  transition probabilities.
Furthermore, data-driven approaches, for example based on sampled trajectories through the environment, can be used to simultaneously learn both an IMDP and a robust policy for it~\cite{DBLP:conf/cav/AshokKW19,DBLP:journals/corr/meggendorfer24,DBLP:conf/icml/StrehlL05,DBLP:conf/nips/SuilenS0022},
along with a probably approximately correct (PAC) guarantee on its performance.
% (for a specified confidence level) 

In this paper, we present a general framework for synthesising provably robust policies
in settings where environmental uncertainty is influenced by one or more \emph{parameters},
e.g., wind strength/direction in the UAV example above.
We model environments as \textit{uncertain parametric MDPs} (upMDPs)~\citep{DBLP:journals/sttt/BadingsCJJKT22},
% \cite{DBLP:conf/aaai/BadingsRA023, DBLP:conf/valuetools/Scheftelowitsch17},
comprising a \emph{parameter space} $\Theta$ of which each \emph{parameter valuation} $\theta\in\Theta$ induces a standard MDP.
Furthermore, an (unknown) distribution $\mathbb{P}$ over $\Theta$ represents the likelihood of each parameter valuation.
Based on trajectories through multiple sampled instances of the environment, % (sampled via $\mathbb{P}$), 
our goal is to produce \emph{certifiably robust} policies. 
Instead of making worst-case assumptions across all possible parameter values,
which can be overly conservative, we adopt a risk-based approach,
providing a guaranteed level of performance for the policy with an associated \emph{risk level}
that quantifies the possibility of this level being violated.
We also provide a tuning mechanism to adapt the trade-off between performance and risk.

To quantify the performance of a policy $\pi$ in an MDP induced by parameter valuation $\theta \in \Theta$, we use an \emph{evaluation function} $J(\pi, \theta)$.
Typical examples include the probability to satisfy a specification expressed in temporal logics such as LTL~\cite{DBLP:conf/focs/Pnueli77} or PCTL~\cite{DBLP:journals/fac/HanssonJ94} or an expected reward (see Section~\ref{sec:prelim}). For a fixed policy, $J$ becomes a function in the valuations $\theta$, as depicted in Figure~\ref{fig:droneeval} for the UAV example. When additionally considering the distribution $\mathbb{P}$ over parameter valuations, $J$ becomes a random variable with respect to $\mathbb{P}$ describing the performance likelihood under policy $\pi$ (see Figure~\ref{fig:inducedVal}).
Whereas the dashed vertical lines in Figure~\ref{fig:inducedVal} indicate the worst-case performance of each policy, Figure~\ref{fig:violationrisk} illustrates the risk measure $r(\pi, \tilde{J})$ that we use in this paper: the probability with which performance falls below a specified threshold $\tilde{J}$.

\begin{figure}[t]
    \centering
    \begin{subfigure}[b]{0.46\textwidth}
        \centering
        \includegraphics[width=\textwidth]{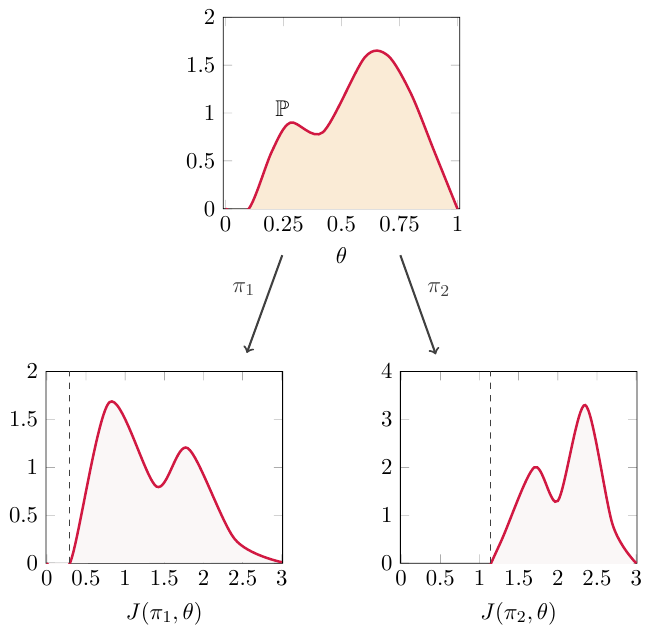}
        \caption{Measure $\mathbb{P}$ and policy-induced densities over performance values $J(\pi_i,\theta)$.}
        \label{fig:inducedVal}
    \end{subfigure}
    \hfill
    \begin{subfigure}[b]{0.46\textwidth}
        \centering
        \hspace{-0.5cm} % Adjust this value to move the figure to the left
            \centering
    \scalebox{0.9}{
    % \begin{tikzpicture}[baseline, remember picture]
    % \begin{axis}[width=2.2in,
    %              height=1.9in,
    %              ymin=0,
    %              ymax=4,
    %              xmin=-0.01,
    %              xmax=3.01,
    %              xlabel={$V(\pi, \theta)$},
    %              ylabel={$\Pr$},
    %              xtick={0, 0.5, 1, 1.5, 2, 2.5, 3},
    %              ylabel style={rotate=-90, xshift=10pt}, % Rotate and move closer to the axis
    %              smooth,
    %              samples=200,
    %              clip=true]
    % \addplot[mark=none, unbounded coords=jump, color=alizarin, line width = 1.2pt, fill = offwhite, name path=axis] coordinates {
    %                 (0,-1)
    %                 (0.3,-1)
    %                 (0.6,-1)
    %                 (0.84,-1)
    %                 (1.26,0.4)
    %                 (1.7,2)
    %                 (2.0, 1.3)
    %                 (2.35,3.3)
    %                 (2.7,0.8)
    %                 (3.0,0)
    %             };
    
    % % Vertical line
    % \addplot[name path=vertline, dashed, red] coordinates {(1.5,0) (1.5,4)};
    
    %     \addplot[mark=none, unbounded coords=jump, color=alizarin!30!offwhite, line width = 0.1pt, name path=axis, fill = alizarin!30!offwhite] coordinates {
    %                 (1.17,0.001)
    %                 (1.3, 0.46)
    %                 (1.48,1.2)
    %                 (1.485,1.1)
    %                 (1.4881,0.001)
    %             };
                
    % \end{axis}
    % \node[] at (2.1,3.6) {\small$\tilde{V}$};
    
    % \node[] (upper) at (-1,0.5) {};
    % \end{tikzpicture}
    % \hspace{20pt}
    \begin{tikzpicture}[baseline, remember picture, ]
    \begin{axis}[width=2.2in,
                 height=1.9in,
                 ymin=0,
                 ymax=4,
                 xmin=-0.01,
                 xmax=3.01,
                 xlabel={$J(\pi, \theta)$},
                 ylabel={},
                 xtick={0, 0.5, 1, 1.5, 2, 2.5, 3},
                 ylabel style={rotate=-90, xshift=10pt}, % Rotate and move closer to the axis
                 smooth,
                 samples=200,
                 clip=true]
    \addplot[mark=none, unbounded coords=jump, color=alizarin, line width = 1.2pt, name path=axis, fill = offwhite] coordinates {
                    (0,-1)
                    (0.3,-1)
                    (0.6,-1)
                    (0.84,-1)
                    (1.26,0.4)
                    (1.7,2)
                    (2.0, 1.3)
                    (2.35,3.3)
                    (2.7,0.8)
                    (3.0,0)
                };
    
    % Vertical line
    \addplot[name path=vertline, dashed, red] coordinates {(1.5,0) (1.5,4)};

    \addplot[mark=none, unbounded coords=jump, color=alizarin!30!offwhite, line width = 0.1pt, name path=axis, fill = alizarin!30!offwhite] coordinates {
                    (1.17,0.001)
                    (1.3, 0.46)
                    (1.48,1.2)
                    (1.485,1.1)
                    (1.4881,0.001)
                };

    % % Blue dots
    \addplot[only marks, mark=*, mark size=1.5pt, blue!50] coordinates {(1.3,0) (1.5,0) (1.65,0) (1.75,0) (2.3,0) (2.1,0) (2.1,0) (2.4,0) (2.6,0)};

    % Blue bars
    % \addplot[only marks, mark=none, forget plot] coordinates {(1.3,0) (1.6,0) (1.7,0) (2.3,0) (2.1,0) (2.1,0) (2.4,0) (2.6,0)};
% \addplot[mark=none, blue!50, line width=1.2pt, forget plot] coordinates {(1.3,0) (1.3,0.2)};
% \addplot[mark=none, blue!50, line width=1.2pt, forget plot] coordinates {(1.6,0) (1.6,0.2)};
% \addplot[mark=none, blue!50, line width=1.2pt, forget plot] coordinates {(1.7,0) (1.7,0.2)};
% \addplot[mark=none, blue!50, line width=1.2pt, forget plot] coordinates {(2.3,0) (2.3,0.2)};
% \addplot[mark=none, blue!50, line width=1.2pt, forget plot] coordinates {(2.1,0) (2.1,0.2)};
% \addplot[mark=none, blue!50, line width=1.2pt, forget plot] coordinates {(2.1,0) (2.1,0.2)};
% \addplot[mark=none, blue!50, line width=1.2pt, forget plot] coordinates {(2.4,0) (2.4,0.2)};
% \addplot[mark=none, blue!50, line width=1.2pt, forget plot] coordinates {(2.6,0) (2.6,0.2)};

    \end{axis}
    \node[] at (2.0,3.6) {\small$\tilde{J}$};
    \node[]at (1.2,0.6) {\scriptsize$r(\pi, \tilde{J})$};
    \draw[gray, thin] (1.3,0.43)  -- (1.8,0.1);
    \end{tikzpicture}
    }
        \caption{Performance guarantee $\tilde{J}$ and risk, estimated by sample performances (blue).} %$r(\pi, \tilde{J})$.}
        \label{fig:violationrisk}
    \end{subfigure}
    \caption{For a fixed policy $\pi$, $J(\pi,\theta)$ is a random variable over performance values with measure $\mathbb{P}$ over valuations $\theta \in \Theta$ (left). We sample performances to bound the risk $r(\pi, \tilde{J})$, i.e., the probability for $J$ to take a value less than $\tilde{J}$ (right).}
    \vspace{-8pt}
\end{figure}
%\ys{}

Deriving policies that are robust, i.e., which achieve high performance across either many or all possible environments, is a challenging problem.
When $\Theta$ is finite, and assuming worst-case performance (i.e., ignoring $\mathbb{P}$), the model is referred to as a \emph{multi-environment MDP}, for which finding an optimal robust memoryless policy is NP-hard, even for just two fully known environments~\citep{DBLP:conf/fsttcs/RaskinS14}.
For general upMDPs, recent work~\cite{DBLP:journals/corr/Rickard23} finds robust policies but assumes that $\mathbb{P}$ can be sampled directly and that the resulting MDPs are fully known.
In our setting, transition probabilities are unknown and are inferred from trajectories.
This is comparable to the aforementioned work on PAC-learning of IMDPs~\cite{DBLP:conf/cav/AshokKW19,DBLP:journals/corr/meggendorfer24,DBLP:conf/icml/StrehlL05,DBLP:conf/nips/SuilenS0022}, 
but these methods assume a single, fixed (but unknown) MDP.
% The generality of our setup thus goes beyond existing verification approaches to upMDPs~\cite{DBLP:journals/sttt/BadingsCJJKT22,DBLP:journals/corr/Rickard23}, which require the sampled environments to be fully known.

In our work, there are \emph{two layers} of uncertainty, resulting from (1) unknown parameter valuations inducing unknown MDPs, sampled from (2) the unknown parameter distribution.
% While exact optimal policy synthesis is out of reach for unknown distributions over infinite parameter spaces,
In this setting, various learning-based methods known as \emph{robust meta reinforcement learning}
have been proposed~\cite{DBLP:conf/nips/CollinsMS20,DBLP:conf/nips/GreenbergMCM23,DBLP:conf/nips/TehBCQKHHP17}.
% aim to learn a robust policy from trajectories of sample environments that generalises to the entire parameter space
% The input to these procedures is a finite set of environments sampled from the distribution $\mathbb{P}$, which themselves are not known.
% The algorithms are only provided with access to trajectories from each environment.
% Robust meta RL comes with different objectives ranging from the most safe and conservative max-min objective, optimising for the performance in the worst-case environment~\cite{DBLP:conf/nips/CollinsMS20}, to optimising for risk-based measures exploiting the probabilistic nature of upMDPs, such as VaR or CVaR~\cite{DBLP:conf/nips/GreenbergMCM23}. 
Crucially, though, none of the existing learning algorithms are able to provide theoretical guarantees as to the performance of the generated policies in unseen environments;
this is a core contribution of our framework.

An overview of our approach is illustrated in Figure~\ref{fig:procedure}.
We assume access to multiple sampled environments $\mathcal{M}[\theta_i]$,
each of which is an MDP induced by a parameter valuation $\theta_i$ from the unknown distribution $\mathbb{P}$.
These are not fully known; instead we are able to access a set of sample trajectories from each one.
In our UAV example, this equates to taking the drone outside on a new day and encountering a new set of environmental conditions (or a simulation of this).

% We are facing two levels of uncertainty: (1) we want to provide guarantees for a policy's performance over the entire unknown distribution $\mathbb{P}$, which we can only estimate from sample environments; (2) the sample environments themselves are not fully known and can only be approximated from sample trajectories.

%\textcolor{red}{[somewhere we should also emphasise that we do not need knowledge of big P]}
% Instead, we are only able to access the induced \emph{(partially) unknown MDP}:
% we know the model's underlying structure (and, potentially, dependencies between
% some transition probabilities that share parameters), but the exact dynamics are
% unknown and can only be inferred from sample trajectories.

% to be maximised
% (e.g., the probability of mission completion, or the expected total reward).
% So, the value of $J$ for each policy $\pi$ is a random variable with
% respect to the parameter distribution $\mathbb{P}$ (see Figure~\ref{fig:inducedVal}).
% We want policies that are provably robust to the variation in parameter values.
% More precisely, we aim to provide a \emph{performance guarantee} $\smash{\tilde{J}}$
% which has a low \emph{violation risk} $\smash{r(\pi,\tilde{J})}$,
% i.e., a low probability that the evaluation function does not meet the threshold $\tilde{J}$
% (see Figure~\ref{fig:violationrisk}).
Our framework divides the sample environments into two groups, a \emph{training set} and a \emph{verification} set.
The training set is used to learn a robust policy.
For this we build on existing IMDP-based policy learning methods
and also consider robust meta reinforcement learning techniques.
% passed to an arbitrary robust policy learning algorithm~\cite{DBLP:conf/nips/CollinsMS20,DBLP:conf/nips/GreenbergMCM23,DBLP:journals/corr/Rickard23}.
%such as robust meta RL \textcolor{red}{[only, any other alternatives?]}. 
The verification set is used to derive the guarantees on the performance of the robust policy obtained from the training set. 
For this, we apply PAC IMDP learning to each unknown MDP in the verification set.
%We employ the computed policy and apply \emph{statistical model checking} (SMC)~\cite{DBLP:journals/tomacs/AghaP18,DBLP:conf/vmcai/HeraultLMP04,DBLP:conf/rv/LegayDB10} techniques to each \textcolor{red}{verification MDP [rephrase]}. 
%In this paper, we assume finite MDPs with a known graph structure and apply \emph{Model-SMC}~\cite{DBLP:conf/cav/AshokKW19,DBLP:journals/corr/meggendorfer24}. 
Concretely, we use sample trajectories to infer IMDP overapproximations, which contain the true, unknown MDPs with a user-specified confidence. From those, we can derive lower bounds on the performance $J$ of the learned policy in each of the environments, which hold with the specified confidence. 

% We sample trajectories from partially unknown MDPs $\mathcal{M}[\theta_i]$
% and use these to build approximations of the corresponding sample environments
% as \emph{interval MDPs} (IMDPs)~\citep{DBLP:journals/mor/WiesemannKR13}.
% Sample environments are divided into a \emph{training set}, for learning a robust policy,
% and a \emph{verification set} for evaluating its performance guarantee $\tilde{J}$.
% % \dave{commented out ``combined overapproximation'' for now}
% % The robust policy is obtained from a combined overapproximation of all learned IMDPs.
% %This guarantee leverages the confidence in IMDP over-approximations to include the underlying sampled MDPs~\cite{DBLP:conf/icml/StrehlL05}.
% For the latter, we produce a probably-approximately correct (PAC) guarantee, giving a bound $\varepsilon$ on the risk of not achieving performance $\tilde{J}$ in unseen environments, with high user-specified confidence $1-\eta$.
% We also allow tuning of the trade-off between the performance guarantee and violation risk via discarding of some samples.
% % \dave{not yet mentioned: possibility to incorporate model knowledge and apply model-based optimisations}
% %
% An evaluation of our approach on a range benchmarks
% shows that we can generate highly performing and robust policies,
% plus probabilistic guarantees that quantify their performance.

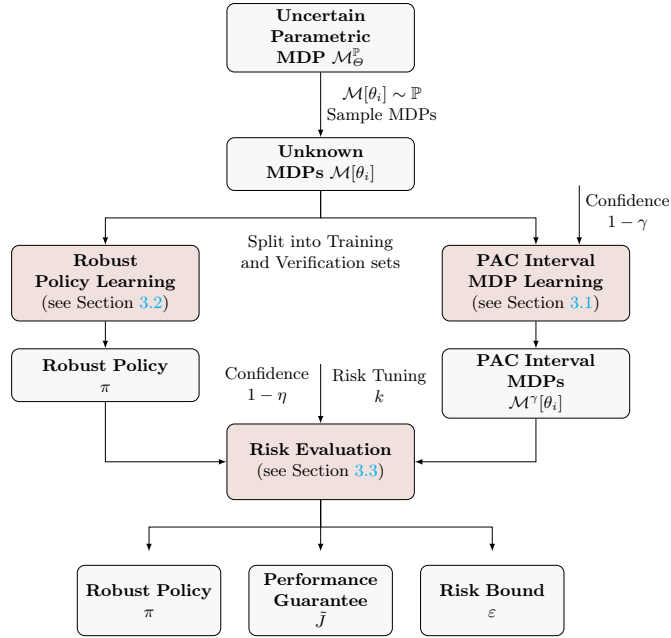
\begin{figure}[ht]
    \centering
    \usetikzlibrary{shapes,arrows,positioning}
\definecolor{lightred}{RGB}{241, 225, 222}

\scalebox{0.72}{
\begin{tikzpicture}[node distance=3cm, auto]

% Define block styles
\tikzstyle{block} = [rectangle, rounded corners, draw, fill=offwhite, text centered, minimum height=3em, inner sep= 3pt]
\tikzstyle{block2} = [rectangle, rounded corners, draw, fill=lightred, text centered, minimum height=4.3em, inner sep= 3pt]
\tikzstyle{line} = [draw, -latex]

% Place nodes
\node [block, text width=10em] (start) {\textbf{Uncertain Parametric\\MDP} $\mathcal{M}_{\Theta}^{\mathbb{P}}$};

\node [block, below= 1.2 and 0 of start, text width = 10em] (process1) {\textbf{Unknown\\MDPs} $\mathcal{M}[\theta_i]$}; %Partially

\node [block2, below right=1 and .5 of process1, text width = 10em] (process2) {\textbf{PAC Interval \\ MDP Learning}\\(see~Section~\ref{sec:paclearning})};

\node [block2, below left=1 and .5 of process1, text width = 10em] (process3) {\textbf{Robust\\Policy Learning} \\ (see~Section~\ref{sec:policylearning})};

\node [block, below= .5 and 0 of process3, text width = 10em] (policy) {\textbf{Robust Policy} \\$\pi$};

\node [block, below= .5 and 0 of process2, text width = 10em] (pacimdps) {\textbf{PAC Interval\\ MDPs}\\ $\mathcal{M}^{\gamma}[\theta_i]$};

\node [block2, below right =.4 and .5 of policy, text width = 10em] (riskeval) {\textbf{Risk Evaluation} \\ (see~Section~\ref{sec:riskeval})};

\node[below = 0.7 and 0 of process1, text width = 12em, text centered]{\small Split into Training\\and Verification sets};

% Draw edges
\path [line] (start) -- (process1) node[midway, right, align=center] {\footnotesize$\mathcal{M}[\theta_i] \sim \mathbb{P}$\\ \footnotesize Sample MDPs};
\path [line] (process1.south) -- ++(0,-.5) -| (process3.north);
\path [line] (process1.south) -- ++(0,-.5) -| (process2.north);

\path [line] (process2) -- (pacimdps) node[midway, right, align=center] {};

\path [line] (process3) -- (policy) node[midway, right, align=center] {};

\path [line] (policy.south) |- (riskeval.west);
\path [line] (pacimdps.south) |- (riskeval.east);

\path [line] ([yshift = 1.1cm, xshift=.8cm]process2.north) -- ([xshift = .8cm]process2.north) node[align = center, right, xshift = 0cm, yshift = 0.6cm]{\small Confidence\\\small $1-\gamma$};

\path [line] ([yshift = 1.1cm]riskeval.north) -- (riskeval.north) node[align = center, left, xshift = -.1cm, yshift = 0.7cm]{\small Confidence\\\small $1-\eta$} node[align = center, right, xshift = .1cm, yshift = 0.7cm]{\small Risk Tuning\\\small  $k$};

\node[below left = 1.2 and 0.1 of riskeval, text width = 7.5em, text centered, block, minimum height = 4em](robpol) {\textbf{Robust Policy}\\ $\pi$};

\node[below = 1.2 and 0 of riskeval, text width = 7.5em, text centered, block, minimum height = 4em] (perfgua) {\textbf{Performance\\ Guarantee}\\ $\tilde{J}$};

 \node[below right = 1.6 and -1.2 of riskeval, text width = 12em, text centered] (plus){\textbf{$ $}};

\node[below right = 1.2 and 0.1 of riskeval, text width = 7.5em, text centered,block, minimum height = 4em] (riskbound) {\textbf{Risk Bound}\\ $\varepsilon$};

\path [line] (riskeval.south) -- ++(0,-.5) -| ([yshift=.25cm]robpol.north);
\path [line] (riskeval.south) -- ++(0,-.5) -| ([yshift=.25cm]riskbound.north);
\path [line] (riskeval.south) -- ([yshift=.25cm]perfgua.north);

\end{tikzpicture}
}
    \caption{Overview of our framework to derive performance and risk guarantees for  policies learned on upMDPs. The setup includes two layers of uncertainty: we sample and analyse unknown environments from an unknown distribution.}
    \label{fig:procedure}
    \hfill
    \vspace{-14pt}
\end{figure}

To tackle the higher layer of uncertainty and infer a bound for the policy's robust performance over the entire unknown distribution $\mathbb{P}$ underlying the sample MDPs, we develop a new approach based on \emph{scenario optimisation}~\cite{DBLP:journals/siamjo/CampiG08,DBLP:journals/jota/CampiG11}.
% Scenario optimisation (or the \emph{scenario approach})
This takes samples of the performance $J$ and a user-specified performance bound $\tilde{J}$ and provides a PAC guarantee on the probability of the performance on a new sample being less than $\tilde{J}$, i.e., the \emph{risk}. However, in our setup we do not obtain samples of $J$ directly, but derive lower bounds from the learned PAC IMDPs, which only hold up to a certain confidence. Our key theoretical contribution, presented in Section~\ref{sec:riskeval}, is a generalisation of the scenario approach that can handle samples whose values are only known to lie in a confidence interval.

Our theoretical results combine the two layers of uncertainty: (1) the finite sampling of MDPs from the distribution $\mathbb{P}$, (2) the fact that sampled MDPs are unknown, so the performances of the learned policy are only inferable up to a certain confidence. The result is a single PAC guarantee on the policy's performance which holds with a high, user-specified confidence.

Furthermore, our framework allows tuning of the trade-off between performance guarantee and risk. By excluding the $k$ worst-case sample environments, users can discard unlikely outliers, resulting in a higher performance guarantee at the cost of an increased risk bound, adjustable to the level the user considers admissible.
We implement our framework as an extension of the PRISM model checker~\cite{DBLP:conf/cav/KwiatkowskaNP11} and show that it can tightly quantify the performance and associated risk of learned policies on a range of benchmarks.

%We remark that our results seamlessly carry over to other (Property-)SMC techniques~\cite{DBLP:conf/cav/AshokKW19,DBLP:journals/corr/meggendorfer24} for analysing the sample MDPs, which do not require finite MDPs or known graph structures. 

In summary, our contributions are:
(1) a novel framework and techniques for producing certifiably robust policies in uncertain parametric MDPs for which both the parameters and transition probability functions are unknown;
(2) new theoretical results which yield PAC guarantees on a policy's robust performance on unseen environments, where sample environments are unknown and can only be estimated from trajectories;
(3) an implementation and evaluation of the framework on a range of benchmarks.

% We summarise our contributions as follows:
%     (1) we present the first PAC guarantee for a policy's robust performance on unseen environments of an upMDP, where sample environments are unknown and can only be estimated from trajectories;
%     (2) we extend the scenario approach to uncertain samples with values only known in probability;
%     (3) we allow for a user-adjustable risk-performance trade-off to tune the resulting guarantees;
%     (4) we implement our approach inside the PRISM model checker and demonstrate that it tightly quantifies the performance and associated risk of learned policies on upMDPs. 

\ifthenelse{\isundefined{\techreport}}{%
An extended version of this paper, with full proofs of all results,
extended experiments and further technical details, can be found in \cite{arxiv}.
}{%
This paper is an extended version of \cite{SAP25},
including full proofs of all results,
extended experiments and further technical details.
}

\subsection{Related Work}

Epistemic uncertainty in MDPs has received broad attention across many areas, including formal methods, planning and reinforcement learning~\cite{badings2023decisionmaking}.
As mentioned above, there are various ways to model this using uncertain MDPs~\citep{DBLP:journals/ai/GivanLD00,DBLP:journals/mor/WiesemannKR13}.
There are also techniques such as robust dynamic programming to synthesise robust policies for these models~\cite{DBLP:journals/mor/Iyengar05,DBLP:journals/ior/NilimG05}
and approaches to learn the models from trajectory data~\cite{DBLP:conf/cav/AshokKW19,DBLP:journals/corr/meggendorfer24,DBLP:conf/icml/StrehlL05,DBLP:conf/nips/SuilenS0022}.
In this work, however, we investigate \emph{parametric} uncertainty sets with unknown distributions over parameter valuations.
%We exploit stochasticity to allow for a permissible risk of suboptimality in exchange for higher performance guarantees.

% \dave{other IMDP/scenario, e.g. \cite{BRA+23}?}

Uncertain parametric MDPs have emerged as a common model in meta reinforcement learning~\cite{DBLP:conf/nips/CollinsMS20,DBLP:conf/icml/FinnAL17,ghosh2021generalization,DBLP:conf/nips/GreenbergMCM23,DBLP:conf/nips/GuptaMLAL18} and gained attention in formal methods~\cite{DBLP:journals/sttt/BadingsCJJKT22,DBLP:journals/corr/Rickard23}. On the one hand, meta reinforcement learning trains policies on multiple unknown environments sampled from an upMDP, using policy gradient methods~\cite{DBLP:conf/nips/SuttonMSM99}, in order to generalise to unseen environments. However, to our knowledge, none of these algorithms provide theoretical generalisation guarantees, either on their average~\cite{DBLP:conf/icml/FinnAL17,DBLP:conf/nips/GuptaMLAL18} or their robust performance~\cite{DBLP:conf/nips/CollinsMS20,DBLP:conf/nips/GreenbergMCM23}. 

On the other hand, existing formal methods approaches to upMDPs do not offer the generality of meta RL setups. The work in~\cite{DBLP:journals/sttt/BadingsCJJKT22} uses scenario methods and provides PAC guarantees, but for the existence of a policy that achieves a certain performance, not robust policy synthesis; they also require full knowledge of sampled parameter valuations and environments. In \cite{DBLP:journals/corr/Rickard23}, concrete robust policies are synthesised with a PAC guarantee for performance on unseen environments, but this also relies on complete knowledge of all sampled valuations, reducing it to a special case of our approach.
In our work, we address the very general problem of unknown sample environments;
we target the scalability and generality of meta RL, while providing formal guarantees that are independent of the model size and previously unattainable for policy training methods like those in~\cite{DBLP:journals/corr/Rickard23}.

Also related is \cite{Costen_Rigter_Lacerda_Hawes_2023} which uses parametric MDPs in a Bayesian setting;
parameter valuations are unknown but the model's transition functions are known and assumed to be defined by polynomial expressions.
Other recent work in~\cite{DBLP:conf/birthday/ChiLT0J25} combines, like us, the scenario approach and parametric MDPs, but for a different setting that assumes uniform distributions over parameter spaces.
We also mention~\cite{DBLP:conf/cav/BadingsJJSV22}, which uses scenario optimisation with imprecise constraints to analyse continuous-time Markov chains with uncertain transition rates.

\section{Preliminaries}
\label{sec:prelim}
We review the key formalisms used in our approach. Let $\Delta(S) = \{\mu \colon S \to [0,1] \mid \sum_{s} \mu(s) = 1\}$ denote the set of all probability distributions over a finite set $S$.

\begin{definition}[Parametric MDP]
A \emph{parametric Markov decision process} (pMDP) is a tuple $M_{\Theta} = (S, s_I, A, P_{\Theta})$, where $S$ and $A$ are finite state and action spaces, $s_I \in \Delta(S)$ is the initial state distribution, and $P_{\Theta} \colon \Theta \times S \times A \to \Delta(S)$ is the parametric transition probability function over the parameter space $\Theta$. Fixing a valuation $\theta \in \Theta$ induces a standard MDP $\mathcal{M}_{\Theta}[\theta]$, or $\mathcal{M}[\theta]$ for short, with transition kernel $P_\theta \colon S \times A \to \Delta(S)$ defined as $P_\theta(s,a,s') = P_\Theta(\theta,s,a,s')$. \demo
\end{definition}

% \begin{figure}
%     \centering
%     \input{text/mdps}
%     \caption{pMDP, instantiated true but partially unknown MDP, and learned IMDP. TODO: unclutter and add sublabeling}
%     \label{fig:pdmps}
% \end{figure}

    % Draw the arrows between the subfigures
    % \begin{tikzpicture}[overlay, remember picture]
    %     % Arrow between the first and second subfigure
    %     \draw[-stealth, thick] ([yshift=1ex]current bounding box.west |- current bounding box.center) ++(4.2,0) -- ++(3,0) node[midway, above] {\footnotesize$(p,q,r) \sim \mathbb{P}$};

    %     % Arrow between the second and third subfigure
    %     \draw[-stealth, thick] ([yshift=1ex]current bounding box.west |- current bounding box.center) ++(11.5,0) -- ++(3,0) node[midway, above] {\scriptsize IMDP Learning};
    % \end{tikzpicture}

% \begin{example}
%    Figure~\ref{fig:updmpa} shows a simple 4-state pMDP with 3 parameters.
    % Furthermore, the drone-delivery problem in Figure~\ref{fig:drone} can be suitably modelled as a pMDP, where each state represents a discrete location in the 3-dimensional environment, and the outcome of chosen actions depends on parameters such as the wind.
% \end{example}
%
% \begin{remark}
    Parametric MDPs can be seen as an abstract model for a \emph{set} of MDPs, i.e., they represent the instantiations induced by all possible valuations $\theta \in \Theta$. They are closely related to the model class of uncertain MDPs~\citep{DBLP:journals/ior/NilimG05,DBLP:journals/mor/WiesemannKR13}, in which each transition is associated with (potentially interdependent) sets of possible values.
% \end{remark}

 % We are interested in learning policies and quantifying their performance in parametric models, where parameters are subject to change according to some probability distribution.

\begin{definition}[Uncertain Parametric MDP]
        An \emph{uncertain parametric Markov decision process} (upMDP) $\mathcal{M}_{\Theta}^{\mathbb{P}} = (\mathcal{M}_\Theta, \mathbb{P})$  is a pMDP $\mathcal{M}_\Theta$ with a (potentially unknown) probability measure $\mathbb{P}$ over the parameter space $\Theta$. \demo
\end{definition}

% Our procedure is based on finite sampling and does not require any knowledge of the underlying set of parameters $\Theta$, the concrete parametric structure of the pMDP, or the probability distribution $\mathbb{P}$. Our only assumption is that the graph structure underlying the upMDP is known.

We assume that upMDPs are \emph{graph preserving}, meaning that induced MDPs share a common topology: \(\forall s, s' \in S, a \in A \colon (\forall \theta \in \supp(\mathbb{P}) \colon P_\theta(s, a, s') = 0) \lor (\forall \theta \in \supp(\mathbb{P}) \colon P_\theta(s, a, s') > 0)\). Although not strictly required, this assumption can be crucial for efficiently solving learned IMDP approximations of the induced MDPs~\cite{DBLP:journals/corr/meggendorfer24,DBLP:journals/mor/WiesemannKR13}.
We describe in Section~\ref{sec:opt} how to lift this assumption by resorting to techniques which only approximate the performance and not the model.

% \begin{wrapfigure}{r}{0.5\textwidth}
%     \centering
%     \includegraphics[width=\textwidth]{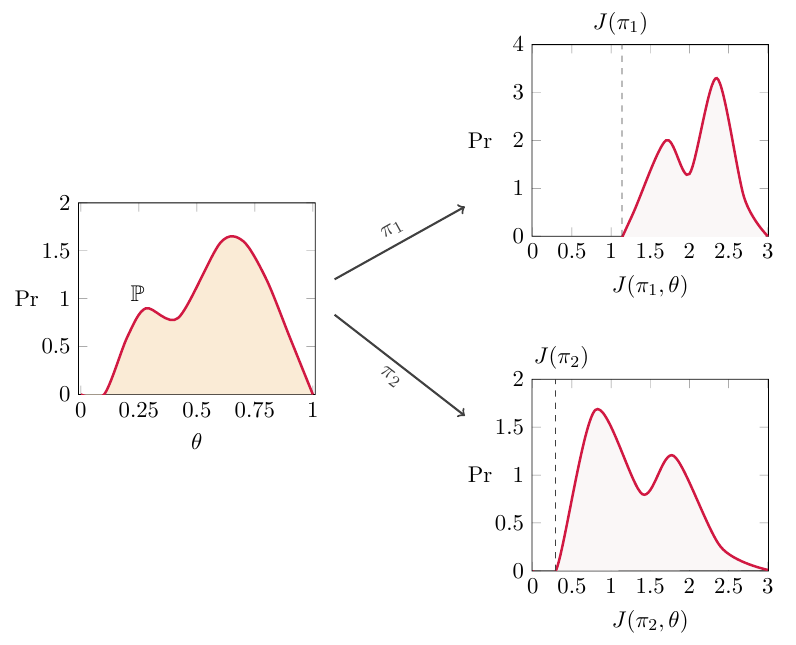}
% \caption{Probability measure $\theta \sim \mathbb{P}$ over parameters $\Theta$ (left) and value functions $V$ for fixed policies $\pi_1$ and $\pi_2$ as a random variables with measure $\mathbb{P}$ over the underlying parameters (right).}
% \label{fig:inducedValueFunc}
% \end{wrapfigure}

% \begin{figure}[h!]
%     \centering
%     \begin{minipage}{0.45\textwidth}
%         \centering
%         \includegraphics[width=0.87\textwidth]{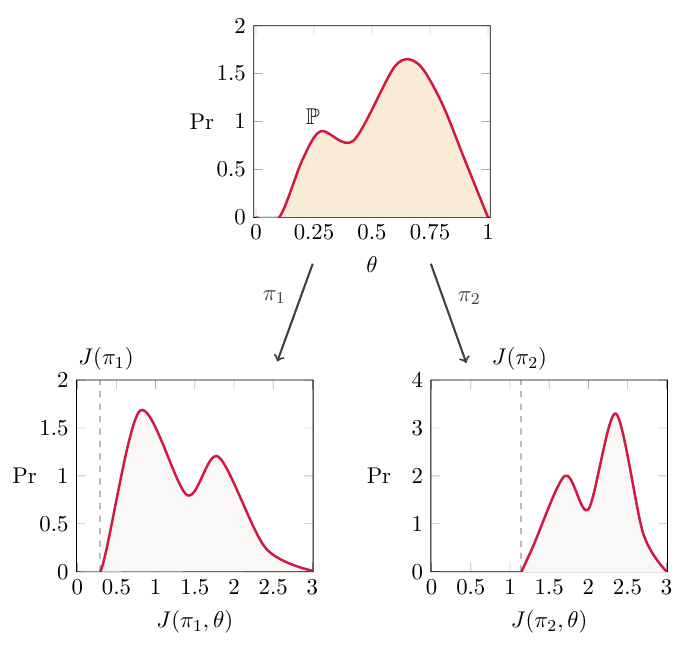} % replace with your image file
%         \caption{Probability measure $\theta \sim \mathbb{P}$ over parameters $\Theta$ and evaluation functions $J$ for fixed policies $\pi_1$ and $\pi_2$ as a random variables with measure $\mathbb{P}$ over the underlying parameters.}
%         \label{fig:inducedVar}
%     \end{minipage}
%     \hfill % Space between the figures
%     \begin{minipage}{0.45\textwidth}
%         \centering
%         \input{text/sample_risk_figure} % replace with your image file
%         \caption{This is the risk, i.e., that we are trying to bound. We infer the bound from samples.}
%         \label{fig:distriskplot}
%     \end{minipage}
% \end{figure}
% % \textcolor{red}{[can we better justify (perhaps `in practice') why this is `minor'?]}
% Knowledge of the graph structure is the minimal requirement on which we base our model-based approaches for policy learning and risk evaluation. However, we later comment on how this assumption can also be relaxed (see Section~\ref{sec:conc}).

Policies resolve action choices in MDPs and upMDPs. They are mappings \(\pi \colon (S \times A)^* \times S \to \Delta(A)\), assigning a distribution over actions based on (finite) histories of states and actions. In this work, we focus on synthesising memoryless policies \(\pi \colon S \to \Delta(A)\). While our theoretical results apply to arbitrary policy classes, learning and evaluating more expressive policies, such as those with finite memory, can be computationally expensive. We elaborate on this in Section~\ref{sec:opt}.

%We formalise the performance of a policy on instantiations of the upMDP as evaluation functions.
\begin{definition}[Evaluation Function]
For an upMDP \(\mathcal{M}_{\Theta}^{\mathds{P}} = (\mathcal{M}_{\Theta}, \mathds{P})\), an \emph{evaluation function} \(J \colon \Pi \times \Theta \to \mathbb{R}\) maps a policy \(\pi\) and a parameter valuation \(\theta\) to a performance value. We also denote by \(J(\pi, \mathcal{M})\) the evaluation of policy \(\pi\) on an arbitrary MDP \(\mathcal{M}\), such that \(J(\pi, \theta) = J(\pi, \mathcal{M}[\theta])\).
\end{definition}

%Our approach is agnostic to the type of the employed evaluation function. 
% For a policy $\pi$ resolving the nondeterministic choices in an MDP $\mathcal{M}$
Typical evaluation functions include the reachability probability ${\bm\Pr}^{\pi}_{\mathcal{M}}(\lozenge T)$ of eventually reaching a set of target states $T \subseteq S$, the reach-avoid probability ${\bm\Pr}^{\pi}_{\mathcal{M}}(\neg C \until T)$ of reaching states in $T$ while not entering a set of avoid states $C \subseteq S$, the expected reward $\mathbb{E}_{\mathcal{M}}^\pi(\lozenge T)$ accumulated before reaching a set $T$, given a reward structure, %~\cite{DBLP:conf/formats/AndovaHK03},
expected accumulated reward over finite or infinite time horizons, %~\cite{,DBLP:books/lib/SuttonB98},
or more sophisticated properties expressed in temporal logics such as LTL~\cite{DBLP:conf/focs/Pnueli77} or PCTL~\cite{DBLP:journals/fac/HanssonJ94}. Throughout this work's technical presentation, we assume that performance $J$ is maximised. By changing the directions of optimisation and inequalities, our results also directly apply to the dual minimisation case.

% Commonly used evaluation functions are, e.g., (un-)discounted expected reward~\citep{DBLP:books/lib/SuttonB98} or the probability of satisfying requirement over trajectories~\citep{DBLP:books/daglib/baierkatoen}. 
% \textcolor{red}{[so in practice, we do both optimisation of performance and of temporal requirements]}
For a fixed policy $\pi$ and upMDP $\upmdp$, the evaluation function $J$ only depends on the valuations $\theta$. Hence, $J$ is a random variable with measure $\mathbb{P}$ (see Figure~\ref{fig:inducedVal}). 
The \emph{violation risk} is the probability that policy $\pi$ achieves a value less than a stated performance guarantee $\tilde{J}$ on $\upmdp$ (see Figure~\ref{fig:violationrisk}).
\begin{definition}[Violation Risk]
    \label{def:risk}
    The \emph{violation risk} of policy $\pi$ for performance guarantee $\tilde{J} \in \mathbb{R}$, denoted by $r(\pi, \tilde{J})$, is defined as:
    \begin{equation}
        r(\pi, \tilde{J}) = \mathbb{P} \left\{ \theta \in \Theta \colon J(\pi , \theta) < \tilde{J} \right\}. 
    \end{equation}
\end{definition}
\noindent
There is an inherent trade-off between violation risk and performance guarantee: a higher guarantee is associated with a higher risk, regardless of $\upmdp$ and $J$. 

The framework we present in this paper is based on learning and solving approximations of MDPs in the form of \textit{interval MDPs}~\cite{DBLP:journals/ai/GivanLD00,DBLP:journals/mor/WiesemannKR13}.

% Our approach addresses two layers of uncertainty: the finite sampling of MDPs from $\upmdp$, and the incomplete knowledge of the samples. To tackle this, we construct and solve model approximations in the form of \textit{interval MDPs}~\cite{DBLP:journals/tcs/HaddadM18,DBLP:journals/mor/WiesemannKR13}.
% We analyze sample trajectories from these partially unknown MDPs to estimate the unknown parameters with intervals, resulting in \textit{interval MDPs}.

\begin{definition}[Interval MDP]
An interval Markov decision process (IMDP) \(\mathcal{M}^I = (S, s_I, A, P^{I})\) is an MDP with a probabilistic interval transition function \(P^I \colon S \times A \times S \to \mathbb{I}\), where \(\mathbb{I} = \{[a,b] \mid 0 < a \leq b \leq 1\} \cup \{[0,0]\}\) is the set of all graph-preserving intervals. We say that IMDP \(\mathcal{M}^I\) \emph{includes} MDP \(\mathcal{M}\), denoted by \(\mathcal{M} \in \mathcal{M}^I\), if $\mathcal{M}$ and $\mathcal{M}^I$ share the same state and action spaces, and \(P(s,a,s') \in P^I(s,a,s')\) for all \(s,s' \in S, a \in A\). \demo
\end{definition}

For IMDPs, we typically adopt a \emph{robust} view of a policy's performance, i.e., the \emph{worst-case} (minimum) value over any included MDP.
We lift the evaluation function \(J\) to an IMDP \(\mathcal{M}^I\) as follows:
\begin{equation}
    \label{eqn:imdpvalueineq}
    J(\pi, \mathcal{M}^I) = \min_{\mathcal{M} \in \mathcal{M}^I} J(\pi, \mathcal{M}) \implies J(\pi, \mathcal{M}^I) \leq J(\pi, \mathcal{M}) \mbox{ for all } \mathcal{M} \in \mathcal{M}^I.
\end{equation}
For key classes of properties used here (e.g., reachability probabilities, rewards),
this value can be obtained via robust dynamic programming~\cite{DBLP:journals/mor/Iyengar05,DBLP:journals/ior/NilimG05,DBLP:journals/mor/WiesemannKR13}.

\section{Learning Certifiably Robust Policies for upMDPs}

This section presents our framework for computing performance and risk guarantees for learned policies in uncertain parametric MDPs. An overview of this framework was illustrated in Figure~\ref{fig:procedure} and introduced in Section~\ref{sec:intro}. 
We assume a fixed upMDP \(\upmdp\) and access to a sample set 
\(\mathcal{D} = \{\mathcal{M}[\theta_i] \mid \theta_i \sim \mathbb{P}\}\) of unknown MDPs. This sample set is split into disjoint training and verification sets. The training set is used to compute a robust policy \(\pi\), which is then evaluated on the verification set to derive a performance guarantee \(\tilde{J}\) and bound the violation risk when the policy is deployed in an unseen environment sampled from \(\mathbb{P}\). 
The overall goal is formally stated as follows.

% which builds IMDP over-approximations of the unknown sample MDPs 

% for estimating a policy's performance in unknown MDPs sampled from the unknown distribution. It leverages the learning of PAC IMDP over-approximations of the sampled environments which contain the true MDP with high confidence. Section

% In this paper, we employ model SMC~\cite{DBLP:journals/corr/meggendorfer24} to approximate sampled unknown environments by analysing trajectories. Using these learned estimates, we evaluate a policy's performance and bound its risk when deployed in unseen environments.
% The overall problem can be stated as follows:
%Our approach addresses the following formal problem:
 
% Our procedure learns IMDP estimates of the sampled, unknown environments and uses them to construct robust policies. We leverage confidence in these estimates to provide a PAC guarantee on the violation risk of the learned policy on the underlying, unknown upMDP, solving the following formal problem:

\begin{myproblem}
    \label{prob:1}
Given a upMDP \(\upmdp\) with unknown parameter distribution $\mathbb{P}$, an evaluation function \(J\), and a confidence level \(\eta > 0\),
%. From a set of MDP samples \(\mathcal{D}\),
find a robust policy \(\pi\), a (tight) performance guarantee \(\tilde{J}\), and a (tight) risk bound \(\varepsilon > 0\), such that: %$\Pr \{ r(\pi, \tilde{J}) \leq \varepsilon \} \geq 1 - \eta.$
\[
    {\Pr}\left\{ r(\pi, \tilde{J}) \leq \varepsilon \right\} \geq 1 - \eta.
\]
\end{myproblem}
There exist trivial solutions to Problem~\ref{prob:1}, such as selecting a very high risk bound or a very low performance guarantee. We aim to identify a tight solution that maximizes the performance guarantee with a precise, low risk bound.

The core part of our framework is the means to establish these performance and risk bounds for policies evaluated in unknown environments.
From each unknown MDP in the verification set, we sample trajectories to learn an IMDP approximation that includes the MDP with high confidence. Solving these IMDPs yields probabilistic bounds on the policy's performance in each environment.

Our main theoretical results build upon scenario optimisation~\cite{DBLP:journals/siamjo/CampiG08,DBLP:journals/jota/CampiG11}, the principal challenge being to incorporate the additional layer of uncertainty introduced by only being able to estimate the policy's performance in each unknown sampled environment.
The result is a single PAC guarantee on the policy's performance in unseen environments, % from the upMDP,
stated in Problem~\ref{prob:1} above.

The process of establishing these guarantees is agnostic to the manner in which policies are produced.
We consider two approaches, first taking a novel combination of existing methods for IMDPs and upMDPs~\cite{DBLP:journals/corr/Rickard23,DBLP:conf/nips/SuilenS0022},
and secondly adopting a gradient-based technique from robust meta reinforcement learning~\cite{DBLP:conf/nips/GreenbergMCM23}. 

The remainder of the section is structured as follows.
Since PAC learning of IMDPs is used in multiple places, we discuss this first, in Section~\ref{sec:paclearning}.
Section~\ref{sec:policylearning} covers robust policy learning,
Section~\ref{sec:riskeval} presents our main theoretical contributions
and Section~\ref{sec:opt} describes several optimisations and extensions.

\vspace*{-1em}
\subsection{PAC IMDP Learning of Unknown MDPs}
\label{sec:paclearning}

We follow established approaches for PAC learning of IMDP approximations introduced in~\cite{DBLP:conf/cav/AshokKW19,DBLP:journals/corr/meggendorfer24,DBLP:conf/icml/StrehlL05,DBLP:conf/nips/SuilenS0022}. Consider an unknown MDP $\mathcal{M}[\theta_i]$. We assume access to trajectories from $\mathcal{M}[\theta_i]$, consisting of sequences of triples $(s, a, s')$ representing states, chosen actions and successor states. Leveraging the Markov property of MDPs, we treat each triple as an independent Bernoulli experiment to estimate the transition probability to state $s'$ from $s$ when choosing action $a$. We denote the number of times action $a$ was chosen in state $s$ across all sample trajectories as $\#(s,a)$, and the number of times this choice led to $s'$ as $\#(s,a,s')$. The resulting point estimate for the transition probability is thus given by:
\begin{equation}
    \label{eqn:pointest}
    \tilde{P}(s,a,s') = \frac{\#(s,a,s')}{\#(s,a)},
\end{equation}
for $\#(s,a) > 0$. We construct an IMDP by transforming the point estimates into PAC  confidence intervals~\cite{DBLP:books/daglib/boucheron13}. Traditionally, this is achieved with Hoeffding's inequality~\cite{Hoeffding1994,DBLP:conf/icml/StrehlL05}. Recent work has demonstrated that significantly tighter model approximations can be obtained by employing inequalities tailored to the binomial distribution, such as the Wilson score interval with continuity correction~\cite{DBLP:journals/corr/meggendorfer24,Wilson27}.

Let $1-\gamma$ with $\gamma > 0$ be the desired confidence level for $\mathcal{M}[\theta_i]$ to be included in the IMDP, which we denote $\mathcal{M}^\gamma[\theta_i]$. This confidence is distributed over all $n_u$ unknown transitions as $\gamma_P = \gamma / n_u$.
%, where is the number of unknown transitions in $\mathcal{M}[\theta_i]$. 
Let $H = \#(s,a)$ and $\tilde{p} = \tilde{P}(s,a,s')$. For each unknown transition, the transition probability interval is given by:
\begin{equation}
    \label{eqn:intervalest}
    P^{\gamma}(s,a,s') = [\max(\mu,  \underline{p}_{wcc}), \min(\overline{p}_{wcc}, 1)],
\end{equation}
with:
\begin{equation}
    \underline{p}_{wcc} = \left(2H\tilde{p} + z^2 - z\sqrt{z^2-\frac{1}{H}+4H\tilde{p}(1-\tilde{p}) + 4\tilde{p} - 2}-1\right)/ \,(2(H+z^2)),
\end{equation}
\begin{equation}
    \overline{p}_{wcc} = \left(2H\tilde{p} + z^2 + z\sqrt{z^2-\frac{1}{H}+4H\tilde{p}(1-\tilde{p}) - 4\tilde{p} + 2} + 1\right)/ \,(2(H+z^2)), 
\end{equation}
where $z$ is the $1- \frac{\gamma_P}{2}$ quantile of the standard normal distribution~\cite{DBLP:journals/corr/meggendorfer24} and $\mu > 0$ is an arbitrarily small quantity to preserve the known graph structure. For unvisited state action pairs with $\#(s,a) = 0$, we set $P^{\gamma}(s,a,s') = [\mu, 1]$, for all $s'$ in the known support. $P^{\gamma}(s,a,s')$ contains the true transition probability $P(s,a,s')$ with a confidence of at least $1-\gamma_P$. By applying a union bound over the unknown transitions, we obtain the following overall guarantee:

\begin{lemma}[\cite{DBLP:journals/corr/meggendorfer24}]
    The true, unknown MDP $\mathcal{M}[\theta_i]$ is contained in its IMDP overapproximation $\mathcal{M}^\gamma[\theta_i]$ with probability at least $1-\gamma$. \qed
    \label{lem:inclusion}
\end{lemma}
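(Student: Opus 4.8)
The plan is to reduce the containment statement for the whole IMDP to the per-transition coverage guarantee already supplied by the construction above, and then to combine the individual guarantees through a union bound. First I would unfold the definition of IMDP inclusion: $\mathcal{M}[\theta_i] \in \mathcal{M}^\gamma[\theta_i]$ holds if and only if the true transition probability lies inside its estimated interval for \emph{every} triple, that is, $P(s,a,s') \in P^\gamma(s,a,s')$ for all $s, s' \in S$ and $a \in A$. Consequently, the complementary event that the MDP is \emph{not} contained in the IMDP coincides with the event that at least one interval fails to cover its true probability. Writing $F_{s,a,s'}$ for the event $\{P(s,a,s') \notin P^\gamma(s,a,s')\}$, non-containment is exactly $\bigcup_{s,a,s'} F_{s,a,s'}$, the union ranging over the transitions carrying an estimated interval.

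Next I would invoke the per-transition guarantee given by the Wilson-score construction in \eqref{eqn:intervalest}, namely $\Pr(F_{s,a,s'}) \le \gamma_P = \gamma / n_u$ for each of the $n_u$ unknown transitions. The unvisited state--action pairs need only a trivial separate treatment: there the interval is set to $[\mu,1]$, and since the upMDP is graph preserving every true probability in the support is strictly positive, so choosing $\mu$ small enough makes $[\mu,1]$ contain the true value with certainty; likewise the $[0,0]$ intervals of absent transitions always cover the true value $0$. Such transitions therefore contribute $0$ to the failure probability and consume no confidence budget. Applying Boole's inequality then yields
\[
\Pr\!\left( \bigcup_{s,a,s'} F_{s,a,s'} \right) \;\le\; \sum_{s,a,s'} \Pr(F_{s,a,s'}) \;\le\; n_u \cdot \frac{\gamma}{n_u} \;=\; \gamma,
\]
and taking the complement gives $\Pr(\mathcal{M}[\theta_i] \in \mathcal{M}^\gamma[\theta_i]) \ge 1 - \gamma$, as claimed.

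The step carrying the real content is the per-transition coverage bound, which is inherited from the Wilson interval with continuity correction and is not re-derived here. The union bound is the only genuinely new combination step, and its key virtue, worth emphasising, is that it requires \emph{no independence} between the coverage-failure events across different transitions; this matters because, although distinct state--action pairs do yield independent Bernoulli samples, we prefer not to rely on such independence. The main subtlety to get right is thus purely bookkeeping: ensuring that exactly the $n_u$ statistically estimated transitions each receive the budget $\gamma/n_u$, while the deterministically handled unvisited and absent transitions are correctly excluded from, or contribute nothing to, the sum.
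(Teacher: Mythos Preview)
Your proposal is correct and follows exactly the approach the paper indicates: the paper itself does not spell out a proof but simply states that ``$P^{\gamma}(s,a,s')$ contains the true transition probability $P(s,a,s')$ with a confidence of at least $1-\gamma_P$'' and then invokes ``a union bound over the unknown transitions,'' citing \cite{DBLP:journals/corr/meggendorfer24} for the result. Your write-up is in fact more detailed than what the paper offers, particularly in the careful bookkeeping for unvisited and absent transitions, but the underlying argument is identical.
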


%This procedure estimates sampled environments from the unknown distribution with a desired level of confidence. 
The confidence in the approximation of each environment is independent of the number or length of the trajectories analysed. However, more or longer trajectories generally lead to higher state-action counts, resulting in tighter intervals. In Section~\ref{sec:exp}, we examine how the number of trajectories analysed influences the tightness of our performance guarantee and the associated risk.

\subsection{Robust Policy Learning}
\label{sec:policylearning}

We consider two distinct approaches to robust policy learning: \emph{robust IMDP policy learning} and \emph{robust meta reinforcement learning}.
For the former, we propose a combination of techniques for robust policy synthesis for upMDPs with access to fully known sample environments~\cite{DBLP:journals/corr/Rickard23} and IMDP learning for single unknown environments~\cite{DBLP:conf/nips/SuilenS0022}.
For the latter, we adopt a class of algorithms that optimise a policy's robust performance using policy gradient-methods~\cite{DBLP:conf/nips/CollinsMS20,DBLP:conf/nips/GreenbergMCM23}.

\startpara{Robust IMDP Policy Learning}
Similarly to PAC IMDP learning in Section~\ref{sec:paclearning}, we use sample trajectories to compute an IMDP overapproximation~\cite{DBLP:conf/nips/SuilenS0022} for each unknown MDP in the training set.
Then, like in~\cite{DBLP:journals/corr/Rickard23}, we combine models across the training set to perform policy synthesis. To obtain a policy that is robust across all samples, the learned IMDPs are combined by merging the transition intervals of each IMDP as $[a,b] \sqcup [c,d] = [\min(a,c), \max(b,d)]$. The resulting IMDP over-approximates all training MDPs, and the corresponding optimal deterministic policy considers the worst-case probability for each transition~\cite{DBLP:journals/mor/WiesemannKR13}. 
As the IMDPs for the training set are only used for policy synthesis and not for formal risk or performance analysis, we are not restricted to PAC IMDP learning. We can leverage a rich pool of interval learning algorithms, which, while lacking formal inclusion guarantees, provide empirically tighter intervals from fewer trajectories. A detailed overview and comparison of interval learning methods and their model approximation capabilities is conducted in~\cite{DBLP:conf/nips/SuilenS0022}. We evaluate the best-performing approaches in our benchmarks in Section~\ref{sec:exp}.

\startpara{Robust Meta Reinforcement Learning}
IMDP policies can be overly conservative, as they consider the worst-case scenario for each transition independently~\cite{DBLP:journals/corr/Rickard23}. Additionally, IMDP learning produces memoryless deterministic policies. While sufficient for optimality in IMDPs, there exist upMDPs where an optimal robust policy requires randomization or memory~\cite{DBLP:conf/fsttcs/RaskinS14}. 

\emph{Robust meta-reinforcement learning} (RoML) extends classical reinforcement learning from a single MDP to upMDPs~\cite{DBLP:conf/nips/CollinsMS20,DBLP:conf/nips/GreenbergMCM23}. RoML employs policy gradient methods~\cite{DBLP:conf/nips/SuttonMSM99} to optimize a policy's performance across sampled training environments, estimating performance from sampled trajectories in each unknown environment. Unlike standard meta-RL, which maximises expected rewards across environments~\cite{DBLP:journals/corr/Beck23} and often yields strong average but poor worst-case performance, RoML prioritises robustness. It trains a policy using RL techniques to optimise the performance in the worst-case environment (via the max-min objective)~\cite{DBLP:conf/nips/CollinsMS20} or in the $\alpha$-quantile of worst-case environments (via a risk-aware CVaR objective)~\cite{DBLP:conf/nips/GreenbergMCM23}. 
RoML differs from \emph{robust RL}~\cite{derman2019bayesian,wang2021online}, which focuses on a single uncertain MDP, seeking a policy that is robust to its internal uncertainty. In contrast, RoML addresses multiple MDPs sampled from an unknown distribution, aiming for a policy that generalizes to the full distribution by achieving a strong robust performance across the sampled MDPs. This aligns well with our setup, and we refer the reader to~\cite{DBLP:conf/nips/CollinsMS20,DBLP:conf/nips/GreenbergMCM23} for further details on this approach.

\subsection{Certifying Policy Performance and Risk in upMDPs}
\label{sec:riskeval}

We now present our main theoretical contributions for quantifying the performance and violation risk of a policy $\pi$ deployed in unseen environments of an upMDP. % $\upmdp$.
Specifically, we provide two results that derive PAC guarantees from lower bounds on $\pi$'s performance, which we obtain by building and analysing PAC IMDPs (see Section~\ref{sec:paclearning}) for the sampled environments that make up the verification set.

The first result extends the scenario approach~\cite{DBLP:journals/siamjo/CampiG08,DBLP:journals/jota/CampiG11} to incorporate additional uncertainty, as the lower bounds hold only with a certain confidence. This provides a PAC guarantee on the policy's performance, reasoning over the unknown distribution of true environments \(\mathbb{P}\), based solely on estimations from sampled unknown environments attained by IMDP learning.
The second result introduces flexibility in balancing the risk-performance trade-off. By extending a second result of the scenario approach, we allow the performance bound to be tuned by excluding worst-case outlier samples from the analysis, potentially leading to a higher performance guarantee at the cost of an increased risk bound.

Assume that the verification set comprises $N$ sampled MDPs,
from which we have learnt the PAC IMDPs $\{\mathcal{M}^\gamma[\theta_i]\}_{1 \leq i \leq N}$.
% , learned for the verification set of unknown MDPs sampled from $\mathbb{P}$.
This establishes probabilistic lower bounds on the policy $\pi$'s performance in the underlying unknown MDPs.
From Equation~\eqref{eqn:imdpvalueineq}, we have that $\mathcal{M}[\theta_i] \in \mathcal{M}^\gamma[\theta_i] \Rightarrow J(\pi, \mathcal{M}^\gamma[\theta_i]) \leq J(\pi, \mathcal{M}[\theta_i])$, where $J(\pi, \mathcal{M}^\gamma[\theta_i])$ can be obtained by standard solution methods for IMDPs, such as robust dynamic programming~\cite{DBLP:journals/mor/Iyengar05,DBLP:journals/ior/NilimG05}.
By Lemma~\ref{lem:inclusion}, it follows that:
\begin{equation}
    \label{eqn:probineq}
    \mathbb{P}\{J(\pi, \mathcal{M}^\gamma[\theta_i]) \leq J(\pi, \mathcal{M}[\theta_i])\} \geq 1 - \gamma.
\end{equation}
% We use the verification set from Section~\ref{sec:paclearning} to bound the violation risk of the learned policy $\pi$ when applied in an unseen environment. We analyse $\pi$ on the PAC IMDPs $\mathcal{V}_N^\gamma = \{\mathcal{M}^\gamma[\theta_i]\}_{1 \leq i \leq N}$ to bound the policy's performance on the underlying unknown MDPs. From Equation~\ref{eqn:imdpvalueineq}, we have $\mathcal{M}[\theta_i] \subseteq \mathcal{M}^\gamma[\theta_i] \Rightarrow J(\pi, \mathcal{M}^\gamma[\theta_i]) \leq J(\pi, \mathcal{M}[\theta_i])$. From Lemma~\ref{lem:inclusion}, it follows that
% \begin{equation}
%     \label{eqn:probineq}
%     \mathbb{P}\{J(\pi, \mathcal{M}^\gamma[\theta_i]) \leq J(\pi, \mathcal{M}[\theta_i])\} \geq 1 - \gamma.
% \end{equation}
To bound the violation risk, we formulate the problem as a convex optimisation problem with randomised constraints—a \textit{scenario program}~\citep{campi2018introduction}. We extend the generalisation theory of the \textit{scenario approach}~\cite{DBLP:journals/siamjo/CampiG08} to account for the uncertainty in the lower performance bounds (see Equation~\eqref{eqn:probineq}). The detailed formulation and derivation of our generalisation can be found in
\ifthenelse{\isundefined{\techreport}}{\cite{arxiv}}{Appendix~\ref{sec:appB}}.

% To obtain a bound on the violation risk, we formulate the problem as a convex optimisation problem with randomised constraints—a \textit{scenario program}~\citep{campi2018introduction}. We leverage the generalisation theory of the \textit{scenario approach}~\cite{DBLP:journals/siamjo/CampiG08}, adapting it to incorporate the uncertainty of the lower performance bounds (see Equation~\eqref{eqn:probineq}). We detail the formulation and the derivation of our generalisation in Appendix~\ref{sec:appB}.

% the generalisation from approximations of unknown samples to the entire unknown distribution in 

% performance values obtained on the learned IMDPs to under-approximate the performances on the underlying MDPs . 

\begin{figure}[t]
    \centering
    \begin{subfigure}[b]{0.48\textwidth}
        \centering
        \includegraphics[width=0.95\textwidth]{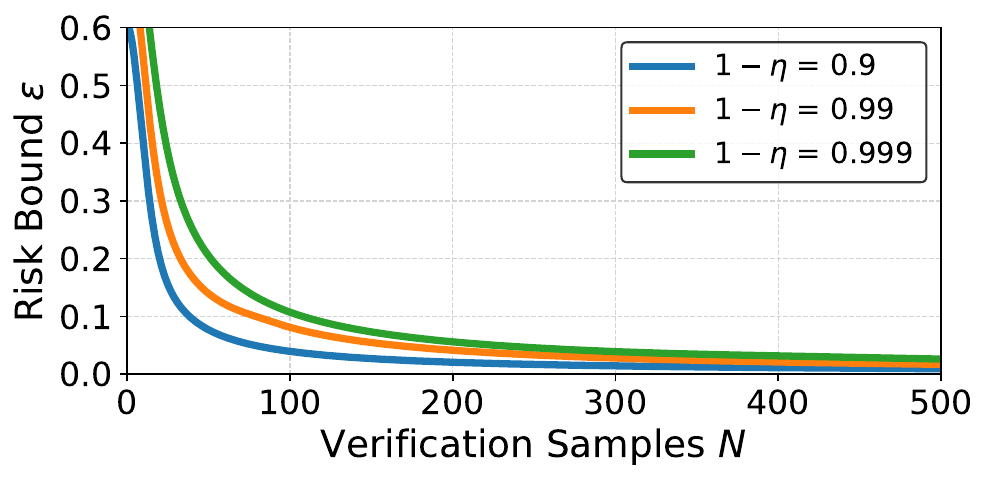}

        \label{fig:subfig1}
    \end{subfigure}
    \hfill
    \begin{subfigure}[b]{0.48\textwidth}
        \centering
        \includegraphics[width=0.95\textwidth]{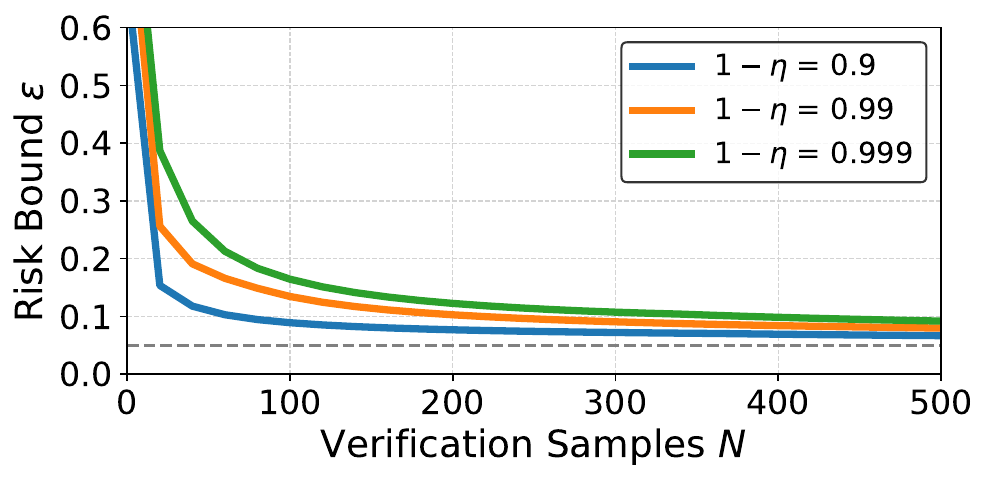}
        \label{fig:subfig2}
    \end{subfigure}
    \caption{Example risk bounds obtained from Theorem~\ref{thm:bound} (left) and Theorem~\ref{thm:bounddiscard} (right) for IMDP confidence $\gamma = 10^{-4}$. For Theorem~\ref{thm:bounddiscard}, 5\% of samples are discarded.}
    \label{fig:risknormalfull}
    \vspace{-5pt}
\end{figure}

\begin{theorem}
    \label{thm:bound}
    Given $N$ i.i.d.\ sample MDPs $\mathcal{M}[\theta_i]$ and IMDPs $\mathcal{M}^\gamma[\theta_i]$, such that $\mathbb{P}\{\mathcal{M}[\theta_i] \in \mathcal{M}^\gamma[\theta_i]\} \geq 1 - \gamma$. For any policy $\pi$ and confidence level $1 - \eta$, with $\eta > 0$, it holds that:
    \begin{equation}
        \mathbb{P}^N\left\{ r(\pi, \tilde{J}^{\gamma}) \leq \varepsilon(N, \gamma, \eta) \right\} \geq 1 - \eta,
    \end{equation}
    where $\tilde{J}^{\gamma} = \min_i J(\pi, \mathcal{M}^\gamma[\theta_i])$, and $\varepsilon(N, \gamma, \eta)$ is the solution to the following, for any $K \leq N$:
    \begin{equation}
        \sum_{i = K}^N \binom{N}{i}(1-\gamma)^i\gamma^{N-i} - (1 - \eta) = \sum_{i = 0}^{N-K} \binom{N}{i}\varepsilon^i(1-\varepsilon)^{N-i}. 
        \label{eqn:binom}
    \end{equation}
    % for any $M \leq N$.
\end{theorem}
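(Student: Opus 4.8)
The plan is to reduce the statement to the classical scenario approach by isolating the extra layer of uncertainty --- the possibility that a learned IMDP fails to include its underlying MDP --- as a collection of \emph{discarded} constraints, and then to combine a binomial bound on the number of such failures with the sampling-and-discarding bound of the scenario approach~\cite{DBLP:journals/jota/CampiG11}. Write $J_i = J(\pi, \mathcal{M}[\theta_i])$ for the unknown true performances and $\hat{J}_i = J(\pi, \mathcal{M}^\gamma[\theta_i])$ for the robust IMDP values, so that $\tilde{J}^\gamma = \min_i \hat{J}_i$. Call sample $i$ \emph{good} if $\hat{J}_i \le J_i$ and \emph{bad} otherwise; by \eqref{eqn:imdpvalueineq} goodness holds whenever $\mathcal{M}[\theta_i] \in \mathcal{M}^\gamma[\theta_i]$, so Lemma~\ref{lem:inclusion} gives a conditional probability of at least $1-\gamma$ of being good, uniformly over $\theta_i$. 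Since the trajectory sampling of distinct environments is independent, the count $|G|$ of good samples stochastically dominates a $\mathrm{Bin}(N, 1-\gamma)$ variable, so $\mathbb{P}\{|G| < K\} \le 1 - \sum_{i=K}^N \binom{N}{i}(1-\gamma)^i \gamma^{N-i}$ for every $K \le N$.

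The decisive monotonicity observation is that $\tilde{J}^\gamma = \min_i \hat{J}_i \le \hat{J}_j \le J_j$ for every good $j$, whence $\tilde{J}^\gamma \le \min_{j\in G} J_j$. Because the violation risk of Definition~\ref{def:risk} is non-decreasing in its threshold, this yields $r(\pi, \tilde{J}^\gamma) \le r(\pi, \min_{j\in G} J_j)$: the true risk of the computed conservative bound is majorised by the risk of the minimum taken over only the good samples. I would then run the scenario counting argument on this majorant. Writing $F$ for the CDF of $J$ under $\mathbb{P}$ and $q_\varepsilon$ for its $\varepsilon$-quantile, the event $r(\pi, \min_{j\in G} J_j) > \varepsilon$ is equivalent to $\min_{j\in G} J_j > q_\varepsilon$, i.e. $J_j > q_\varepsilon$ for \emph{every} good $j$; consequently every sample with $J_i \le q_\varepsilon$ must be bad.

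Letting $L = |\{i : J_i \le q_\varepsilon\}|$, which is $\mathrm{Bin}(N,\varepsilon)$-distributed (each sample independently lies below the quantile with probability $\varepsilon$), the above shows that on the event $\{r(\pi,\tilde{J}^\gamma) > \varepsilon\} \cap \{|G| \ge K\}$ one has $L \le |B| = N - |G| \le N - K$ realisation-wise. Hence $\mathbb{P}\{r(\pi,\tilde{J}^\gamma) > \varepsilon,\, |G| \ge K\} \le \mathbb{P}\{L \le N-K\} = \sum_{i=0}^{N-K}\binom{N}{i}\varepsilon^i(1-\varepsilon)^{N-i}$, which is exactly the right-hand side of \eqref{eqn:binom} --- the scalar-decision sampling-and-discarding bound with $N-K$ discarded constraints. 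Assembling the two pieces by a union bound gives $\mathbb{P}\{r(\pi,\tilde{J}^\gamma) > \varepsilon\} \le \mathbb{P}\{|G|<K\} + \mathbb{P}\{r>\varepsilon,\,|G|\ge K\} \le \bigl(1 - \sum_{i=K}^N \binom{N}{i}(1-\gamma)^i\gamma^{N-i}\bigr) + \sum_{i=0}^{N-K}\binom{N}{i}\varepsilon^i(1-\varepsilon)^{N-i}$. Choosing $\varepsilon$ so that this upper bound equals $\eta$ is precisely the defining identity \eqref{eqn:binom}, and since the derivation is valid for every $K \le N$ admitting a solution, the claimed $\mathbb{P}^N\{r(\pi,\tilde{J}^\gamma) \le \varepsilon\} \ge 1-\eta$ follows.

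The step I expect to be the main obstacle is the measure-theoretic bookkeeping underlying the first two paragraphs: making precise that goodness contributes a clean factor $1-\gamma$ \emph{uniformly} in $\theta_i$, so that the stochastic domination of $|G|$ and the quantile-based counting of $L$ decouple correctly and may be combined across the two independent sources of randomness (the draw $\theta_i \sim \mathbb{P}$ and the trajectory-based IMDP construction). A closely related technicality is the treatment of atoms of $F$, where the identity $\mathbb{P}\{J_i \le q_\varepsilon\} = \varepsilon$ can fail; this is the usual non-degeneracy caveat of the scenario approach, which I would handle either by assuming a non-degenerate performance distribution or by a tie-breaking perturbation ensuring that $L$ remains stochastically dominated by $\mathrm{Bin}(N,\varepsilon)$, so that the bound is preserved.
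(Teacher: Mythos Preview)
Your proposal is correct and follows essentially the same architecture as the paper's proof: a binomial tail bound on the number of ``good'' samples (those with $\hat J_i\le J_i$), the monotonicity step $\tilde J^\gamma\le\min_{j\in G}J_j$ and hence $r(\pi,\tilde J^\gamma)\le r(\pi,\min_{j\in G}J_j)$, a sampling-and-discarding bound on the latter, and a union bound to combine the two confidences. The paper packages the middle step through a sequence of one-dimensional scenario-program lemmas and then invokes the Campi--Garatti discarding theorem as a black box for the bound $\sum_{i=0}^{N-K}\binom{N}{i}\varepsilon^i(1-\varepsilon)^{N-i}$; you instead re-derive that bound directly via the quantile/counting argument $\{r>\varepsilon,\ |G|\ge K\}\subseteq\{L\le N-K\}$ with $L\sim\mathrm{Bin}(N,\varepsilon)$. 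This is the same proof in substance, with your presentation being more self-contained and avoiding the intermediate order statistic $x^*_{N,k}$, at the cost of needing the continuity/tie-breaking caveat you already flag (which the paper sidesteps by citing~\cite{DBLP:journals/jota/CampiG11}).
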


\begin{proofsketch}
The standard scenario approach in one dimension considers a set of i.i.d.\ performance samples $J_1, \dots, J_N$ and provides a bound on the probability that the next sampled performance is lower than the minimum. In our case, we only have lower bounds on the actual performances, with $\mathbb{P}\{J^{\gamma}_i \leq J_i\} \geq 1 - \gamma$. Assuming all lower bounds are valid with probability $(1-\gamma)^N$, we obtain an under-approximation of the true solution to the scenario program, and the union bound combines the confidences of the scenario approach and the validity of the lower bounds.
However, this confidence becomes very small for large sample sizes $N$, even when $1-\gamma$ is close to $1$. Conversely, with small sample sizes, the overall confidence remains low due to the weaker scenario confidence. To mitigate this, we require only $K$ of the $N$ lower bounds to be valid, which holds with high probability for small values of $\gamma$, even when $K$ is close to $N$. As a result, we can exclude a small number $N-K$ of samples, assuming them to be violated under the scenario approach, thereby soundly over-approximating the risk. This only marginally increases the stated risk bound while significantly reducing the confidence overhead. Since we cannot specify which bounds are valid, we use the minimum over all lower performance bounds as an under-approximation of the solution to all scenario sub-programs with $N-K$ discarded constraints, providing a sound performance guarantee.
The complete proof, including detailed bounds and derived inequalities,
can be found in
\ifthenelse{\isundefined{\techreport}}{\cite{arxiv}}{Appendix~\ref{sec:appB}}.
\end{proofsketch}

Theorem~\ref{thm:bound} bounds the risk for a policy to achieve a performance less than the minimum performance on any of the IMDPs. This bound only depends on values we can observe from the learned IMDP approximations. The theorem includes a tuning parameter $K \leq N$. The bound is valid for any value of $K$, and to obtain the tightest bound, we enumerate possible values and solve the resulting equation. For a fixed $K$, the left-hand side of Equation~\eqref{eqn:binom} is constant, and the right-hand side is the cumulative distribution function of a beta distribution with $K + 1$ and $N - K$ degrees of freedom~\citep{campi2018introduction}, which is easy to solve numerically for its unique solution in the interval $[0,1]$ using bisection~\cite{DBLP:journals/sttt/BadingsCJJKT22,ross2014introduction}. To the best of our knowledge, this is the first result to establish PAC guarantees on policy performance in unseen environments of upMDPs, in a setting where sample environments are unknown and can only be estimated from trajectories. Figure~\ref{fig:risknormalfull} illustrates the resulting risk bounds for example values. We assess the quality and tightness of our performance and risk bounds in the benchmarks presented in Section~\ref{sec:exp}.
% \begin{figure}[t]
%     \centering
%     \begin{minipage}[t]{0.48\textwidth}
%         \centering
%         \includegraphics[width=0.8\textwidth]{Figures/risks/fig_risk_gamma=0.0001.pdf}
%         \caption{Tightest risk bounds $\varepsilon$ from Theorem~\ref{thm:bound} obtained for $N$ verification samples. With IMDP confidence $\gamma = 10^{-4}$, against common values for $\eta$.}
%         \label{fig:riskcurves}
%     \end{minipage}
%     \hfill
%     \begin{minipage}[t]{0.48\textwidth}
%         \centering
%         \includegraphics[width=0.78\textwidth]{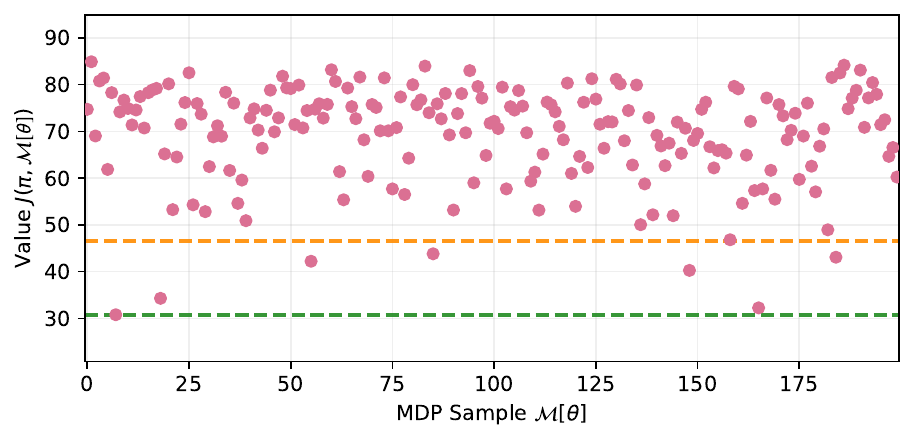}
%         \caption{Verification set of size $N = 200$ with performance guarantees $\tilde{J} = 30.4$ with risk bound $\varepsilon = 0.04$ and empirical risk $\hat{\varepsilon} = 0.006$, and $\tilde{J}_7 = 46.7$ with $\varepsilon = 0.09$ and $\hat{\varepsilon} = 0.04$.}
%         \label{fig:riskdiscard}
%     \end{minipage}
% \end{figure}
% Figure~\ref{fig:riskcurves} shows the obtained risk bounds for various combinations of confidences and verification set sizes. Appendix~\ref{sec:apprisk} extends this with an empirical analysis of the bound tightness. 

We extend Theorem~\ref{thm:bound} to allow tuning of the risk-performance trade-off by \textit{discarding} samples~\citep{DBLP:journals/jota/CampiG11}. Instead of bounding the risk for the policy to achieve a performance less than the minimum, we state a bound for the $k$th order statistic of the verification set. Users can choose $k$ for a permissible risk level and a potentially higher performance guarantee, avoiding constraints from samples in the unlikely tail of the distribution.

\begin{definition}[Order Statistic]
    For a set of $N$ samples $J_1,\dots,J_N \in \mathbb{R}$ and $0 \leq k < N$, the $k$th order statistic $\tilde{J}_{(k)}$ is the $k$th smallest element when arranging all samples from smallest to largest. \demo
\end{definition}

\begin{theorem}
    \label{thm:bounddiscard}
    Given $N$ i.i.d.\ sample MDPs $\mathcal{M}[\theta_i]$ and IMDPs $\mathcal{M}^\gamma[\theta_i]$, such that $\mathbb{P}\{\mathcal{M}[\theta_i] \in \mathcal{M}^\gamma[\theta_i]\} \geq 1 - \gamma$, for any policy $\pi$, confidence level $1 - \eta$ with $\eta > 0$, and number $k$ of discarded samples, it holds that
    \begin{equation}
        \mathbb{P}^N\left\{ r(\pi, \tilde{J}^{\gamma}_{(k)}) \leq \varepsilon_{(k)}(N, \gamma, \eta) \right\} \geq 1 - \eta,
    \end{equation}
    where $\tilde{J}^{\gamma}_{(k)}$ is the $k$th order statistic of the performances $J(\pi, \mathcal{M}^\gamma[\theta_i])$, and $\varepsilon_{(k)}(N, \gamma, \eta)$ is the solution to the following, for any $K \leq N - k$:
    \begin{equation}
        \sum_{i = K}^{N-k} \binom{N-k}{i}(1-\gamma)^i\gamma^{N-k-i} - (1 - \eta) = \sum_{i = 0}^{N-K} \binom{N}{i}\varepsilon^i(1-\varepsilon)^{N-i}.
        \label{eqn:binomdiscard} 
    \end{equation}
\end{theorem}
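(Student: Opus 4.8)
The plan is to follow the proof of Theorem~\ref{thm:bound} and replace the basic scenario approach by the \emph{sampling-and-discarding} variant of~\cite{DBLP:journals/jota/CampiG11}, absorbing the user's $k$ discards into the analysis. Write $J_i = J(\pi,\mathcal{M}[\theta_i])$ for the unknown true performances and $J_i^\gamma = J(\pi,\mathcal{M}^\gamma[\theta_i])$ for the learned lower bounds; by~\eqref{eqn:imdpvalueineq} and Lemma~\ref{lem:inclusion} each bound is \emph{valid}, meaning $J_i^\gamma \leq J_i$, with probability at least $1-\gamma$ (cf.~\eqref{eqn:probineq}), independently across the i.i.d.\ samples. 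Since $r(\pi,\cdot)$ is the monotone nondecreasing map $t\mapsto\mathbb{P}\{J(\pi,\theta)<t\}$, it suffices to dominate $\tilde{J}^\gamma_{(k)}$ by an order statistic of the true performances and to control the latter's violation probability through the classical order-statistic tail.

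First I would establish a deterministic domination. Setting $\hat{J}_i = J_i$ when the $i$-th bound is valid and $\hat{J}_i = +\infty$ otherwise gives $J_i^\gamma \leq \hat{J}_i$ for every $i$; as order statistics preserve pointwise domination, the tuned guarantee $\tilde{J}^\gamma_{(k)} = J^\gamma_{(k+1)}$ satisfies $\tilde{J}^\gamma_{(k)} \leq \hat{J}_{(k+1)}$, i.e.\ it lies below the $(k{+}1)$-th smallest performance among the \emph{valid} samples. By monotonicity, the event $r(\pi,\tilde{J}^\gamma_{(k)}) > \varepsilon$ then forces fewer than $k+1$ valid samples to have true violation risk at most $\varepsilon$, which is the combinatorial event I would bound.

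The two layers are then separated by a union bound over the auxiliary threshold $K \leq N-k$. The scenario-discarding result of~\cite{DBLP:journals/jota/CampiG11}, applied to the $N$ i.i.d.\ true performances with $N-K$ constraints removed (the $k$ dropped by the user together with at most $N-k-K$ further samples whose bounds we pessimistically treat as violated), bounds the probability that the surviving order statistic $J_{(N-K+1)}$ has risk exceeding $\varepsilon$ by $\sum_{i=0}^{N-K}\binom{N}{i}\varepsilon^i(1-\varepsilon)^{N-i}$, the right-hand side of~\eqref{eqn:binomdiscard}. The complementary event, that fewer than $K$ of the $N-k$ retained samples carry valid bounds, should have probability $1-\sum_{i=K}^{N-k}\binom{N-k}{i}(1-\gamma)^i\gamma^{N-k-i}$. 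Combining the two failure probabilities and equating their sum to $\eta$ then reproduces~\eqref{eqn:binomdiscard}; since this holds for every admissible $K$, minimising over $K$ gives the tightest $\varepsilon_{(k)}$, and the case $k=0$ collapses to Theorem~\ref{thm:bound}.

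I expect the principal difficulty to be the confidence bookkeeping that yields the sharp count $N-k$ (rather than $N$) in the validity term. Because the discarded samples are chosen from the \emph{observed} lower bounds, the set of retained samples is data-dependent and correlated with which bounds happen to be valid, so a direct union bound only delivers the weaker count over all $N$ samples with a shifted threshold. Decoupling the user's discards from the confidence discards, so that precisely $N-k$ independent validity trials survive in the manner of~\eqref{eqn:binomdiscard}, is the delicate step where the extension of~\cite{DBLP:journals/jota/CampiG11} to confidence-limited samples does the real work. The remaining, more routine care concerns ties and non-continuous $r(\pi,\cdot)$, where the order-statistic identities hold only as the inequalities we need.
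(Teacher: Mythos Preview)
Your proposal is correct and takes essentially the same route as the paper: both cast the problem as a one-dimensional scenario program, argue that on a good event (at least $K$ of the $N-k$ retained samples carry valid lower bounds) the observed $k$-th order statistic is dominated by the true $(N-K)$-discarded optimum, and then combine the sampling-and-discarding bound of~\cite{DBLP:journals/jota/CampiG11} with the binomial validity probability via a union bound, equating the total slack to~$\eta$. On the data-dependence concern you flag for the $N-k$ count, the paper does not explicitly decouple the user's discards from the validity indicators either---it simply asserts the binomial lower bound $\sum_{i=K}^{N-k}\binom{N-k}{i}(1-\gamma)^i\gamma^{N-k-i}$ for the retained set $[N]\setminus L$ and proceeds---so your bookkeeping is no less rigorous than the paper's on this point.
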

\vspace{-12pt}
\qed
\vspace{4pt}

When $k = 0$ and no samples are discarded, Theorem~\ref{thm:bounddiscard} specialises to Theorem~\ref{thm:bound}. The proof is an extension incorporating the additional uncertainty of the lower bounds obtained from the PAC IMDPs into the \emph{sampling-and-discarding} theorem from scenario optimisation~\cite{DBLP:journals/jota/CampiG11}. We detail the derivation of the bound in
\ifthenelse{\isundefined{\techreport}}{\cite{arxiv}}{Appendix~\ref{sec:appB}}
and analyse its tightness in the experiments in Section~\ref{sec:exp}.

\vspace*{-0.8em}
\subsection{Optimisations and Extensions}
\label{sec:opt}
\vspace*{-0.4em}
Finally in this section, we present some optimisations and extensions for our approach.
First we show that, if there is additional knowledge as to the parametric structure of the upMDP,
we can leverage this to obtain tighter approximations of the sample environments.
Conversely, we describe how to seamlessly apply our framework and results %for uncertainty quantification
to setups with \emph{less} model knowledge, i.e., where not even the graph structure nor the (possibly infinite) state space is known. Furthermore, we outline how our results apply to more general setups where 
parameters influence not only transition probabilities, but also the evaluation functions, i.e., the specifications or \emph{tasks} may vary across samples, aligning it with the setup commonly considered in meta reinforcement learning~\cite{DBLP:conf/nips/CollinsMS20,DBLP:conf/nips/GreenbergMCM23}.

\startpara{Model-based Optimisations}
IMDP learning as described in Section~\ref{sec:paclearning} requires no knowledge of an MDP beyond its graph structure. However, additional information about the environment can yield tighter approximations with fewer samples. In cases where certain parameters, like temperature or air pressure, and their effect on some transition probabilities are known exactly, those transitions can be treated as singleton intervals. This reduces the need for approximation and decreases the number of learned transitions $n_u$.
 
Additionally, we can apply \emph{parameter tying}~\citep{DBLP:conf/qest/PolgreenWHA16,DBLP:conf/icml/PoupartVHR06} to parameters appearing across different transitions. For instance, consider two transitions sharing the same parameterisation, $P_{\Theta}(s,a,s') = P_{\Theta}(t,b,t')$. We can combine the counts from both transitions since they represent the same Bernoulli experiment. Let $sim(s,a,s') = \{(t,b,t') \mid P_{\Theta}(s,a,s') = P_{\Theta}(t,b,t')\}$ denote the set of transitions with identical parametrisation. By plugging the combined counts, 
$\#^T(s,a,s') = \sum_{(t,b,t') \in sim(s,a,s')} \#(t,b,t')$ and $H^T(s,a,s') = \sum_{(t,b,t') \in sim(s,a,s')} \#(t,b)$ into Equations~\eqref{eqn:pointest} and~\eqref{eqn:intervalest}, we can obtain a tighter interval for both transitions.

Our experiments in Section~\ref{sec:exp} use model-based optimisations and the full evaluation in
\ifthenelse{\isundefined{\techreport}}{\cite{arxiv}}{Appendix~\ref{sec:appeval}}
compares the results with and without optimisations.
 
\startpara{Statistical Model Checking}
To approximate the performance of a learned policy in unknown sample environments, our framework is not limited to PAC IMDP learning. Various forms of statistical model checking (SMC)~\cite{DBLP:journals/tomacs/AghaP18,DBLP:conf/vmcai/HeraultLMP04,DBLP:conf/rv/LegayDB10} can be applied, as long as they provide a lower bound $J^{\gamma}_i$ on a policy’s performance in a single environment induced by parameters $\theta_i$, with a formally quantified confidence $\Pr\left\{J(\pi, \theta_i) \geq J^{\gamma}_i \right\} \geq 1 - \gamma$. SMC techniques that require less information than PAC IMDP learning include those that rely on the minimum probability $p_{min}$ potentially present in the model to infer the MDP’s end-components with the desired confidence~\cite{DBLP:conf/cav/AshokKW19,DBLP:conf/tacas/DacaHKP16,DBLP:journals/corr/meggendorfer24}, or those that operate in a fully black-box setting with no model knowledge, approximating only the performance value. However, the latter techniques are typically restricted to finite-horizon properties~\cite{DBLP:conf/cav/SenVA04,DBLP:conf/cav/YounesS02}. 

% Note that the presented results are not restricted to the probabilistic lower bounds on the policy's performance being obtained via PAC IMDP learning. Any SMC or statistical technique that produces probabilistic lower performance bounds  for each sample valuation $\theta_i$ of the form  can be applied. The performance and risk bounds are derived based on these lower bounds, accounting for the additional uncertainty they introduce. \ys{again decide on what to do with this}

\startpara{Uncertain Specifications or Tasks}
The meta reinforcement learning literature usually considers upMDPs where both transition probabilities and reward structure depend on parameters~\cite{DBLP:conf/icml/FinnAL17,DBLP:conf/nips/GreenbergMCM23,DBLP:conf/nips/GuptaMLAL18}. 
%We identified that 
Our framework encompasses this problem class, and our results carry across directly. Parameterised rewards or specifications can be integrated into the evaluation function $J$, allowing parameters to affect both transitions and rewards. While the formal methods community has explored uncertain rewards and specifications to a lesser extent~\cite{DBLP:conf/qest/0001HL19,DBLP:conf/valuetools/Scheftelowitsch17,DBLP:journals/iiset/SteimleKD21}, we believe this is a promising direction for future work, particularly in extending PAC guarantees to uncertain specifications and objectives. 

\startpara{Non-memoryless Policies}
In this work, we focus on synthesis of \emph{memoryless} policies, which are sufficient for optimal performance under many common performance functions, such as reachability and reach-avoid specifications, in both single MDPs and IMDPs. However, for multiple environments or more complex specifications, such as general linear-time (LTL) properties, an optimal robust policy may require memory~\cite{DBLP:journals/ior/NilimG05,DBLP:conf/fsttcs/RaskinS14,DBLP:conf/cdc/WolffTM12}. Our theoretical results in Theorems~\ref{thm:bound} and~\ref{thm:bounddiscard} apply to arbitrary policy classes, provided that policy performance can be evaluated on the learned IMDP approximations to obtain sample performance values.
But solution of IMDPs in these cases presents challenges.
For example, the standard automaton product constructions can be used to find finite-memory policies that optimise LTL specifications~\cite{DBLP:conf/cdc/WolffTM12} but, in addition to increased computational cost, the addition of memory means moving from a static to a dynamic uncertainty model~\cite{DBLP:conf/birthday/SuilenBB0025,DBLP:conf/cdc/WolffTM12}, yielding overly conservative performance bounds.

\vspace*{-0.7em}
\section{Experimental Evaluation}
\vspace*{-0.5em}
\label{sec:exp}
We implemented our framework as an extension of the PRISM model checker~\cite{DBLP:conf/cav/KwiatkowskaNP11}, which provides trajectory generation and (robust) value iteration for MDPs and IMDPs\footnote{The implementation is available at: \url{https://doi.org/10.5281/zenodo.14717176}.}.
We have evaluated our approach on a range of established benchmark environments used in similar work:
an Aircraft Collision Avoidance~\citep{kochenderfer2015}, a Chain Problem~\citep{DBLP:conf/ewrl/Araya-LopezBTC11}, a Betting Game~\citep{DBLP:journals/mmor/BauerleO11}, a Semi-Autonomous Vehicle~\citep{DBLP:conf/tacas/Junges0DTK16,Stckler2015NimbRoES}, the Firewire protocol~\cite{DBLP:conf/tacas/HartmannsKPQR19}, and the previously mentioned UAV~\citep{DBLP:journals/sttt/BadingsCJJKT22}. Table~\ref{tab:stats} shows the salient features of  the environments. We provide detailed descriptions of each benchmark, including the parameters and their distributions $\mathbb{P}$ in 
\ifthenelse{\isundefined{\techreport}}{\cite{arxiv}}{Appendix~\ref{sec:casestudies}}.

\begin{table}[t]
\centering
\caption{Salient characteristics of the evaluated benchmarks.} % Caption moved above the table
\resizebox{0.93\textwidth}{!}{%
\begin{tabular}{>{\centering\arraybackslash}m{3.5cm} >{\centering\arraybackslash}m{2.5cm} >{\centering\arraybackslash}m{1.2cm} >{\centering\arraybackslash}m{2.2cm} >{\centering\arraybackslash}m{2.0cm} >{\centering\arraybackslash}m{2.0cm}}
\toprule
\textbf{Benchmark} & \textbf{Evaluation $J$} & \textbf{Opt.} & \textbf{\#Parameters} & \textbf{\#States} & \textbf{\#Transitions} \\
\midrule
UAV~\cite{DBLP:journals/sttt/BadingsCJJKT22} & ${\bm\Pr}(\neg C \until T)$ & max& 15 & 4096 & 86912 \\
Aircraft Collision~\citep{kochenderfer2015} & ${\bm\Pr}(\neg C \until T)$ & max & 2 & 303 & 3468  \\
Firewire~\cite{DBLP:conf/tacas/HartmannsKPQR19} & ${\bm\Pr}(\lozenge T)$ & min &1 & 80980 & 112990  \\
Semi-Auton.\ Vehicle~\citep{DBLP:conf/tacas/Junges0DTK16} & ${\bm\Pr}(\lozenge T)$ & max & 2 & 411 & 1503  \\
 Betting Game~\citep{DBLP:journals/mmor/BauerleO11} & $\mathbb{E}(\lozenge T)$ & max & 1 &  480 & 2730 \\
Chain~\citep{DBLP:conf/ewrl/Araya-LopezBTC11} & $\mathbb{E}(\lozenge T)$  & min & 1 & 7 & 42  \\
\bottomrule
\end{tabular}
} % end of resizebox
\label{tab:stats}
\vspace{-7pt}
\end{table}

\startpara{Setup}
Since our approach is the first to provide policy performance guarantees
under two layers of uncertainty, i.e., unknown sample environments from an unknown distribution,
our experiments focus on assessing the quality of these guarantees.
We study: (1) the tightness of the performance level $\tilde{J}$, assessing how closely our stated robust performance guarantee derived from the learned PAC IMDPs aligns with the actual robust performance on the true underlying sample MDPs that are hidden from the algorithm; (2) the quality of the risk bound $\varepsilon$ derived from Theorems~\ref{thm:bound} and~\ref{thm:bounddiscard} with respect to the true violation risk $r(\pi, \tilde{J})$.

Our approach includes two sampling dimensions corresponding to the two layers of uncertainty: (1) the number of unknown MDPs induced by parameter valuations sampled from the distribution $\mathbb{P}$; (2) the number of sample trajectories generated in each unknown MDP. For the first dimension we consider a total of $600$ sample MDPs, which we divide equally into training and verification sets. For the Firewire benchmark, we consider $150$ verification samples. The second dimension is evaluated for up to $10^6$ trajectories in each sampled environment.

%\ys{mention that we only use $N = 150$ verification samples for Firewire}

For policy learning, we consider the two methods described in Section~\ref{sec:policylearning}: robust IMDP policy learning and robust meta reinforcement learning with the max-min objective, implemented using the \emph{Gymnasium} Python framework~\cite{DBLP:journals/corr/gym}.
This illustrates the applicability of our approach to distinct state-of-the-art policy learning algorithms. We focus here on guarantees, rather than comparing policy learning methods, but we include statistics for both in
\ifthenelse{\isundefined{\techreport}}{\cite{arxiv}}{Appendix~\ref{sec:appeval}}
and refer the reader to, e.g.,~\cite{DBLP:conf/nips/CollinsMS20,DBLP:conf/nips/GreenbergMCM23,DBLP:conf/nips/SuilenS0022} for an in-depth comparison of methods.

%We apply PAC IMDP learning to the verification set, leveraging the PRISM IMDP engine to evaluate the learned robust policy on the resulting IMDPs.

For producing guarantees, we set the inclusion confidence level for the PAC IMDPs learned on the verification set to $\gamma = 10^{-4}$ and the overall confidence when applying Theorems~\ref{thm:bound} and~\ref{thm:bounddiscard} to $\eta = 10^{-2}$.
Optimisations from Section~\ref{sec:opt} are applied (see
\ifthenelse{\isundefined{\techreport}}{\cite{arxiv}}{Appendix~\ref{sec:appeval}}
for results of their impact).
All experiments were conducted on a 3.2 GHz Apple M1 Pro CPU with 10 cores and 16 GB of memory. 

\begin{table}[t]
\centering
\caption{Resulting performances, guarantees and risk bounds.} % Caption moved above the table
\renewcommand{\arraystretch}{1.1} % Increase row spacing
\resizebox{\textwidth}{!}{%
\begin{tabular}{>{\centering\arraybackslash}m{2.9cm} >{\centering\arraybackslash}m{2.1cm} >{\centering\arraybackslash}m{2cm} >{\centering\arraybackslash}m{2cm} >{\centering\arraybackslash}m{2.1cm} >{\centering\arraybackslash}m{2.2cm} >{\centering\arraybackslash}m{2.2cm} >{\centering\arraybackslash}m{2.0cm}}
\toprule
\small{\textbf{Benchmark}} & \small{\textbf{Performance $J$}} & \small{\textbf{Guarantee \newline \centering $\tilde{J}$}} & \small{\textbf{Risk Bound $\varepsilon$}} & \small{\textbf{\hspace*{1.5mm}Empirical \newline Risk $r(\pi, \tilde{J})$}} & \scriptsize{\textbf{Risk Bound  
$\varepsilon_{(5)}$/Empirical Risk}} & \scriptsize{\textbf{Risk Bound $\varepsilon_{(10)}$/Empirical Risk}} & \scriptsize{\textbf{\hspace*{0.7mm} Runtime\newline \hspace*{2mm} per $10^4$\newline trajectories}} \\
\midrule
UAV & 0.7110  & 0.7100 &  0.027 & 0.003 & 0.052 / 0.023 & 0.075 / 0.057 &  1.51s \\
Aircraft Collision & 0.5949 & 0.5900 & 0.027 & 0.004 & 0.052 / 0.017 & 0.075 / 0.046 & 0.35s \\
Firewire & 0.1946 & 0.1967 & 0.055 & 0.004 & 0.103 / 0.039 & 0.146 / 0.081 & 14.9s \\
Semi-Auton.\ Vehicle & 0.7854 & 0.7767 & 0.027 & 0.004 & 0.052 / 0.018 & 0.075 / 0.033 & 0.50s \\
Betting Game & 30.78  & 30.65 & 0.027 & 0.005 & 0.052 / 0.016 & 0.075 / 0.026 & 1.12s \\
Chain & 127.2 & 128.0 & 0.027 & 0.002 & 0.052 / 0.032 & 0.075 / 0.054 & 0.32s \\
\bottomrule
\end{tabular}
} % end of resizebox
\label{tab:results}
\vspace{-12pt}
\end{table}

\startpara{Results}
Table~\ref{tab:results} summarises the resulting performances and guarantees for the best-performing policy. All results are obtained after processing the full set of trajectories. We first give the \emph{true robust performance} \(J\), i.e., the policy's performance on the worst-case true MDP, which is hidden from the algorithm. We then report the key outputs of our approach: the \emph{robust performance guarantee}~\(\tilde{J}\), representing the worst-case performance on the learned PAC IMDPs, and the \emph{risk bound} \(\varepsilon\) for the performance guarantee \(\tilde{J}\), obtained using Theorem~\ref{thm:bound}. We also show an empirical estimate of the \emph{true violation risk} \(r(\pi, \tilde{J})\), computed over 1000 fresh sample MDPs. To evaluate the bounds derived via Theorem~\ref{thm:bounddiscard}, we include the risk bounds \(\varepsilon_{(k)}\) for discarding \(k = 5\) and \(k = 10\) worst-case samples, alongside estimates of the corresponding true violation risks.

For example, in the UAV case study, the table shows that the learned policy achieves at least \(J = 0.711\) probability of reaching the goal without crashing into an obstacle on any sampled true MDP hidden from the algorithm. The learned IMDP approximations certify a minimum performance of \(\tilde{J} = 0.71\), with the probability of performing worse on a fresh sample MDP bounded by \(\varepsilon = 0.027\). On 1000 fresh samples, the policy actually performed worse in only \(0.3\%\) of cases.

%  \emph{true robust performance} $J$, which refers to the performance of the learned policy on the worst-case true MDP in the verification set, unknown to the algorithm, along with the corresponding. 
% %for $\pi$ to achieve a performance less than $\tilde{J}$

Figure~\ref{subfig:perfplot} shows the learning process and the derived performance guarantee for the Betting Game benchmark. We plot the true robust performance $J$ (solid line), and the robust performance guarantee $\tilde{J}$ (dashed line) against the number of trajectories processed for each unknown MDP.
%, both in the training and the verification sets. 
We depict the progress for robust IMDP policy learning (purple) and robust meta reinforcement learning (yellow). The dotted red line corresponds to the \emph{existential guarantee}~\cite{DBLP:journals/sttt/BadingsCJJKT22}, i.e., the minimum performance on any MDP from the verification set, when applying the individual optimal policies,
%and not a single policy to all environments, 
which constitutes a natural upper bound on robust policy performance. Figure~\ref{subfig:riskplot} illustrates the risk-tuning with performance guarantees obtained via Theorem~\ref{thm:bounddiscard}. We depict the performances
%of the learned robust policy 
on the PAC IMDPs learned for the verification set (pink dots) and the performance guarantees $\tilde{J}_{(k)}$ when discarding the $k=0,5, \text{and }10$ worst-case samples (dashed lines), corresponding to the risk bounds $\varepsilon_{(k)}$ in Table~\ref{tab:results}. The full results for all policy learning techniques and benchmarks can be found in
\ifthenelse{\isundefined{\techreport}}{\cite{arxiv}}{Appendix~\ref{sec:appeval}}.

% We implement our approach 
% %for various IMDP learning methods used in robust policy learning,
% on top of the PRISM verification tool~\cite{DBLP:conf/cav/KwiatkowskaNP11}, which provides trajectory generation and (robust) value iteration for MDPs and IMDPs. We compare several IMDP policy learning approaches for the training set (see Section~\ref{sec:policylearning}). These include: (1) PAC learning (Section~\ref{sec:paclearning}), (2) Linearly Updating Intervals~\citep{DBLP:conf/nips/SuilenS0022}, (3) UCRL2 reinforcement learning~\citep{DBLP:conf/nips/AuerJO08}, and (4) Maximum a-posteriori (MAP) point estimates~\citep{DBLP:conf/nips/SuilenS0022}. We adapt a modified version of UCRL2 from~\citet{DBLP:conf/nips/SuilenS0022} for IMDPs. Details of each learning algorithm are provided in Appendix~\ref{sec:appA}.

% Our approach involves two sampling dimensions: sampling unknown MDPs and sampling trajectories within each unknown MDP. Increasing the amount of MDP samples reduces the risk for the performance guarantee. Increasing the number of sampled trajectories leads to tighter approximations, aligning the observable performance guarantee closer to the actual performance. Both dimensions are essentially independent. The risk follows from the results in Section~\ref{sec:riskeval} and is detailed in Appendix~\ref{sec:apprisk}. The following empirical analysis addresses the second dimension, and for each method we consider:

\begin{figure}[t]
    \centering
    \begin{minipage}[b]{0.47\textwidth} % Adjust the width as needed
        \centering
        \includegraphics[width=0.85\linewidth]{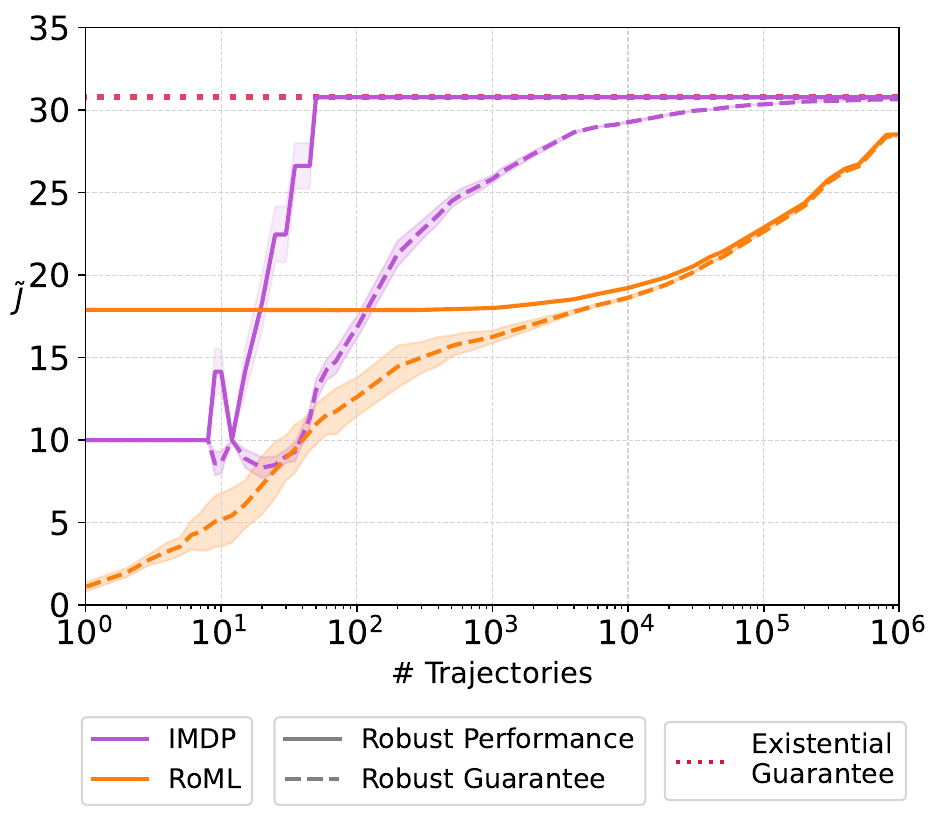}
        \caption{True robust performances $J$ and guarantees $\tilde{J}$ against  number of processed trajectories (betting game).}
        \label{subfig:perfplot}
    \end{minipage}
    \hfill
    \begin{minipage}[b]{0.47\textwidth} % Adjust the width as needed
        \centering
        \includegraphics[width=0.89\linewidth]{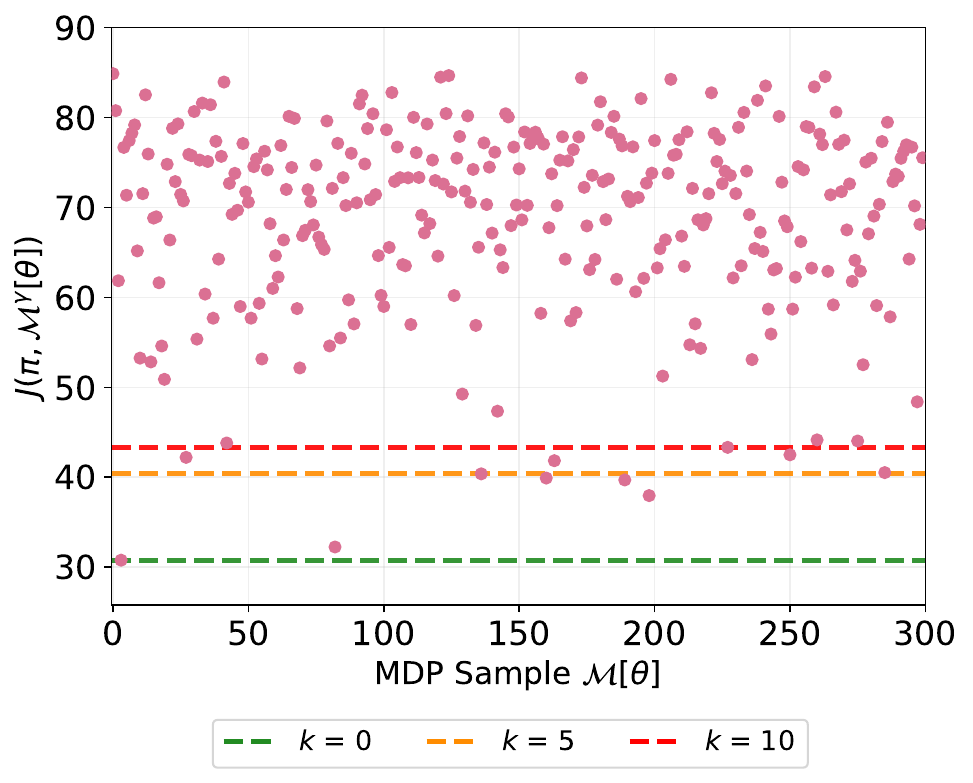}
        \vspace{3pt}
        \caption{Robust performance guarantees $\tilde{J}_{(k)}$, when discarding the $k=0$, $5$, and $10$ worst-case samples (betting game).}
        \label{subfig:riskplot}
    \end{minipage}
    \vspace{-12pt}
\end{figure}

\startpara{Discussion} 
The results show that our framework generates tight bounds on the performance and risk of learned policies in upMDPs. Our approach effectively addresses and integrates the two layers of uncertainty: (1) an unknown environment distribution and (2) unknown sample environments. 
%The framework successfully leverages PAC IMDP learning to soundly and precisely approximate the robust performance of learned policies across the unknown sample environments with high confidence. Furthermore, 
Our results yield high-quality risk bounds for the performance of policies in unseen environments. They enable tuning the risk-performance trade-off, and despite incorporating two layers of uncertainty, provide useful bounds with high user-specified confidence, constituting the first PAC guarantee for this general setup. While policy learning and solving PAC IMDPs scales with the model size and the number of sample MDPs, the computation of the risk bounds via Theorems~\ref{thm:bound} and~\ref{thm:bounddiscard} depends solely on the number of verification samples $N$ and is independent of the model size.
Regarding scalability, we briefly note that the range of model sizes we handle (see Table~\ref{tab:stats}) includes the largest instances handled by comparable methods that perform PAC IMDP learning from trajectories~\cite{DBLP:conf/cav/AshokKW19,DBLP:conf/nips/SuilenS0022,DBLP:journals/corr/meggendorfer24}.

\section{Conclusion}
\label{sec:conc}
We have presented a novel approach for producing certifiably robust policies for MDPs with epistemic uncertainty, where transition probabilities depend on parameters with unknown distributions. We have demonstrated
%on a range of relevant case studies
that our approach yields tight bounds on a policy's performance in unseen environments from the same distribution. Future work includes extending certifiably robust policies to settings where the specification or task is also uncertain and parameter-dependent.

\begin{credits}
\subsubsection{\ackname}
This work was part funded by
% Alessandro Abate acknowledges support from 
the ARIA projects SAINT and Super Martingale Certificates,
the UKRI AI Hub on Mathematical Foundations of AI
and the ERC under the European Union’s Horizon 2020 research and innovation programme (FUN2MODEL, grant agreement No.~834115).
\end{credits}

\bibliographystyle{splncs04}
\bibliography{references}

\ifthenelse{\isundefined{\techreport}}{
}{

\newpage
% \vspace{18pt}

\appendix
\section{Derivations and Proofs}
\label{sec:appB}
We detail the proofs and derivations of our main theoretical contributions. We state our lemmas and theorems for the case of maximising the evaluation functions, which has implications for the linear programs and inequalities used. However all our results apply to the minimising case by swapping the inequalities and directions of optimisation.

First, we present a range of results for incorporating additional uncertainty into the \textit{scenario approach}~\citep{DBLP:journals/siamjo/CampiG08,DBLP:journals/jota/CampiG11}. We then show how to formulate our problem as a randomised convex program—the \textit{scenario program}—and apply the generalised scenario theorems to derive our main results.

\subsection{The Scenario Approach with Uncertain Constraints}

We first present the basic setup for the scenario approach introduced in~\cite{DBLP:journals/siamjo/CampiG08}. The ingredients for the scenario approach are:
\begin{enumerate}
    \item A cost function $c^Tx$,
    \item An admissible region $C \subseteq \mathbb{R}^d$,
    \item A family of convex constraints indexed by an uncertain parameter $\{C_\theta \subseteq \mathbb{R}^d \mid \theta \in \Theta\}$,
    \item A probability measure $\mathbb{P}$ over $\Theta$.
\end{enumerate}

Given samples $(\theta_1, \dots , \theta_N)$ of independent random parameter valuations drawn from $(\Theta, \mathbb{P})$, a \textit{scenario program} is a linear program over the corresponding convex constraints:

\begin{equation}
\begin{array}{ll@{}ll}
\quad \quad \displaystyle\max_{x \in C}  & c^Tx &\\
\text{subject to} & \displaystyle x \in \bigcap_{i = 1,\dots,N} C_{\theta_i},&\\
\end{array}
\label{eqn:sp}
\end{equation} 
which we specialise to $c^Tx = x$. Let $x^*$ be the solution to the scenario program in Equation~\eqref{eqn:sp}. The scenario approach provides the generalisation theory to bound the \textit{violation probability} defined as:

\begin{equation}
    V(x) = \mathbb{P} \left\{\theta \in \Theta \colon x \not \in C_\theta\right\}.
\end{equation}

The violation probability is the probability that the solution to the scenario program is violating an unseen constraint sampled from $\Theta$, which corresponds to the policy violation risk $r(\pi,\theta)$. From now on, we assume the sampled constraints $C_{\theta_1}, \dots, C_{\theta_N}$ (abbr.~: $C_{\theta_{1:N}}$) are unknown. Instead, we are given a set of known convex constraints $\hat{C}_{\theta_{1:N}}$ , for which $\hat{C}_{\theta_i} \subseteq C_{\theta_i}, \forall 1 \leq i \leq N$. Furthermore, we assume that all constraints, known or unknown are one-dimensional intervals of the form $C_{\theta_i} = [a, b_{\theta_i}]$, for some constant $a \leq b_{\theta_i},\forall 1 \leq i \leq N$. Note that $a$ can be $-\infty$. All our results hold for the case of constraints as one-dimensional intervals, but do not necessarily generalise to higher dimensional problems. In the minimisation case, the intervals would be of the form $[a_{\theta_i}, b]$. 

We first show that the violation probability of the solution obtained for the more conservative constraints $\hat{C}_{\theta_i}$ cannot be higher than for the unknown constraints $C_{\theta_i}$. This means that a higher solution is associated with a higher risk, which reflects the risk-performance trade-off.
\begin{lemma}
    \label{lem:smallerRisk}
    Let $a \leq x \leq y \in \mathbb{R}$, it holds that $V(x) \leq V(y)$.
\end{lemma}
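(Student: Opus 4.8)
The plan is to reduce the violation event defining $V(\cdot)$ to a one-sided threshold condition, and then invoke monotonicity of the probability measure $\mathbb{P}$ under set inclusion. The whole argument hinges on the assumed one-dimensional interval structure of the constraints together with the hypothesis $a \leq x$.

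First I would unfold the definition $V(x) = \mathbb{P}\{\theta \in \Theta : x \notin C_\theta\}$ using the assumed interval form $C_\theta = [a, b_\theta]$. A point $x$ violates $C_\theta$ exactly when $x < a$ or $x > b_\theta$. The key simplification comes from the hypothesis $a \leq x$: since $x$ lies at or above the common lower endpoint $a$, the case $x < a$ cannot occur, so violation reduces to the single inequality $x > b_\theta$. Hence $V(x) = \mathbb{P}\{\theta \in \Theta : b_\theta < x\}$, and symmetrically $V(y) = \mathbb{P}\{\theta \in \Theta : b_\theta < y\}$, where the chain $a \leq x \leq y$ guarantees $a \leq y$ as well so that the same reduction applies to $y$. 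This step covers the case $a = -\infty$ without modification, since then $a \leq x$ holds trivially.

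Next I would establish the set inclusion $\{\theta : b_\theta < x\} \subseteq \{\theta : b_\theta < y\}$, which is immediate: if $b_\theta < x$ and $x \leq y$, then $b_\theta < y$. Applying monotonicity of $\mathbb{P}$ to this inclusion yields $V(x) \leq V(y)$, which is exactly the claim.

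I do not anticipate a genuine obstacle here, as the result is a direct consequence of the interval structure. The one point deserving care is the reduction in the first step, which is precisely where the hypothesis $a \leq x$ and the restriction to constraints of the form $[a, b_\theta]$ are used. This is also why, as the surrounding text remarks, the argument does not extend to higher-dimensional constraint regions: there, membership in $C_\theta$ is no longer governed by a single scalar threshold, so monotonicity of the violation probability along a ray need not hold.
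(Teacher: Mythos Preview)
Your proposal is correct and follows essentially the same argument as the paper: reduce $x \notin C_\theta$ to the one-sided condition $x > b_\theta$ using $a \leq x$, then use $x \leq y$ to obtain the set inclusion and conclude by monotonicity of $\mathbb{P}$. If anything, you are slightly more explicit than the paper in justifying why the case $x < a$ is excluded.
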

\begin{proof}
    Given a new constraint $C_\theta = [a, b_\theta]$, we have that

\begin{equation}
    \label{eqn:lemineq}
    \begin{aligned}
        x \not \in C_{\theta} \Rightarrow x > b_\theta \Rightarrow y > b_\theta \Rightarrow y \not \in C_{\theta}.
    \end{aligned}
\end{equation}
It follows that

\begin{equation}
    \begin{aligned}
        V(x) = \mathbb{P} \left\{ \theta \in \Theta \colon x \not \in C_\theta \right\} \overset{\text{\eqref{eqn:lemineq}}}{\leq} \mathbb{P} \left\{ \theta \in \Theta \colon y \not \in C_\theta \right\} = V(y).
    \end{aligned}
\end{equation}
\end{proof}

Next, we prove that soundly under-approximating the constraints also leads to a sound risk reduction. This will later allow us to bound the true risk of unknown constraints from known under-approximations.
\begin{lemma}
    \label{lem:allincluded}
    Let $x^* \in \mathbb{R}$ and $\hat{x}^* \in \mathbb{R}$ be the solutions to the scenario program in Equation~\eqref{eqn:sp} with constraints $\cons$ and $\consh$ with $\hat{C}_{\theta_i} \subseteq C_{\theta_i}, \forall 1 \leq i \leq N$. It holds that

    \begin{equation}
        V(\hat{x}^*) \leq V(x^*).
    \end{equation}

    % This does in general not hold for programs of dimension $n > 0$ or intervals with a variable lower bound.
\end{lemma}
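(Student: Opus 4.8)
The plan is to exploit the one-dimensional interval structure of the constraints to reduce both scenario programs to closed-form optimizers, and then to invoke Lemma~\ref{lem:smallerRisk} to transfer the resulting ordering of optimizers into an ordering of violation probabilities.

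First I would observe that, since every constraint is an interval sharing the common lower endpoint $a$, i.e.\ $C_{\theta_i} = [a, b_{\theta_i}]$, the feasible region of the scenario program in Equation~\eqref{eqn:sp} is simply the intersection $\bigcap_{i=1}^N C_{\theta_i} = [a, \min_i b_{\theta_i}]$. Because the objective maximizes $x$, the optimizer is the right endpoint of this interval, so $x^* = \min_{1 \le i \le N} b_{\theta_i}$. The identical argument applied to the under-approximating constraints, written $\hat{C}_{\theta_i} = [a, \hat{b}_{\theta_i}]$, gives $\hat{x}^* = \min_{1 \le i \le N} \hat{b}_{\theta_i}$.

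Next I would use the inclusion hypothesis $\hat{C}_{\theta_i} \subseteq C_{\theta_i}$. Since both $\hat{C}_{\theta_i}$ and $C_{\theta_i}$ are intervals with the same left endpoint $a$, this inclusion is equivalent to the scalar inequality $\hat{b}_{\theta_i} \le b_{\theta_i}$ for each $i$. Taking minima over $i$ preserves the inequality, yielding $\hat{x}^* = \min_i \hat{b}_{\theta_i} \le \min_i b_{\theta_i} = x^*$. Feasibility of the scenario program for $\consh$ moreover ensures $a \le \hat{x}^*$, so that $a \le \hat{x}^* \le x^*$. Applying Lemma~\ref{lem:smallerRisk} to this pair then immediately gives $V(\hat{x}^*) \le V(x^*)$, which is the claim.

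The main obstacle here is really a bookkeeping subtlety rather than a genuine difficulty: one must justify that the interval inclusion collapses to the scalar inequality on upper endpoints, which relies crucially on the standing assumption that all constraints, known and unknown alike, share the fixed lower endpoint $a$ so that the only remaining freedom is in the upper bound. I would also take care to confirm that the optimizers are well-defined — that each intersection is nonempty and bounded above — since this is what lets us treat $x^*$ and $\hat{x}^*$ as genuine solutions and place them both in the range $[a, \infty)$ required by Lemma~\ref{lem:smallerRisk}.
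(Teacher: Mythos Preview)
Your proof is correct and follows essentially the same approach as the paper: compute the closed-form optimizers $x^* = \min_i b_{\theta_i}$ and $\hat{x}^* = \min_i \hat{b}_{\theta_i}$ from the interval structure, use the inclusion $\hat{C}_{\theta_i} \subseteq C_{\theta_i}$ to deduce $\hat{b}_{\theta_i} \le b_{\theta_i}$ and hence $\hat{x}^* \le x^*$, and then invoke Lemma~\ref{lem:smallerRisk}. Your additional remarks on feasibility and well-definedness are more careful than the paper's version but do not change the argument.
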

\begin{proof}
    We show that the claim holds for our case of convex constraints of the form $C_{\theta_i} = [a, b_{\theta_i}]$. It is easy to see that the optimal solutions to the scenario program in Equation~\eqref{eqn:sp}, with finitely many interval constraints are

\begin{equation}
\begin{aligned}
    x^* = \min_{i} b_{\theta_i}\quad \text{and} \quad
    \hat{x}^* = \min_{i} \hat{b}_{\theta_i}.
\end{aligned}
\end{equation}
Now $\hat{C}_{\theta_i} \subseteq C_{\theta_i}$ is equivalent to $[a, \hat{b}_{\theta_i}] \subseteq [a, b_{\theta_i}]$, which implies that $\hat{b}_{\theta_i} \leq b_{\theta_i}$ and therefore $\hat{x}^* \leq x^*$. The claim follows by Lemma~\ref{lem:smallerRisk}.
% The counterexamples in Figure~\ref{sdf} show that the claim does in general not hold for higher dimensional constraints or intervals with variable lower bounds.
\end{proof}
Furthermore, we show that a solution obtained for constraints $\cons$ cannot have a higher violation probability than the solution for any of its subsets.
% \begin{tikzpicture}

% % Draw the larger circle and fill it with a lighter pastel color
% \fill[offwhite!50] (0,0) circle (2.5cm);
% \draw[very thick, alizarin!80] (0,0) circle (2.5cm);

% % Draw the smaller circle and fill it with a lighter pastel color
% \fill[offwhite] (-0.8,0.5) circle (1cm);
% \draw[very thick, alizarin!80] (-0.8,0.5) circle (1cm);

% % Draw and label the top point of both circles with larger dots
% \fill[red] (0,2.5) circle (3pt) node[above] {\large$\hat{x}^*$};
% \fill[red] (-0.8,1.5) circle (3pt) node[above] {\large $x^*$};

% % Add labels to each of the circles
% \node at (2.4, -2.2) {\large $\displaystyle \bigcap_i C_{\theta_i}$};
% \node at (0.4, -0.7) {\large $\displaystyle \bigcap_i \hat{C}_{\theta_i}$};

% \end{tikzpicture}

\begin{lemma}
    \label{lem:subset}
    Let $x^* \in \mathbb{R}$ and $x^*_Q \in \mathbb{R}$ be the solutions to the scenario program in Euqation~\eqref{eqn:sp} with constraints $\cons$ and $C_{\theta_{r_1:r_K}}$, for some $R = \{r_1, \dots, r_K\} \subseteq [N] = I$. It holds that 
    \begin{equation}
        V(x^*) \leq V(x^*_R).
    \end{equation}
\end{lemma}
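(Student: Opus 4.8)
The plan is to mirror the argument used for Lemma~\ref{lem:allincluded}, exploiting the explicit form of the scenario-program optimiser for one-dimensional interval constraints together with the monotonicity of the violation probability established in Lemma~\ref{lem:smallerRisk}. The key geometric intuition is that the subset program in Equation~\eqref{eqn:sp} is a \emph{relaxation} of the full program: it intersects fewer upper-bounded intervals, so its feasible region is at least as large and, since we maximise, its optimum can only be larger. Transferring this inequality on the optimisers through the monotone map $V$ then yields the claim.

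Concretely, I would first recall that each constraint has the form $C_{\theta_i} = [a, b_{\theta_i}]$ with common left endpoint $a$, so the intersection of any nonempty family of them is again an interval $[a, \min_i b_{\theta_i}]$. Hence the two scenario programs have the closed-form optima
\begin{equation}
    x^* = \min_{i \in [N]} b_{\theta_i}
    \quad \text{and} \quad
    x^*_R = \min_{i \in R} b_{\theta_i},
\end{equation}
exactly as already observed in the proof of Lemma~\ref{lem:allincluded}. Since $R \subseteq [N]$, the second minimum ranges over a smaller index set, and minimising over fewer values can only increase (or leave unchanged) the result; therefore $x^* \leq x^*_R$. Moreover, every $b_{\theta_i} \geq a$, so in particular $a \leq x^* \leq x^*_R$, and Lemma~\ref{lem:smallerRisk} applied with the pair $(x, y) = (x^*, x^*_R)$ gives $V(x^*) \leq V(x^*_R)$, which is the desired conclusion.

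I do not expect a genuine obstacle here: the result is a direct analogue of Lemma~\ref{lem:allincluded}, with ``shrinking each interval'' replaced by ``deleting some intervals'', and both reductions act on the optimum in the same monotone direction. The only point requiring a little care is to verify the hypothesis $a \leq x^*$ of Lemma~\ref{lem:smallerRisk}, i.e.\ that the full program remains feasible and its optimiser stays in the admissible range; this follows immediately from the shared left endpoint and the fact that $a \leq b_{\theta_i}$ for all $i$. As flagged in the surrounding discussion, the entire argument is confined to the one-dimensional interval setting and makes essential use of the closed-form $\min$ characterisation, so it is not claimed to extend to higher-dimensional constraints.
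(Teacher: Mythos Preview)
Your proposal is correct and follows essentially the same approach as the paper: compute the closed-form optimisers $x^* = \min_{i \in [N]} b_{\theta_i}$ and $x^*_R = \min_{i \in R} b_{\theta_i}$, observe that $R \subseteq [N]$ forces $x^* \leq x^*_R$, and conclude via Lemma~\ref{lem:smallerRisk}. Your write-up is in fact slightly more careful, explicitly checking the hypothesis $a \leq x^*$ needed to invoke Lemma~\ref{lem:smallerRisk}.
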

\begin{proof}
    Given that all constraints are of the form $[a, b_{\theta_i}]$, it is easy to see that the optimal solutions to the scenario program in Equation~\eqref{eqn:sp}, with finitely many interval constraints are

\begin{equation}
\begin{aligned}
    x^* = \min_{i \in I} b_{\theta_i}\quad \text{and} \quad
    x^*_R = \min_{r \in R} b_{\theta_r}.
\end{aligned}
\end{equation}
Now since $R \subseteq I$ it follows that $x^* \leq x^*_R$. The claim follows by Lemma~\ref{lem:smallerRisk}.
\end{proof}

As the last ingredient for the first main theorem, we show that the violation risk for the solution obtained when removing $k$ arbitrary constraints cannot be higher than the solution obtained when removing the $k$ \textit{worst-case} constraints.

\begin{lemma}
    \label{lem:kremove}
    Let $x^*_R \in \mathbb{R}$ be the solution to the scenario program in Equation~\eqref{eqn:sp} with constraints $C_{\theta_{r_1:r_K}}$, for some $R = \{r_1, \dots, r_K\} \subseteq [N] = I$. Let $x^*_{N,k}$ be the solution for constraints $\cons$ that violates exactly $k = N - K$ of the $N$ constraints. It holds that
    \begin{equation}
        V(x^*_R) \leq V(x^*_{N,k}).
    \end{equation}
\end{lemma}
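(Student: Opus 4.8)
The plan is to reduce the whole statement to the monotonicity of the violation probability established in Lemma~\ref{lem:smallerRisk}, so that it suffices to compare the two solutions as real numbers, i.e.\ to prove $x^*_R \leq x^*_{N,k}$. As in the proof of Lemma~\ref{lem:subset}, I would first exploit the interval structure of the constraints: writing $C_{\theta_i} = [a, b_{\theta_i}]$, the solution of any scenario program that retains a family of these constraints is simply the minimum of the corresponding right endpoints. Hence $x^*_R = \min_{r \in R} b_{\theta_r}$, the minimum taken over the $K$ retained indices.

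Next I would make the definition of $x^*_{N,k}$ concrete. Discarding the $k = N-K$ \emph{worst-case} constraints means discarding those with the smallest right endpoints, since in the maximisation setting these are exactly the constraints that most tightly bound the solution from above. Ordering the endpoints as $b_{(1)} \leq \cdots \leq b_{(N)}$, removing $b_{(1)}, \dots, b_{(k)}$ leaves $b_{(k+1)}$ as the binding constraint, so $x^*_{N,k} = b_{(k+1)} = b_{(N-K+1)}$, which indeed violates exactly the $k$ discarded constraints.

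The core inequality is then $\min_{r \in R} b_{\theta_r} \leq b_{(N-K+1)}$ for every $K$-element subset $R \subseteq [N]$. I would argue this by a counting (pigeonhole) step: at most $K-1$ of the endpoints are strictly larger than $b_{(N-K+1)}$ (those at order positions $N-K+2, \dots, N$), so a set $R$ of size $K$ cannot lie entirely among them; consequently some $r \in R$ satisfies $b_{\theta_r} \leq b_{(N-K+1)}$, which gives $x^*_R \leq x^*_{N,k}$. Applying Lemma~\ref{lem:smallerRisk} then yields $V(x^*_R) \leq V(x^*_{N,k})$, as required.

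The only subtlety I would be careful about is the precise reading of ``violates exactly $k$ constraints'' in the presence of ties among the $b_{\theta_i}$: phrasing the counting step in terms of endpoints \emph{strictly} exceeding $b_{(N-K+1)}$, rather than in terms of fixed order positions, is what keeps the argument robust to ties. Beyond this bookkeeping I do not expect a genuine obstacle, since the maximisation-over-intervals structure makes both $x^*_R$ and $x^*_{N,k}$ explicit expressions in the order statistics of the endpoints.
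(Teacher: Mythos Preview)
Your proposal is correct and follows essentially the same approach as the paper: identify $x^*_R = \min_{r \in R} b_{\theta_r}$ and $x^*_{N,k}$ as the $(k+1)$-th smallest $b_\theta$, argue $x^*_R \leq x^*_{N,k}$, and conclude via Lemma~\ref{lem:smallerRisk}. Your pigeonhole justification for the inequality and the remark on ties simply make explicit what the paper's proof leaves as a one-line observation.
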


\begin{proof}
    Given that all constraints are of the form $[a, b_{\theta_i}]$, it is easy to see that the optimal solution to the scenario program in Equation~\eqref{eqn:sp}, with finitely many interval constraints $C_{\theta_{r_1:r_K}}$ is
    \begin{equation}
        x^*_R = \min_{r \in R} b_{\theta_r}.
    \end{equation}
    Further, the maximum solution that violates exactly $k$ of the $N$ constraints $\cons$ is the $(k+1)$-th smallest $b_\theta$. Since $|R| = K$ and $k = N - K$, it follows that $x^*_R \leq x^*_{N,k}$. The claim follows by Lemma~\ref{lem:smallerRisk}.
\end{proof}

We relax the requirement of all known constraints being sound under-approximations of the sampled, unknown constraints $\hat{C}_{\theta_i} \subseteq C_{\theta_i}, \forall 1 \leq i \leq N$, to an uncertain setting, where it holds that
\begin{equation}
    \mathbb{P}\left\{ \hat{C}_{\theta_i} \subseteq C_{\theta_i}\right\} \geq 1 - \gamma,
\end{equation}
for some $\gamma > 0$ and any $1 \leq i \leq N$. This represents the approximation of the performances up to a certain confidence via statistical model checking. The probability that there exists a subset of constraints with indices $R = \{r_1, \dots, r_K\} \subseteq [N]$, which all contain their under-approximations $\hat{C}_{\theta_{r}} \subseteq C_{\theta_{r}}, \forall r \in R$ is
\begin{align}
    & \mathbb{P}^N\left\{ \exists R \subseteq [N] \colon |R| = K \text{ and } \forall r \in R \colon \hat{C}_{\theta_{r}} \subseteq C_{\theta_{r}} \right\} \geq \sum_{i = K}^N \binom{N}{i}(1-\gamma)^i\gamma^{N-i}\label{eqn:firstineq}\\
    \overset{\eqref{lem:allincluded}}{\Rightarrow} & \mathbb{P}^N\left\{ \exists R \subseteq [N] \colon |R| = K \text{ and } V(\hat{x}^*_R) \leq V(x^*_R) \right\} \geq \sum_{i = K}^N \binom{N}{i}(1-\gamma)^i\gamma^{N-i}\\
    \overset{\eqref{lem:subset}}{\Rightarrow} & \mathbb{P}^N\left\{ \exists R \subseteq [N] \colon |R| = K \text{ and } V(\hat{x}^*) \leq V(x^*_R) \right\} \geq \sum_{i = K}^N \binom{N}{i}(1-\gamma)^i\gamma^{N-i}\\
    \overset{\eqref{lem:kremove}}{\Rightarrow} & \mathbb{P}^N\left\{ V(\hat{x}^*) \leq V(x^*_{N,k}) \right\} \geq \sum_{i = K}^N \binom{N}{i}(1-\gamma)^i\gamma^{N-i}, \label{eqn:finalineq}
\end{align}
where $k = N - K$, $\hat{x}^*$ denotes the solution to the scenario program in Equation~\eqref{eqn:sp} for all $N$ constraints (under-approximations), $x^*_{N,k}$ is the solution when removing the $k$ worst-case constraints, and $x^*_R$ and $\hat{x}^*_R$ are the solutions for the subset of constraints with indices $R$. Note that these results hold for any $K \leq N$. In the following, we abbreviate Equation~\eqref{eqn:finalineq} as
\begin{equation}
    \label{eqn:eqbound1}
    \mathbb{P}^N\left\{ V(\hat{x}^*) \leq V(x^*_{N,k}) \right\} \geq 1 - p.
\end{equation}

Given that $x^*_{N,k}$ is the solution to the scenario program in Equation~\eqref{eqn:sp} with constraints $\cons$, violating exactly $k$ of the $N$ constraints, we apply Theorem 2.1 of \cite{DBLP:journals/jota/CampiG11}, which states:
\begin{equation}
    \label{eqn:eqbound2}
    \mathbb{P}^N\left\{ V(x^*_{N,k}) \leq \varepsilon(N,k,\beta) \right\} \geq 1 - \beta,
\end{equation}
where $\varepsilon(N,k,\beta)$ is a bound on the violation probability given the number of discarded constraints $k$ and confidence level $1 - \beta$ with $\beta > 0$, given as the solution to
\begin{equation}
    \label{eqn:betaeps}
    \beta = \sum_{i = 0}^k \binom{N}{i}\varepsilon^i(1-\varepsilon)^{N-i}.
\end{equation}

To bound the violation probability of the observable solution $\hat{x}^*$ with respect to the distribution over unknown constraints $\mathbb{P}$, we use:
\begin{equation}
    V(\hat{x}^*) \leq V(x^*_{N,k}) \land V(x^*_{N,k}) \leq \varepsilon \Rightarrow V(\hat{x}^*) \leq \varepsilon.
\end{equation}
Thus,
\begin{equation}
    \mathbb{P}^N\left\{V(\hat{x}^*) \leq \varepsilon \right\} \geq \mathbb{P}^N\left\{ V(\hat{x}^*) \leq V(x^*_{N,k}) \land V(x^*_{N,k}) \leq \varepsilon \right\}.
\end{equation}
Using the union bound, this transforms into:
\begin{equation}
    \mathbb{P}^N\left\{V(\hat{x}^*) \leq \varepsilon \right\} \geq 1 - \mathbb{P}^N\left\{ V(\hat{x}^*) > V(x^*_{N,k}) \right\} - \mathbb{P}^N \left\{ V(x^*_{N,k}) > \varepsilon \right\}.
\end{equation}
From Equations~\eqref{eqn:eqbound1} and~\eqref{eqn:eqbound2}, we conclude:
\begin{equation}
    \label{eqn:bound}
    \mathbb{P}^N\left\{V(\hat{x}^*) \leq \varepsilon(N,k,\beta) \right\} \geq 1 - (\beta + p).
\end{equation}
We combined the uncertainties stemming from finite sample generalisation using the scenario approach, and the fact that constraints are only known as under-approximations up to a certain confidence. Equation~\eqref{eqn:bound} provides a bound on the violation probability for the solution obtained from known under-approximations $J(\hat{x}^*)$, generalising to the unknown distribution over true constraints from $\mathbb{P}$.

\begin{theorem}
    \label{thm:boundsp}
    Given $N$ i.i.d. samples $\theta_{1:N}\sim \mathbb{P}$ with corresponding unknown constraints $\cons$ and observable constraints $\consh$, all of the form $[a, b_\theta]$, and $\mathbb{P}\{\hat{C}_{\theta_i} \subseteq C_{\theta_i}\} \geq 1 - \gamma$. Let $\hat{x}^*$ be the solution to the scenario program in Equation~\eqref{eqn:sp}. Then for any $K \leq N$ and $\beta > 0$, it holds that
    \begin{equation}
        \mathbb{P}^N\left\{V(\hat{x}^*) \leq \varepsilon(N,k,\beta) \right\} \geq 1 - (\beta + p),
    \end{equation}
    with $k = N - K$ and $p = 1 - \sum_{i = K}^N \binom{N}{i}(1-\gamma)^i\gamma^{N-i}$, i.e.,
    \begin{equation}
        \mathbb{P}^N\left\{V(\hat{x}^*) \leq \varepsilon(N,k,\beta) \right\} \geq \sum_{i = K}^N \binom{N}{i}(1-\gamma)^i\gamma^{N-i} - \beta.
    \end{equation}
\end{theorem}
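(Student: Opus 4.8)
The plan is to assemble the monotonicity lemmas established above with the classical sampling-and-discarding bound of the scenario approach, fusing the two sources of confidence through a single union bound. The statement is essentially a repackaging of the derivation in Equations~\eqref{eqn:firstineq}--\eqref{eqn:bound}, so I would organise the proof around the two probabilistic events whose simultaneous occurrence forces $V(\hat{x}^*) \leq \varepsilon(N,k,\beta)$, and then control the probability of each.

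First I would fix $K \leq N$, set $k = N-K$, and relate the observable solution $\hat{x}^*$ to the solution $x^*_{N,k}$ obtained from the true constraints $\cons$ after discarding the $k$ worst-case ones. Exploiting that every constraint is a one-dimensional interval $[a,b_\theta]$ — so each scenario program in Equation~\eqref{eqn:sp} has the explicit optimiser $\min_i b_{\theta_i}$, and the monotonicity of $V$ from Lemma~\ref{lem:smallerRisk} applies — I would chain Lemmas~\ref{lem:allincluded}, \ref{lem:subset} and~\ref{lem:kremove}: on the event that some index set $R$ of size $K$ has all its constraints validly under-approximated ($\hat{C}_{\theta_r} \subseteq C_{\theta_r}$ for all $r \in R$), these give $V(\hat{x}^*) \leq V(x^*) \leq V(x^*_R) \leq V(x^*_{N,k})$. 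Since the $N$ inclusion events are i.i.d.\ with probability at least $1-\gamma$ each, the probability that at least $K$ of them hold, hence that such an $R$ exists, is at least $\sum_{i=K}^N \binom{N}{i}(1-\gamma)^i \gamma^{N-i} = 1-p$, which is exactly Equation~\eqref{eqn:eqbound1}.

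Next I would invoke Theorem~2.1 of \citep{DBLP:journals/jota/CampiG11} applied to the true constraints $\cons$ with $k$ discarded, yielding $\mathbb{P}^N\{V(x^*_{N,k}) \leq \varepsilon(N,k,\beta)\} \geq 1-\beta$, i.e.\ Equation~\eqref{eqn:eqbound2}, with $\varepsilon(N,k,\beta)$ the unique solution of Equation~\eqref{eqn:betaeps}. Finally I would observe that $V(\hat{x}^*) \leq V(x^*_{N,k})$ together with $V(x^*_{N,k}) \leq \varepsilon$ implies $V(\hat{x}^*) \leq \varepsilon$, and apply the union bound to the complements of the two events controlled above. This produces the claimed confidence $1-(\beta+p)$ and, after substituting $p = 1 - \sum_{i=K}^N \binom{N}{i}(1-\gamma)^i\gamma^{N-i}$, the explicit lower bound $\sum_{i=K}^N \binom{N}{i}(1-\gamma)^i\gamma^{N-i} - \beta$.

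The main obstacle is conceptual rather than computational: the index set $R$ on which the under-approximations are valid is \emph{not} observable, so the argument must hold uniformly over the existence of \emph{some} valid $R$ rather than a fixed, identified one. The resolution is that $\hat{x}^* = \min_i \hat{b}_{\theta_i}$ is computed from \emph{all} $N$ under-approximated constraints, so it lower-bounds the optimum of every subprogram; this is precisely what makes Lemma~\ref{lem:subset} applicable and lets the monotone chain close without pinning down $R$. A secondary point requiring care is that the interval structure $[a,b_\theta]$ is essential — it is what guarantees the explicit minimum-form optima underlying all four lemmas — so the argument does not extend to higher-dimensional constraint sets, a restriction I would flag explicitly.
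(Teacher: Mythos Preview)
Your proposal follows the paper's own derivation essentially verbatim: the same binomial lower bound on the existence of a valid index set $R$ of size $K$, the same chaining of Lemmas~\ref{lem:allincluded}, \ref{lem:subset}, \ref{lem:kremove} to reach $V(\hat{x}^*)\leq V(x^*_{N,k})$, the same appeal to Theorem~2.1 of~\cite{DBLP:journals/jota/CampiG11}, and the same union bound to fuse the two confidences.

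One slip to fix: the displayed chain $V(\hat{x}^*) \leq V(x^*) \leq V(x^*_R) \leq V(x^*_{N,k})$ is not the right one. The first inequality $V(\hat{x}^*)\leq V(x^*)$ would need Lemma~\ref{lem:allincluded} on \emph{all} $N$ constraints, but on the event in question only the indices in $R$ are guaranteed to satisfy $\hat{C}_{\theta_r}\subseteq C_{\theta_r}$; a single invalid index outside $R$ can make $\hat{x}^*>x^*$. The correct intermediate is $\hat{x}^*_R$, giving $V(\hat{x}^*)\leq V(\hat{x}^*_R)\leq V(x^*_R)\leq V(x^*_{N,k})$: Lemma~\ref{lem:subset} applied to the \emph{observable} constraints yields the first step, Lemma~\ref{lem:allincluded} restricted to $R$ the second, and Lemma~\ref{lem:kremove} the third. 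Your final paragraph in fact articulates exactly this mechanism, so the issue is only in the displayed chain, not in your understanding.
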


\begin{proof}
    The theorem follows directly from Equation~\eqref{eqn:bound} and the reasoning above.
\end{proof}

\subsection{Derivation of Theorem~\ref{thm:bound}}

We show that Theorem~\ref{thm:bound} follows as a special case of Theorem~\ref{thm:boundsp}. Consider an upMDP $\upmdp$ and a policy $\pi$. Given a evaluation function $J$, sampled parameters $\theta_i \sim \mathbb{P}$ induce convex constraints of the form:
\[
    C_{\theta_i} = (-\infty, J(\pi, \mathcal{M}[\theta_i])].
\]
Similarly, learned IMDP over-approximations $\mathcal{M}^\gamma[\theta_i]$, where $\mathcal{M}[\theta_i] \subseteq \mathcal{M}^\gamma[\theta_i]$, imply that $J(\pi, \mathcal{M}^\gamma[\theta_i]) \leq J(\pi, \mathcal{M}[\theta_i])$, inducing under-approximations of the constraints:
\[
    \hat{C}_{\theta_i} = (-\infty, J(\pi, \mathcal{M}^\gamma[\theta_i])] \subseteq (-\infty, J(\pi, \mathcal{M}[\theta_i])].
\]
Since $\mathbb{P}\{\mathcal{M}[\theta_i] \subseteq \mathcal{M}^\gamma[\theta_i]\} \geq 1 - \gamma$ by construction of the IMDPs, Theorem~\ref{thm:boundsp} becomes applicable to the solution $\hat{x}^* = \tilde{J} = \min_i J(\pi, \mathcal{M}^\gamma[\theta_i])$.

\begin{theorem}[\ref{thm:bound}]
    Given $N$ i.i.d.\ sample MDPs $\mathcal{M}[\theta_i]$ and IMDPs $\mathcal{M}^\gamma[\theta_i]$, such that $\mathbb{P}\{\mathcal{M}[\theta_i] \subseteq \mathcal{M}^\gamma[\theta_i]\} \geq 1 - \gamma$. For any policy $\pi$ and confidence level $1 - \eta$, with $\eta > 0$, it holds that
    \begin{equation}
        \mathbb{P}^N\left\{ r(\pi, \tilde{J}) \leq \varepsilon(N, \gamma, \eta) \right\} \geq 1 - \eta,
    \end{equation}
    where $\tilde{J} = \min_i J(\pi, \mathcal{M}^\gamma[\theta_i])$, and $\varepsilon(N, \gamma, \eta)$ is the solution to 
    \begin{equation}
        \sum_{i = K}^N \binom{N}{i}(1-\gamma)^i\gamma^{N-i} - (1 - \eta) = \sum_{i = 0}^{N-K} \binom{N}{i}\varepsilon^i(1-\varepsilon)^{N-i},
    \end{equation}
    for any $K \leq N$.
\end{theorem}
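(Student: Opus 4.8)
The plan is to obtain Theorem~\ref{thm:bound} as a direct specialisation of the abstract one-dimensional scenario result in Theorem~\ref{thm:boundsp}, so most of the work is setting up the correspondence and then calibrating the confidence budget. For a fixed policy $\pi$ I would read each sampled valuation $\theta_i$ as inducing the half-line constraint $C_{\theta_i} = (-\infty, J(\pi,\mathcal{M}[\theta_i])]$, so that the scenario program~\eqref{eqn:sp} has optimum $\min_i J(\pi,\mathcal{M}[\theta_i])$ and its violation probability $V(\cdot)$ coincides with the violation risk $r(\pi,\cdot)$ of Definition~\ref{def:risk}. These true constraints are unobservable; what we observe are the shrunken constraints $\hat{C}_{\theta_i} = (-\infty, J(\pi,\mathcal{M}^\gamma[\theta_i])]$ coming from the learned IMDPs. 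By Equation~\eqref{eqn:imdpvalueineq}, the inclusion $\mathcal{M}[\theta_i]\in\mathcal{M}^\gamma[\theta_i]$ forces $J(\pi,\mathcal{M}^\gamma[\theta_i]) \le J(\pi,\mathcal{M}[\theta_i])$, hence $\hat{C}_{\theta_i}\subseteq C_{\theta_i}$, and Lemma~\ref{lem:inclusion} guarantees this per-sample with probability at least $1-\gamma$. This is exactly the hypothesis of Theorem~\ref{thm:boundsp}, with observable solution $\hat{x}^* = \min_i J(\pi,\mathcal{M}^\gamma[\theta_i]) = \tilde{J}^\gamma$.

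With the correspondence fixed, I would invoke Theorem~\ref{thm:boundsp}, which yields $\mathbb{P}^N\{V(\hat{x}^*)\le\varepsilon(N,k,\beta)\}\ge 1-(\beta+p)$ for every $K\le N$, with $k = N-K$ and $p = 1 - \sum_{i=K}^N\binom{N}{i}(1-\gamma)^i\gamma^{N-i}$. The only remaining move is to spend the single budget $\eta$ across the two independent failure sources. Choosing $\beta = \eta - p$ makes the total failure probability $\beta+p$ equal to $\eta$, so the event holds with confidence $1-\eta$; this requires $\beta>0$, i.e.\ $\sum_{i=K}^N\binom{N}{i}(1-\gamma)^i\gamma^{N-i} > 1-\eta$, which is satisfied for $K$ close to $N$ when $\gamma$ is small. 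Substituting $\beta = \sum_{i=K}^N\binom{N}{i}(1-\gamma)^i\gamma^{N-i} - (1-\eta)$ into the scenario-discarding relation~\eqref{eqn:betaeps}, namely $\beta = \sum_{i=0}^{k}\binom{N}{i}\varepsilon^i(1-\varepsilon)^{N-i}$ with $k=N-K$, reproduces exactly the defining equation~\eqref{eqn:binom} for $\varepsilon(N,\gamma,\eta)$, and since $V(\hat{x}^*)=r(\pi,\tilde{J}^\gamma)$ this is the claim.

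I expect the real difficulty to sit inside Theorem~\ref{thm:boundsp} rather than in this specialisation, namely in making the per-sample inclusion confidence $1-\gamma$ scale gracefully with $N$. Insisting that all $N$ under-approximations be simultaneously valid costs $(1-\gamma)^N$, which collapses for large $N$; the fix is to require only that some $K$-subset of the constraints be valid and to treat the other $N-K$ as discarded worst-case constraints. This is where the monotonicity Lemmas~\ref{lem:smallerRisk}--\ref{lem:kremove} (a larger feasible value, a shrunken constraint, dropping constraints, and dropping the worst ones each move the violation probability in a controlled direction) combine with the sampling-and-discarding theorem of~\cite{DBLP:journals/jota/CampiG11} through a final union bound. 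The free parameter $K$ in the statement is precisely the knob that trades the confidence loss $p$ against the inflation of $\varepsilon$, and one optimises over it numerically.
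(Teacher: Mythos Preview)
Your proposal is correct and follows essentially the same route as the paper: set up the half-line constraints $C_{\theta_i}=(-\infty,J(\pi,\mathcal{M}[\theta_i])]$ and their observable under-approximations $\hat{C}_{\theta_i}$, invoke Theorem~\ref{thm:boundsp}, and then calibrate $\beta = \sum_{i=K}^N\binom{N}{i}(1-\gamma)^i\gamma^{N-i}-(1-\eta)$ so that the total failure probability equals $\eta$, which recovers Equation~\eqref{eqn:binom}. Your additional remark that this choice of $\beta$ is the largest admissible one (hence yields the tightest $\varepsilon$ for fixed $K$) and your discussion of the role of $K$ match the paper's reasoning as well.
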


\begin{proof}
    By applying Theorem~\ref{thm:boundsp}, we obtain 
    \begin{equation}
        \label{eqn:proofineq}
        \mathbb{P}^N\left\{r(\pi, \tilde{J}) \leq \varepsilon(N, k, \beta) \right\} \geq \sum_{i = K}^N \binom{N}{i}(1-\gamma)^i\gamma^{N-i} - \beta,
    \end{equation}
    with $k = N - K$, $K \leq N$.
    
    Equating the right-hand side to $1 - \eta$, we obtain the following range of permissible $\beta$:
    \begin{equation*}
        \beta \leq \sum_{i = K}^N \binom{N}{i}(1-\gamma)^i\gamma^{N-i} - (1 - \eta).
    \end{equation*}
    
    Since the risk $\varepsilon(N, k, \beta)$ for fixed $N$ and $k$ increases as $\beta$ decreases, the smallest risk is obtained for the largest possible $\beta$, which corresponds to the smallest possible confidence that adds up to the desired confidence $1 - \eta$. Therefore, substituting
    \begin{equation*}
        \beta = \sum_{i = K}^N \binom{N}{i}(1-\gamma)^i\gamma^{N-i} - (1 - \eta)
    \end{equation*}
    into Equations~\eqref{eqn:betaeps} and ~\eqref{eqn:proofineq} concludes the proof.
\end{proof}

\subsection{Derivation of Theorem~\ref{thm:bounddiscard}}

To incorporate sample discarding into the setup with uncertain constraints, we adapt the reasoning above to exclude a fixed number \(l\) of the \(N\) observable constraints \(\consh\). Let \(L \subseteq [N]\) with \(|L| = l\) be the indices of the discarded constraints. The probability that there exists a subset of constraints with indices \(R = \{r_1, \dots, r_K\} \subseteq [N] \setminus L\), which all contain their under-approximations \(\hat{C}_{\theta_r} \subseteq C_{\theta_r}, \forall r \in R\) is
\begin{equation}
    \label{eqn:ineqdiscard}
    \begin{aligned}
        \mathbb{P}^N\Big\{ \exists R \subseteq [N] \setminus L \colon |R| = K \text{ and } 
        \forall r \in R \colon \hat{C}_{\theta_r} 
        &\subseteq C_{\theta_r} \Big\} \\
        &\geq \sum_{i = K}^{N-l} \binom{N-l}{i}(1-\gamma)^i \gamma^{N-l-i}.
    \end{aligned}
\end{equation}
Analogous to Equation~\eqref{eqn:firstineq}, we transform Equation~\eqref{eqn:ineqdiscard} into
\begin{equation}
    \label{eqn:discardfinalineq}
    \mathbb{P}^N\left\{ V(\hat{x}^*_{N,l}) \leq V(x^*_{N,m})\right\} \geq \sum_{i = K}^{N-l} \binom{N-l}{i}(1-\gamma)^i\gamma^{N-l-i},
\end{equation}
where \(x^*_{N,l}\) is the solution for constraints \(\consh\) without the indices \(L\), and \(m = N - K \).

\begin{theorem}[\ref{thm:bounddiscard}]
    Given $N$ i.i.d.\ sample MDPs $\mathcal{M}[\theta_i]$ and IMDPs $\mathcal{M}^\gamma[\theta_i]$, such that $\mathbb{P}\{\mathcal{M}[\theta_i] \subseteq \mathcal{M}^\gamma[\theta_i]\} \geq 1 - \gamma$, for any policy $\pi$, confidence level $1 - \eta$ with $\eta > 0$, and number $k$ of discarded samples, it holds that
    \begin{equation}
        \mathbb{P}^N\left\{ r(\pi, \tilde{J}_{(k)}) \leq \varepsilon_{(k)}(N, \gamma, \eta, k) \right\} \geq 1 - \eta,
    \end{equation}
    where $\varepsilon_{(k)}(N, \gamma, \eta, k)$ is the solution to 
    \begin{equation}
        \sum_{i = K}^{N-k} \binom{N-k}{i}(1-\gamma)^i\gamma^{N-k-i} - (1 - \eta) = \sum_{i = 0}^{N-K} \binom{N}{i}\varepsilon^i(1-\varepsilon)^{N-i},
    \end{equation}
    for any $K \leq N - k$.
\end{theorem}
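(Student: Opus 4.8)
The plan is to mirror the derivation of Theorem~\ref{thm:bound} from Theorem~\ref{thm:boundsp}, but to run the entire argument on the $N-k$ constraints that survive after discarding the $k$ worst-case (smallest) observable constraints. First I would fix the set $L \subseteq [N]$ of $|L| = k$ discarded indices to be those attaining the $k$ smallest values $J(\pi, \mathcal{M}^\gamma[\theta_i])$, so that the scenario solution over the retained observable constraints, $\hat{x}^*_{N,k} = \min_{i \notin L} J(\pi, \mathcal{M}^\gamma[\theta_i])$, is exactly the order statistic $\tilde{J}_{(k)}$ (using that each interval constraint $\hat{C}_{\theta_i} = (-\infty, J(\pi,\mathcal{M}^\gamma[\theta_i])]$ yields a scenario solution equal to the minimum of the right endpoints). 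Bounding $V(\hat{x}^*_{N,k}) = r(\pi, \tilde{J}_{(k)})$ is then the target.

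Second, I would establish Equation~\eqref{eqn:discardfinalineq} by re-running the inequality chain that produced Equation~\eqref{eqn:finalineq}, now restricted to the $N-k$ retained samples: Equation~\eqref{eqn:ineqdiscard} lower-bounds the probability that some size-$K$ subset $R \subseteq [N]\setminus L$ has all its under-approximations valid, and chaining Lemma~\ref{lem:allincluded}, Lemma~\ref{lem:subset} and Lemma~\ref{lem:kremove} (each applicable because the constraints are one-dimensional intervals $[a,b_\theta]$) upgrades this into $\mathbb{P}^N\{V(\hat{x}^*_{N,k}) \leq V(x^*_{N,m})\} \geq \sum_{i=K}^{N-k}\binom{N-k}{i}(1-\gamma)^i\gamma^{N-k-i}$, where $x^*_{N,m}$ is the true-constraint solution violating exactly $m = N-K$ of all $N$ constraints. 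I would then invoke the sampling-and-discarding result (Theorem~2.1 of~\cite{DBLP:journals/jota/CampiG11}) with $m$ discarded constraints to obtain $\mathbb{P}^N\{V(x^*_{N,m}) \leq \varepsilon\} \geq 1-\beta$, where $\beta = \sum_{i=0}^{m}\binom{N}{i}\varepsilon^i(1-\varepsilon)^{N-i}$, and finish with the union bound exactly as in Equation~\eqref{eqn:bound}, setting $\beta = \sum_{i=K}^{N-k}\binom{N-k}{i}(1-\gamma)^i\gamma^{N-k-i} - (1-\eta)$ so that the two confidences add to $1-\eta$. Enumerating $K \leq N-k$ and taking the best resulting $\varepsilon$ gives the tightest bound; setting $k=0$ recovers Theorem~\ref{thm:bound}.

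The main obstacle I anticipate is bookkeeping the two distinct roles of constraint removal and keeping them consistent. There are $k$ constraints discarded for \emph{tuning}, but the scenario-discarding theorem must be applied with $m = N-K$ total discards against the full sample of $N$; the chain of lemmas is precisely what licenses replacing the event ``valid on some $K$-subset of the retained set'' by ``the solution violating the $N-K$ worst true constraints'', and this works cleanly only because of the interval structure and the monotonicity in Lemma~\ref{lem:smallerRisk}. I must verify that the constraint $K \leq N-k$ guarantees $m = N-K \geq k$, i.e. that the confidence margin absorbs at least the tuning discards, and that the order-statistic identification of $\hat{x}^*_{N,k}$ is exact, so that the bound genuinely certifies $r(\pi, \tilde{J}_{(k)})$ rather than a looser quantity.
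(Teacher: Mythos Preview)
Your proposal is correct and follows essentially the same route as the paper's proof: restrict the inclusion-counting argument of Equation~\eqref{eqn:firstineq} to the $N-k$ retained indices to obtain Equation~\eqref{eqn:discardfinalineq}, combine it with the sampling-and-discarding bound from~\cite{DBLP:journals/jota/CampiG11} at level $m = N-K$, and finish with the union bound and the choice $\beta = \sum_{i=K}^{N-k}\binom{N-k}{i}(1-\gamma)^i\gamma^{N-k-i} - (1-\eta)$. Your explicit identification of $\hat{x}^*_{N,k}$ with the order statistic $\tilde{J}_{(k)}$ and your observation that $K \le N-k$ forces $m \ge k$ are helpful clarifications that the paper leaves implicit.
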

\begin{proof}
    From Equation~\eqref{eqn:discardfinalineq} it follows that 

        \begin{equation}
        \label{eqn:proofineq2}
        \mathbb{P}^N\left\{r(\pi, \tilde{J}_k) \leq \varepsilon(N, m, \beta) \right\} \geq \sum_{i = K}^{N-k} \binom{N-k}{i}(1-\gamma)^i\gamma^{N-k-i} - \beta,
    \end{equation}
    with $m = N - K$, $K \leq N - k$. Equating the right-hand side to $1 - \eta$ and substituting 

    \begin{equation}
        \beta = \sum_{i = K}^{N-k} \binom{N-k}{i}(1-\gamma)^i\gamma^{N-k-i} - (1 - \eta)
    \end{equation}
    into Equations~\eqref{eqn:betaeps} and~\eqref{eqn:proofineq2} concludes the proof.
    
\end{proof}

\section{Benchmark Environments}
\label{sec:casestudies}
We detail the benchmarks used in our experimental evaluation in Section~\ref{sec:exp}.

\paragraph{Autonomous Drone.} The autonomous drone benchmark, as described in Section~\ref{sec:intro}, is adapted from~\cite{DBLP:journals/sttt/BadingsCJJKT22}. A drone manoeuvre in a 3D environment, starting from the origin (see Figure~\ref{fig:drone}). It aims to reach a target zone while avoiding obstacles. There are 15 parameters \(p_i\), one for each \(x\)-coordinate, influencing the probabilities of the drone drifting off. The evaluation function is the probability of reaching the goal without crashing into an obstacle.

\paragraph{Betting Game.} The betting game is a reward maximisation benchmark introduced in~\cite{DBLP:journals/mmor/BauerleO11}. The player starts with 10 coins and can sequentially place \(n\) bets, risking either 0, 1, 2, 5, or 10 coins. With probability \(p\), the player wins and earns double the bet; with probability \(1-p\), the bet is lost. The goal is to maximise the number of coins after \(n\) bets. The evaluation function is the expected number of coins after \(n\) bets. We consider a version with \(n = 8\) rounds of betting.

\paragraph{Chain Problem.} The chain benchmark was introduced in~\cite{DBLP:conf/ewrl/Araya-LopezBTC11} and consists of a chain of 6 states with two actions: (1) progressing to the next state with probability \(p\) and falling back to the initial state with probability \(1-p\), and analogous with inverse probabilities. The evaluation function is the expected number of steps required to reach the last state.

\paragraph{FireWire.} The FireWire example is a standard probabilistic verification benchmark~\cite{DBLP:conf/tacas/HartmannsKPQR19} modelling the execution of a root contention protocol used within the FireWire standard. A parameter \(p\) represents the probability for randomisation between two competing nodes in order to break symmetry. The evaluation function is the minimum probability of successful completion.

\paragraph{Aircraft Collision Avoidance.} The aircraft collision avoidance environment is a simplified version of the rich set of models introduced in~\cite{kochenderfer2015}. We consider a \(10 \times 5\) grid where two aircraft, one controlled by our agent and one adversarial, fly towards each other. In each step, both pilots may choose to fly straight, up, or down, succeeding with probabilities \(p\) and \(q\), respectively. The goal is for the agent to reach the opposite end of the grid without colliding with the adversarial aircraft, which manoeuvres arbitrarily. The evaluation function is the probability of the agent reaching the goal zone without colliding.

\paragraph{Semi-Autonomous Vehicle.} The semi-autonomous vehicle benchmark, introduced in~\cite{Stckler2015NimbRoES} and formalised as a PRISM model in~\cite{DBLP:conf/tacas/Junges0DTK16}, models an explorer moving through a grid while communicating with a controller via two faulty channels. The probabilities of each channel losing a message depend on parameters \(p\) and \(q\), and the agent's current position. In each step, the agent can either communicate over a chosen channel for a limited number of tries or move in a desired direction. The agent can only move a certain number of steps without successful communication; otherwise, the task fails. The evaluation function is the probability of the agent reaching a goal zone without exceeding the maximum number of steps without communication. We consider a \(10 \times 5\) grid, a maximum of two communication trials, and only two allowed moves without successful communication.

\section{Extended Experimental Evaluation}
\label{sec:appeval}

\begin{table}[t]
\centering
\caption{Extended benchmark statistics.} % Caption moved above the table
\resizebox{\textwidth}{!}{%
\begin{tabular}{>{\centering\arraybackslash}m{3.5cm} >{\centering\arraybackslash}m{2.5cm}>{\centering\arraybackslash}m{1.2cm} >{\centering\arraybackslash}m{2.3cm} > {\centering\arraybackslash}m{2.5cm} >{\centering\arraybackslash}m{1.8cm} >{\centering\arraybackslash}m{2.0cm}}
\toprule
\textbf{Benchmark} & \textbf{Evaluation $J$} & Opt. & \textbf{\#Parameters} & \textbf{Distribution $\mathbb{P}$} & \textbf{\#States} & \textbf{\#Transitions} \\
\midrule
UAV~\cite{DBLP:journals/sttt/BadingsCJJKT22} & ${\bm\Pr}(\neg C \until T)$  & max& 15 & $p_i \sim Beta(2,10)$ & 4096 & 86912 \\
\arrayrulecolor{lightgray}
\midrule
Aircraft Collision~\citep{kochenderfer2015} & ${\bm\Pr}(\neg C \until T)$ &max& 2 & $p \sim Beta(10,2)$\newline$q \sim Beta(2,10)$ & 303 & 3468  \\
\midrule
Firewire~\citep{DBLP:conf/tacas/HartmannsKPQR19} & ${\bm\Pr}(\lozenge T)$ &min& 1& $p \sim Beta(5,5)$ & 80980 & 112990  \\
\midrule
Semi-Auton.\ Vehicle~\citep{DBLP:conf/tacas/Junges0DTK16} & ${\bm\Pr}(\lozenge T)$ &max& 2& $p \sim Uni(.75,.95)$\newline$q\sim Uni(.55,.85)$ & 411 & 1503  \\
\midrule
 Betting Game~\citep{DBLP:journals/mmor/BauerleO11} & $\mathbb{E}(\lozenge T)$ &max& 1& $p \sim Beta(20,2)$ &  480 & 2730 \\
 \midrule
Chain~\citep{DBLP:conf/ewrl/Araya-LopezBTC11} & $\mathbb{E}(\lozenge T)$  &min& 1 & $p \sim Beta(5,5)$& 7 & 42  \\
\arrayrulecolor{black}
\bottomrule
\end{tabular}
} % end of resizebox
\label{tab:statsext}
\vspace{-5pt}
\end{table}

We present the results for the experimental evaluation in Section~\ref{sec:exp}. Table~\ref{tab:statsext} contains extended characteristics for each benchmark, including the underlying parameter distributions $\mathbb{P}$, unknown to the algorithm. $Uni(a,b)$ is a uniform distribution over the interval $[a,b]$, and $Beta(\alpha,\beta)$ is a Beta distribution with parameters $\alpha$ and $\beta$. Table~\ref{tab:resultsext} presents the extended results for both policies learned via IMDP learning and robust meta reinforcement learning. For IMDP learning we present the results for the best performing interval learning algorithm. Figure~\ref{fig:resfull} presents the full learning progress for all interval learning algorithms, which we describe in detail in Appendix~\ref{sec:appimdp}. Figure~\ref{fig:resfull} also shows the resulting performances and guarantees without model-based optimisations and parameter tying. For robust meta RL, we used directly parameterised policies~\cite{DBLP:books/lib/SuttonB98}.

\begin{table}[t]
\centering
\caption{Extended results for performances, guarantees and risk bounds.} % Caption moved above the table
\renewcommand{\arraystretch}{1.2} % Increase row spacing
\resizebox{\textwidth}{!}{%
\begin{tabular}{>{\centering\arraybackslash}m{2.9cm} >{\centering\arraybackslash}m{2.1cm} >{\centering\arraybackslash}m{2.1cm} >{\centering\arraybackslash}m{2.1cm} >{\centering\arraybackslash}m{2.1cm} >{\centering\arraybackslash}m{2.5cm} >{\centering\arraybackslash}m{2.5cm} >{\centering\arraybackslash}m{2.5cm} >{\centering\arraybackslash}m{2.5cm}}
\toprule
\small{\textbf{Benchmark}} &\textbf{Policy $\pi$}& \small{\textbf{Performance $J$}} & \small{\textbf{Guarantee \newline \centering $\tilde{J}$}} & \small{\textbf{Risk Bound $\varepsilon$}} & \small{\textbf{Empirical Risk $r(\pi, \tilde{J})$}} & \scriptsize{\textbf{Risk Bound 
$\varepsilon_{(5)}$/ Empirical Risk}} & \scriptsize{\textbf{Risk Bound $\varepsilon_{(10)}$/ Empirical Risk}} & \scriptsize{\textbf{Runtime per  $10^4$ \newline trajectories}} \\
\midrule
UAV & IMDP & 0.7110  & 0.7100 &  0.027 & 0.003 & 0.052 / 0.023 & 0.075 / 0.057 &  1.51s \\
 &  RoML & 0.6711  & 0.6700 &  0.027 & 0.003 & 0.052 / 0.023 & 0.075 / 0.057 &  1.51s \\
 \arrayrulecolor{lightgray}
 \midrule
Aircraft Collision & IMDP &0.5949 & 0.5907 & 0.027 & 0.004 & 0.052 / 0.017 & 0.075 / 0.046 & 0.35s \\
 & RoML &0.5879 & 0.5830 & 0.027 & 0.006 & 0.052 / 0.016 & 0.075 / 0.047 & 0.35s \\
  \midrule
  Firewire & IMDP &0.1946 & 0.1967 & 0.055 & 0.004 & 0.103 / 0.039 & 0.146 / 0.081 & 14.9s \\
 & RoML & 0.5973 & 0.5984 & 0.055 & 0.004 & 0.103 / 0.033  & 0.146 / 0.060  & 14.9s \\
  \midrule
Semi-Auton.\ Vehicle & IMDP & 0.7854 & 0.7767 & 0.027 & 0.004 & 0.052 / 0.018 & 0.075 / 0.033 & 0.50s \\
 &RoML & 0.0002 & 0.0002 & 0.027 & 0.003 & 0.052 / 0.010 & 0.075 / 0.034 & 0.50s \\
  \midrule
Betting Game & IMDP & 30.78  & 30.65 & 0.027 & 0.005 & 0.052 / 0.016 & 0.075 / 0.026 & 1.12s \\
 &RoML& 28.51  & 28.39 & 0.027 & 0.005 & 0.052 / 0.016 & 0.075 / 0.026 & 1.12s \\
  \midrule
Chain &IMDP& 485.4 & 487.3 & 0.027 & 0.003 & 0.052 / 0.010 & 0.075 / 0.034 & 0.32s \\
 &RoML & 127.2 & 128.0 & 0.027 & 0.002 & 0.052 / 0.032 & 0.075 / 0.054 & 0.32s \\
  \arrayrulecolor{black}
\bottomrule
\end{tabular}
} % end of resizebox
\label{tab:resultsext}
\vspace{-5pt}
\end{table}

\begin{figure}[h]
    \centering

    \begin{subfigure}[t]{\textwidth}
    \centering
        \includegraphics[width=0.82\textwidth]{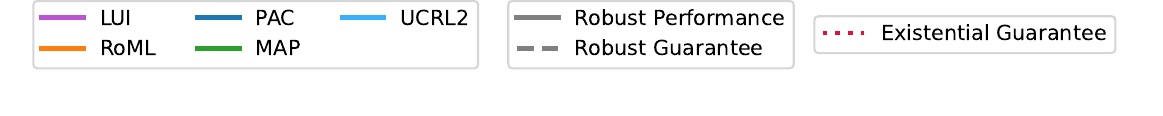}
    \end{subfigure}
    
    \begin{subfigure}[t]{\textwidth}
        \centering
        % Row 3, first subfigure
        \includegraphics[width=0.31\textwidth]{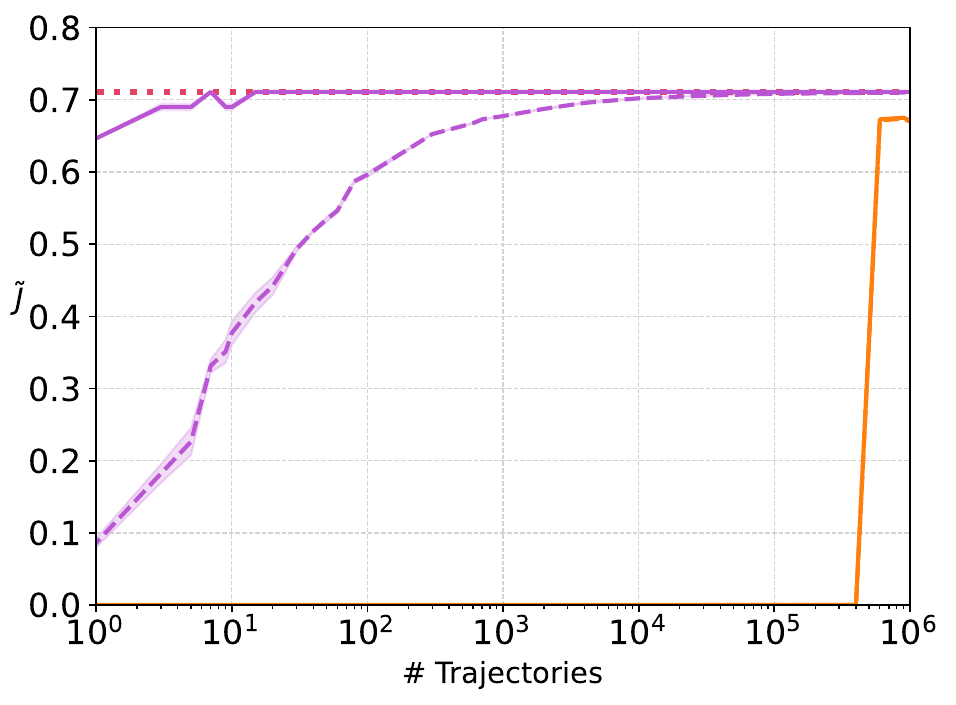}
        \hfill
        \includegraphics[width=0.31\textwidth]{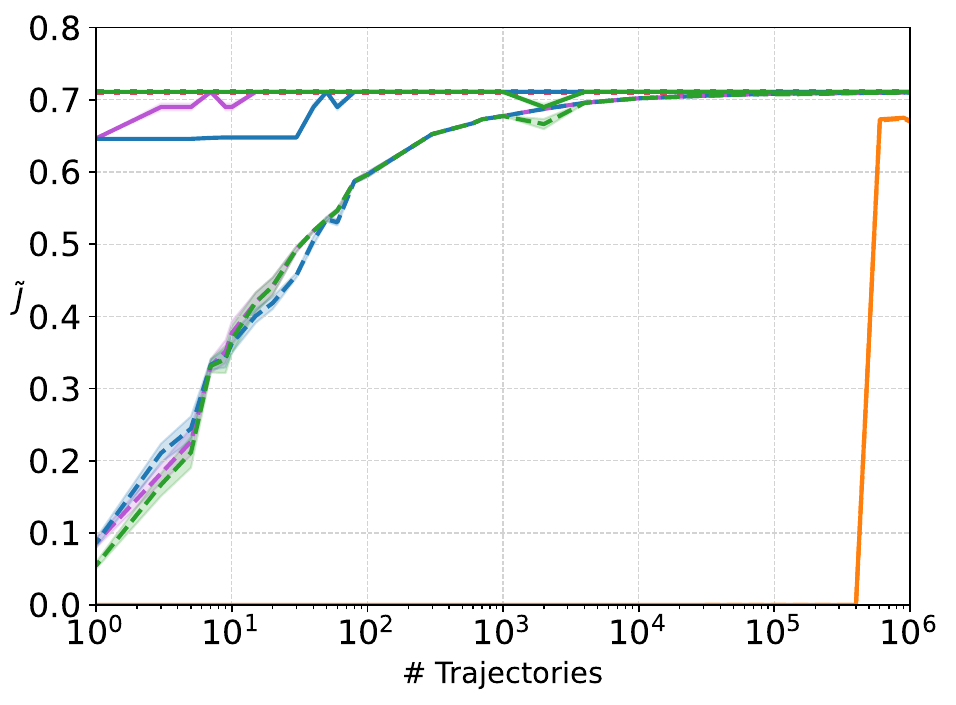}
        \hfill
        % Row 3, second subfigure
        \includegraphics[width=0.31\textwidth]{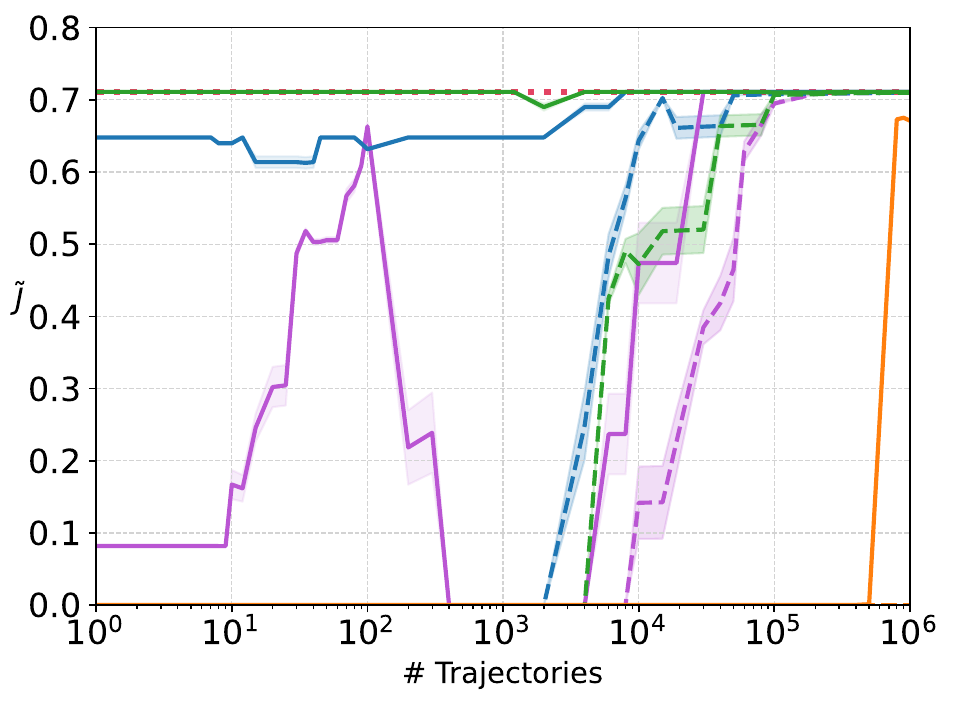}
        \caption{UAV.}
    \end{subfigure}

    \vspace{0.2em} % Add some vertical space between rows

    \begin{subfigure}[t]{\textwidth}
        \centering
        % Row 3, first subfigure
        \includegraphics[width=0.31\textwidth]{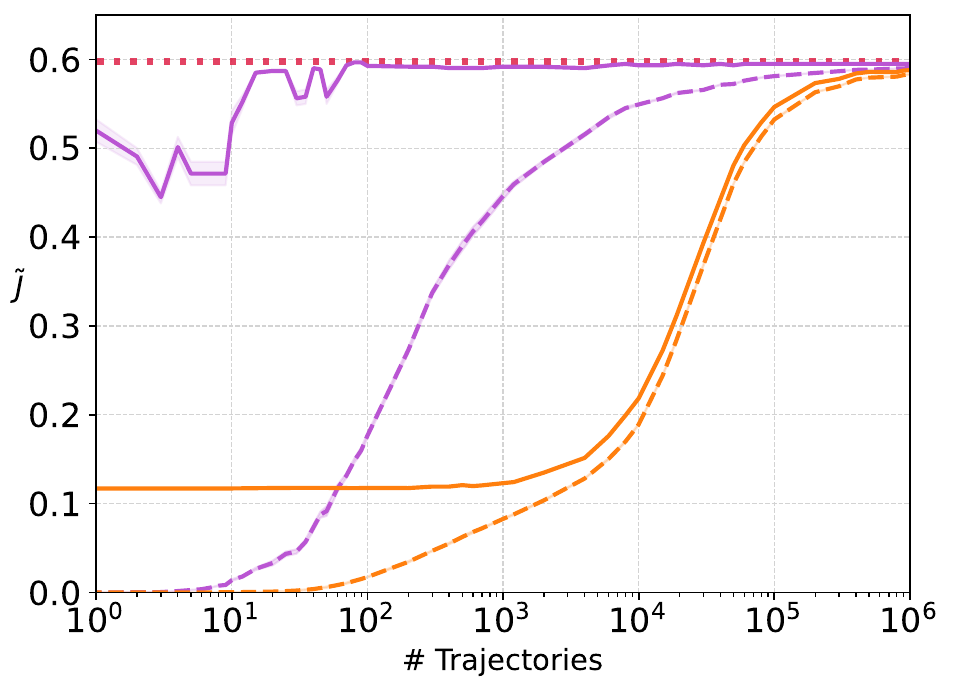}
        \hfill
        \includegraphics[width=0.31\textwidth]{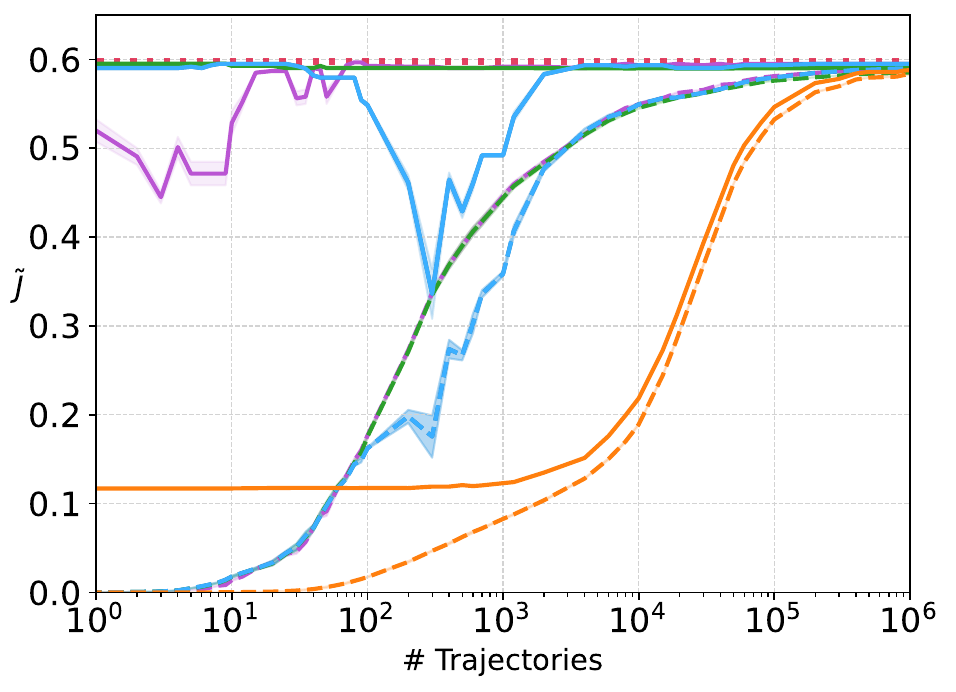}
        \hfill
        % Row 3, second subfigure
        \includegraphics[width=0.31\textwidth]{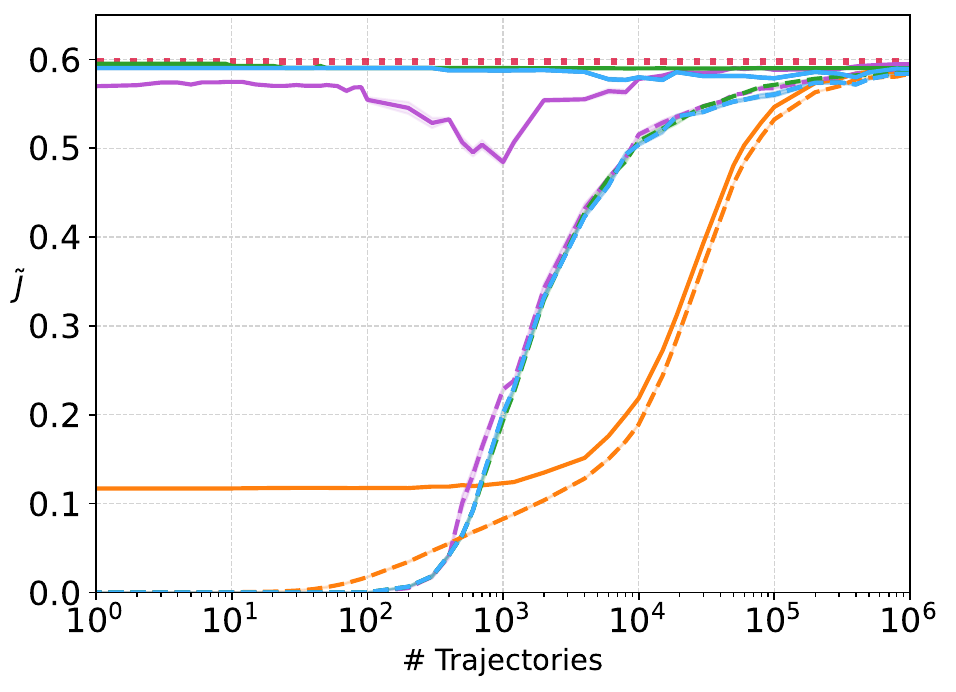}
        \caption{Aircraft Collision Avoidance.}
    \end{subfigure}

    \vspace{0.2em} % Add some vertical space between rows

    \begin{subfigure}[t]{\textwidth}
        \centering
        % Row 3, first subfigure
        \includegraphics[width=0.31\textwidth]{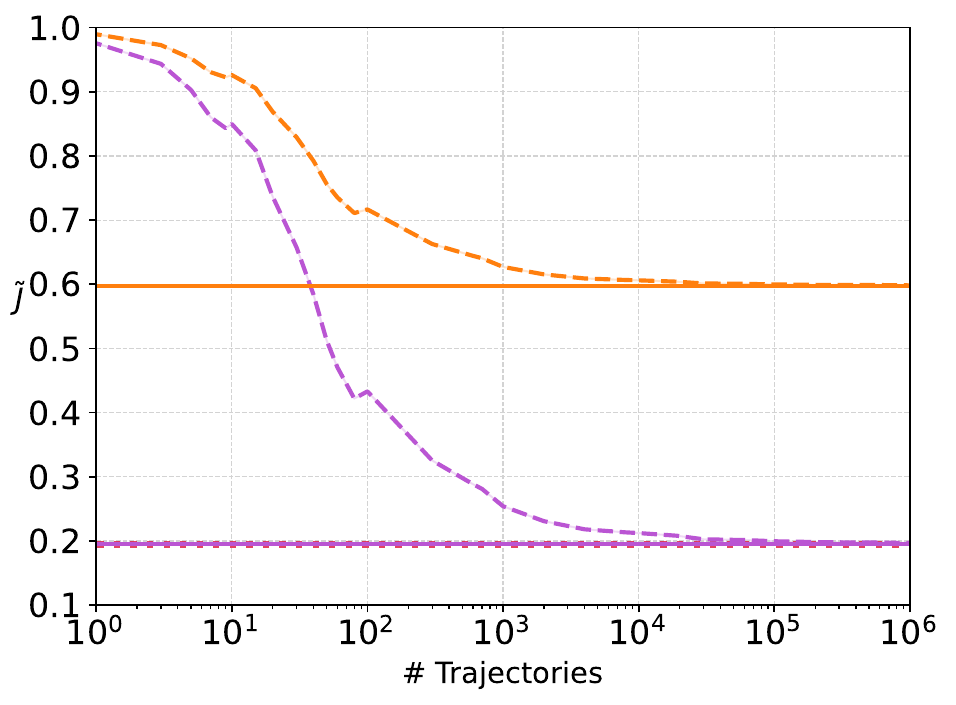}
        \hfill
        \includegraphics[width=0.31\textwidth]{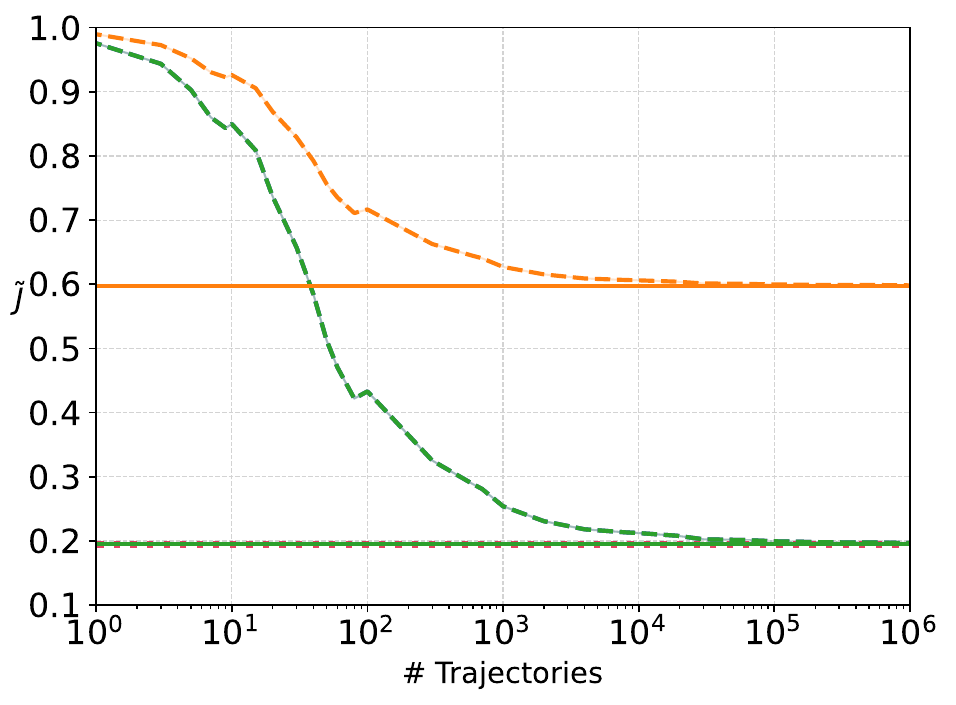}
        \hfill
        % Row 3, second subfigure
        \includegraphics[width=0.31\textwidth]{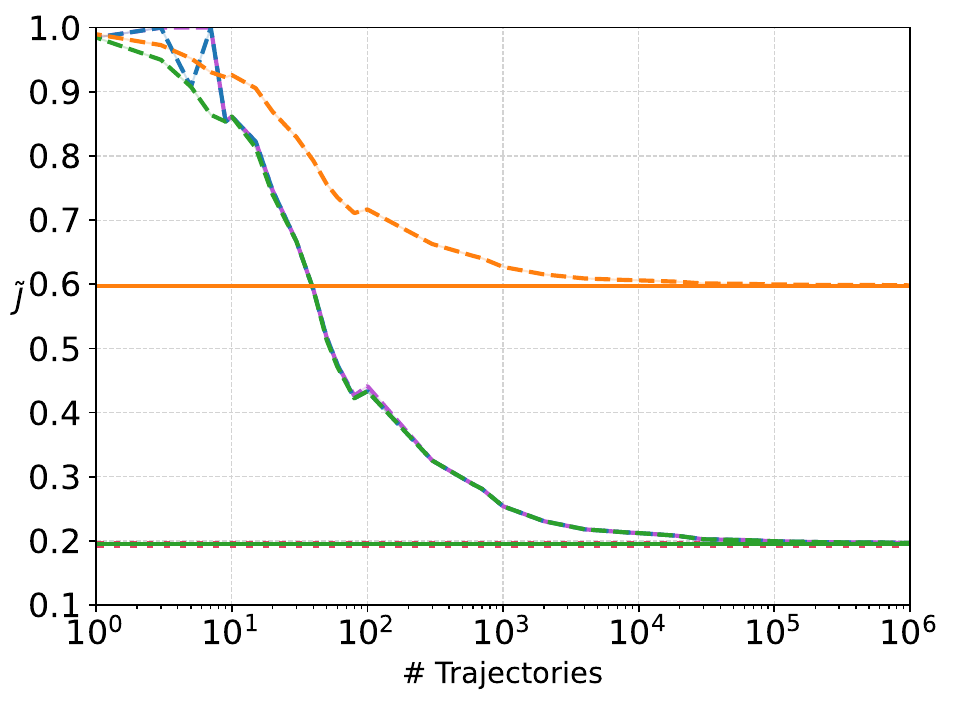}
        \caption{Firewire.}
    \end{subfigure}
\vspace{0.2em}
        \begin{subfigure}[t]{\textwidth}
        \centering
        % Row 3, first subfigure
        \includegraphics[width=0.31\textwidth]{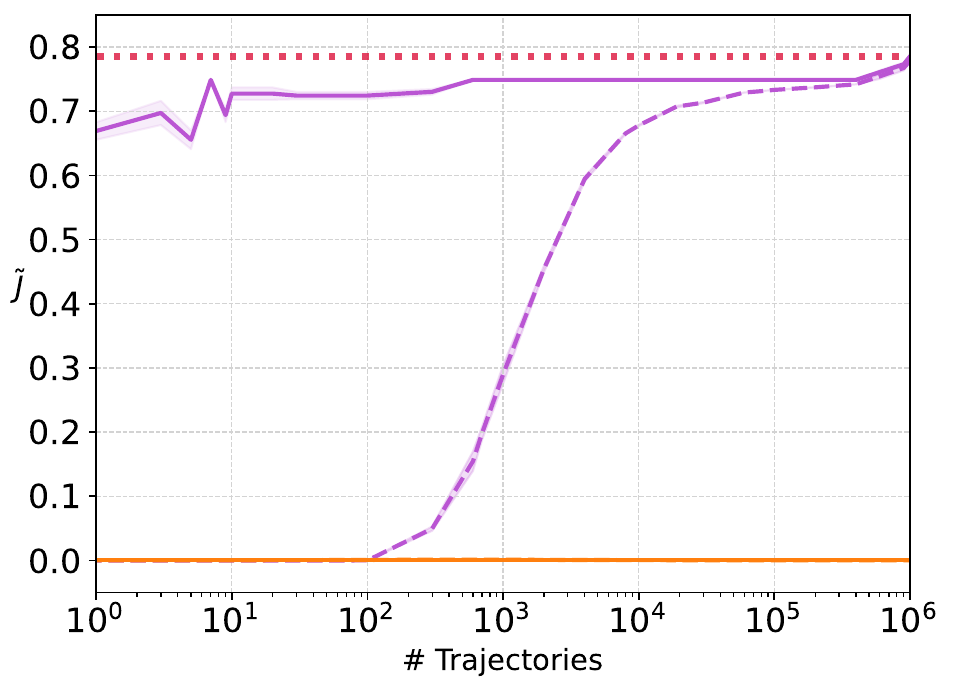}
        \hfill
        \includegraphics[width=0.31\textwidth]{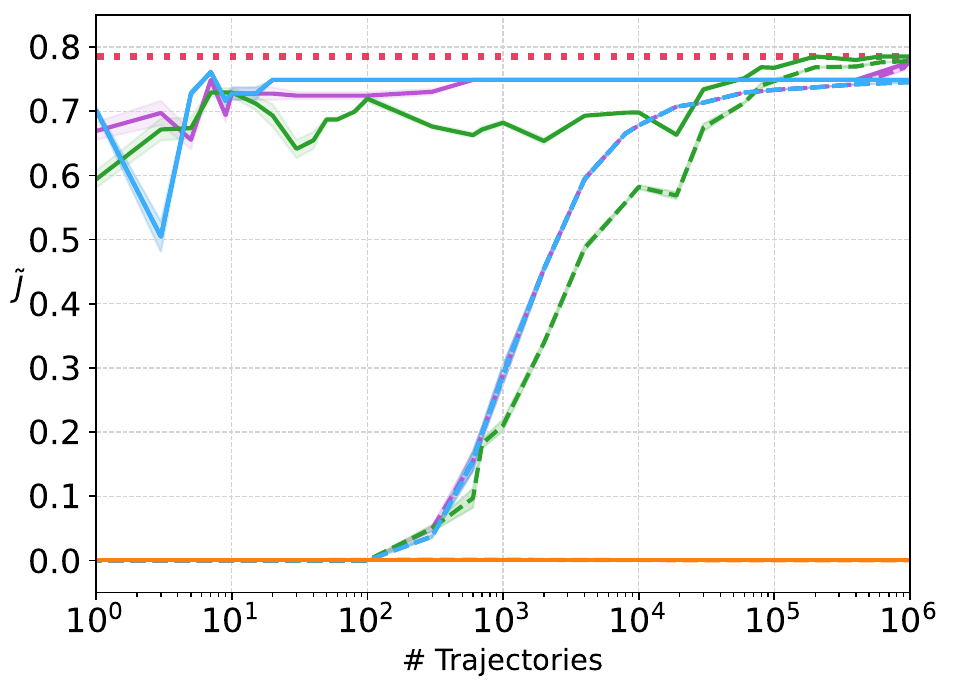}
        \hfill
        % Row 3, second subfigure
        \includegraphics[width=0.31\textwidth]{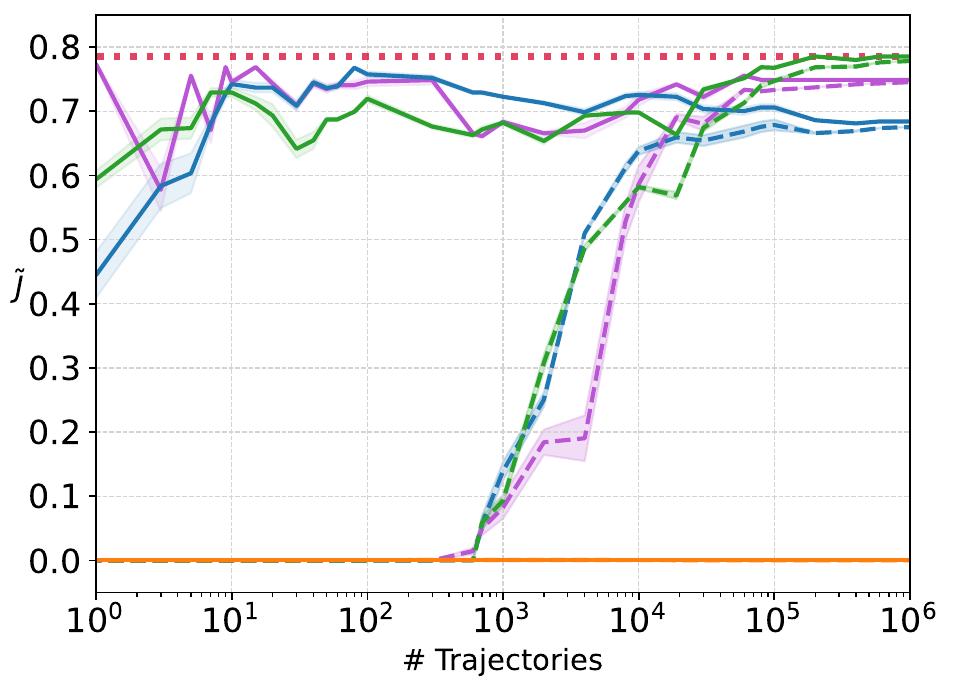}
        \caption{Semi-Autonomous Vehicle.}
    \end{subfigure}

    \vspace{0.2em} % Ad
\end{figure}
\begin{figure}\ContinuedFloat
    \centering
    \begin{subfigure}[t]{\textwidth}
        \centering
        % Row 3, first subfigure
        \includegraphics[width=0.31\textwidth]{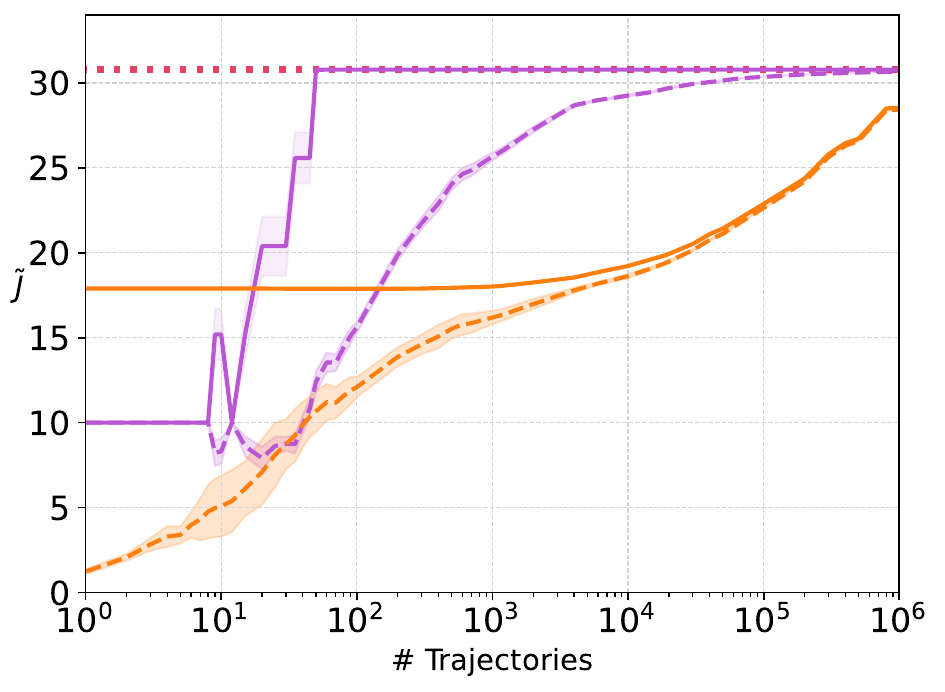}
        \hfill
        \includegraphics[width=0.31\textwidth]{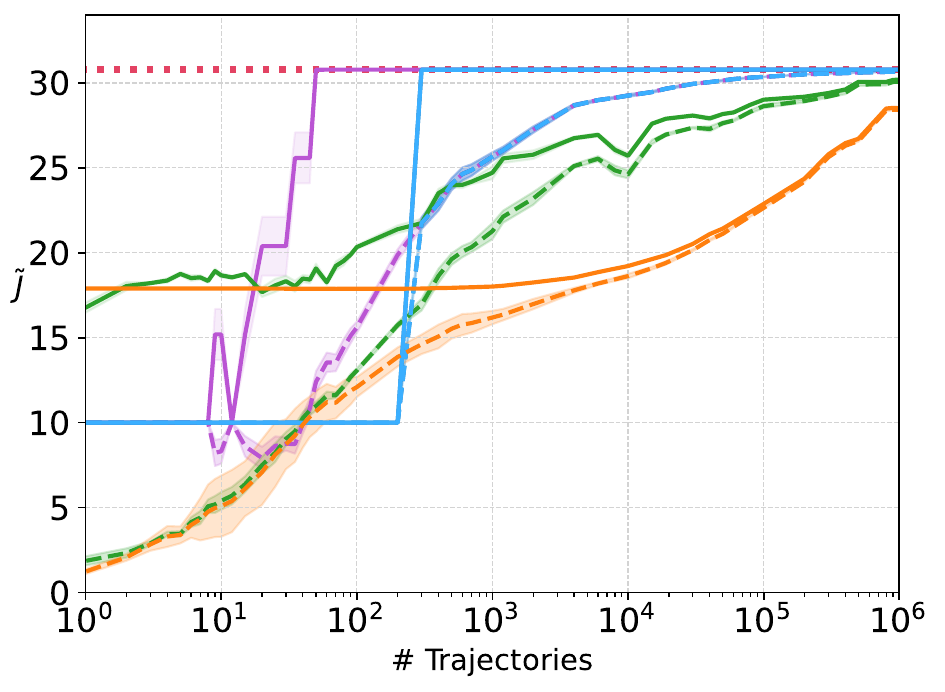}
        \hfill
        % Row 3, second subfigure
        \includegraphics[width=0.31\textwidth]{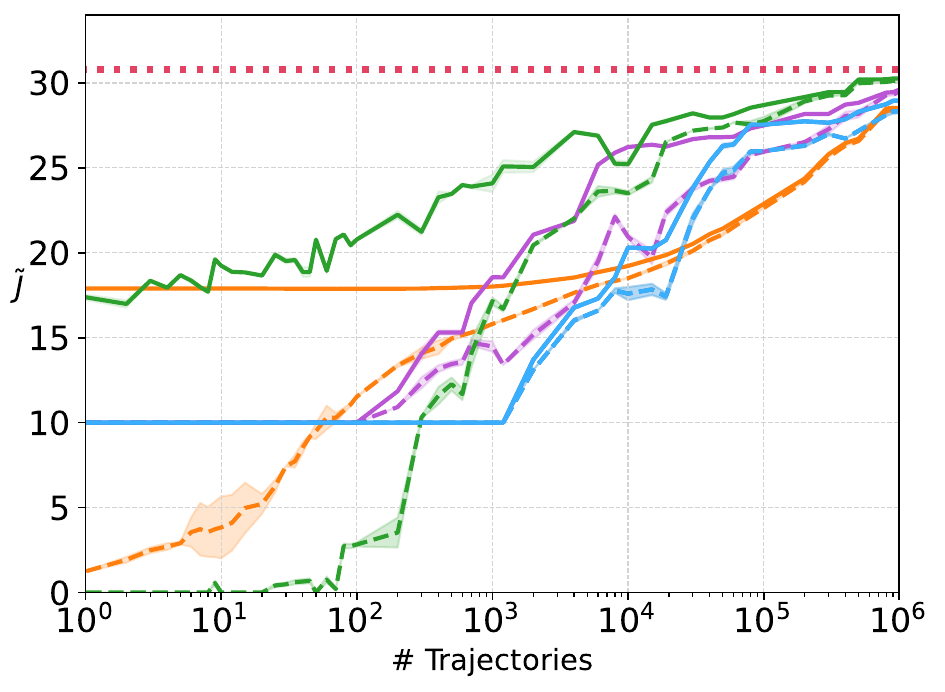}
        \caption{Betting Game.}
    \end{subfigure}

    \begin{subfigure}[t]{\textwidth}
        \centering
        % Row 3, first subfigure
        \includegraphics[width=0.31\textwidth]{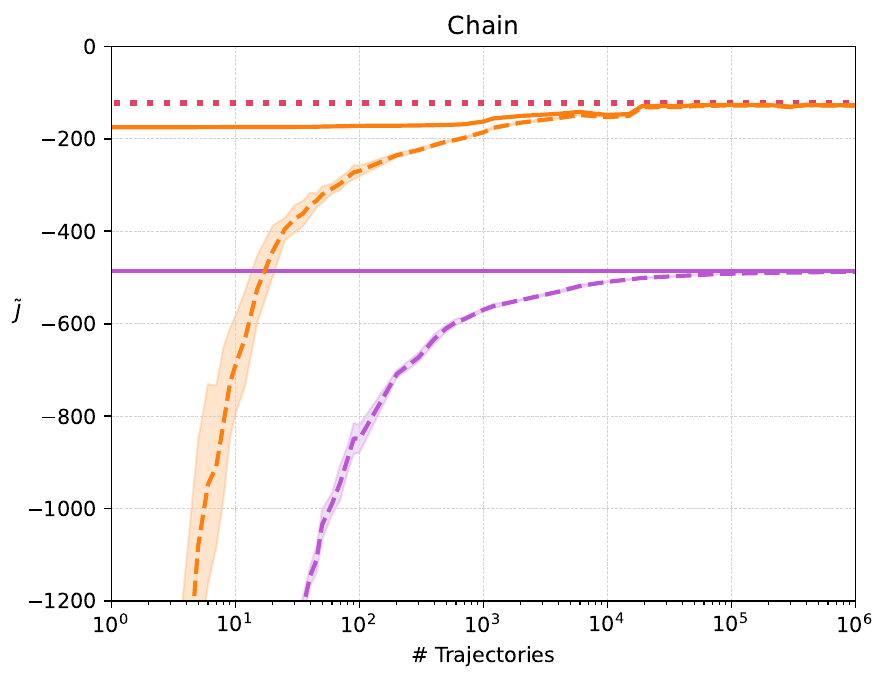}
        \hfill
        \includegraphics[width=0.31\textwidth]{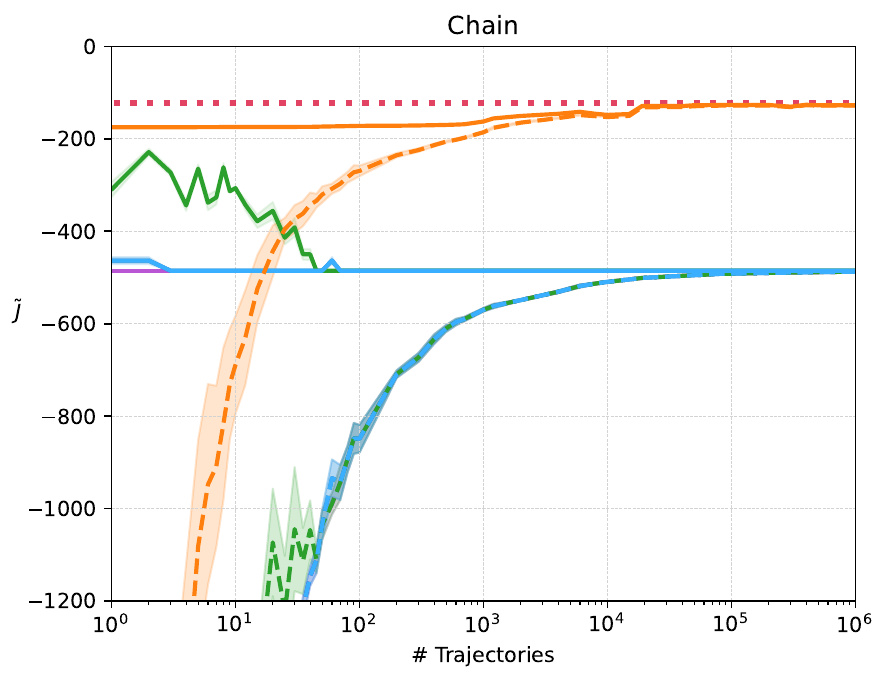}
        \hfill
        % Row 3, second subfigure
        \includegraphics[width=0.31\textwidth]{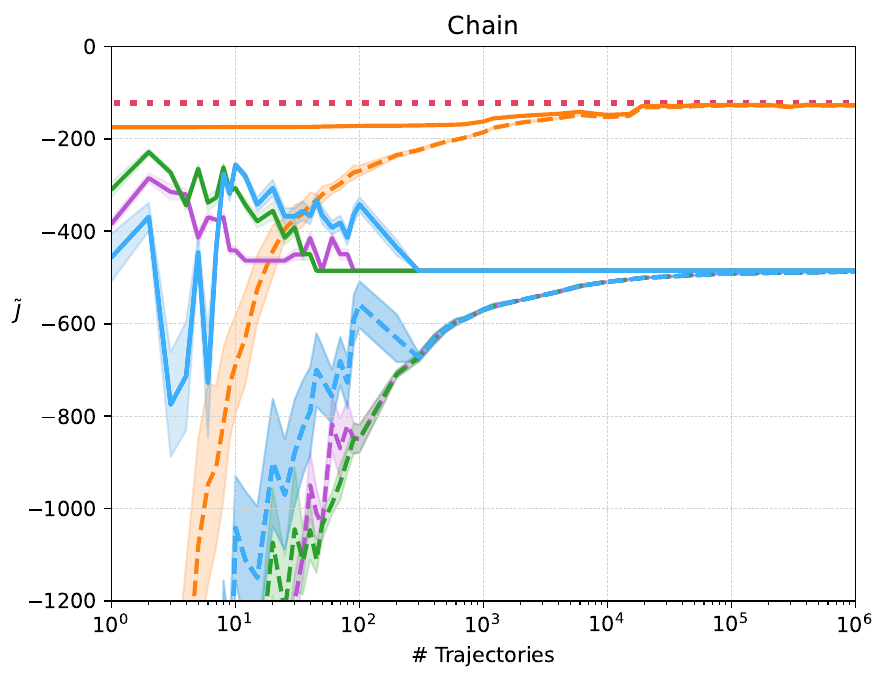}
        \caption{Chain.}
    \end{subfigure}
    
    \caption{Full process of policy training and performance quantification for the best performing IMDP policy and RL policy (left), all IMDP policies with model-based optimisations (middle) and all IMDP policies without optimisations (right).}
     \label{fig:resfull}
     
\end{figure}

\section{IMDP Learning Algorithms}
\label{sec:appimdp}
We detail the learning algorithms used: (1) PAC learning (Section~\ref{sec:paclearning}), (2) Linearly Updating Intervals~\citep{DBLP:conf/nips/SuilenS0022}, (3) UCRL2 reinforcement learning~\citep{DBLP:conf/nips/AuerJO08}, and (4) Maximum a-posteriori (MAP) point estimates~\citep{DBLP:conf/nips/SuilenS0022}. PAC learning is fully described in Section~\ref{sec:paclearning} and is applied in policy learning the exact same way we use it in the learning of IMDP overapproximations of the verification set.

\subsection{Linearly Updating Intervals}
Linearly Updating Intervals (LUI) is a recent approach for learning IMDPs from sample trajectories of an unknown MDP, introduced in~\cite{DBLP:conf/nips/SuilenS0022}. It exploits the Bayesian approach of intervals with linearly updating conjugate priors~\citep{DBLP:conf/nips/SuilenS0022,walter2009imprecision}. Although the learned IMDP does not guarantee inclusion of the underlying MDP, it has been empirically shown to be tighter while remaining sound. For each uncertain transition \(P(s,a,s')\), LUI updates the interval \(P^I = [\underline{P}^I, \overline{P}^I]\), known as the \textit{prior interval}, and the \textit{prior strength} \(n\).

Given state-action count \(N = \#(s,a)\) and transition count \(k = \#(s,a,s')\) from sample trajectories, the prior interval is updated to the \textit{posterior interval}, as follows:
\[
    \underline{P}^I = \frac{n\underline{P}^I + k}{n + N},
\]
\[
    \overline{P}^I = \frac{n\overline{P}^I + k}{n + N},
\]
with the posterior strength \(n' = n + N\). In our experiments, we initialize the prior intervals for each unknown transition as \([\varepsilon, 1]\) and set the prior strength to \(n = 0\).

\subsection{Maximum A-Posteriori Point Estimates}

Maximum a-posteriori (MAP) point estimates are a well-known principle from Bayesian statistics and parameter estimation~\citep{DBLP:books/lib/Bishop07}. Given an unknown MDP with transition probabilities \(P(s,a,s_i)\) for the \(m\) successors \(s_1, \dots, s_m\) of a state-action pair \((s,a)\), the probability of observing \(k_i = \#(s,a,s_i)\) transitions for each successor, given \(N = \#(s,a)\) trials, follows a \textit{multinomial distribution}:

\[
    f(k_1, \dots, k_m \mid P) = \frac{N!}{k_1!\cdot \dots \cdot k_m!} \cdot \prod_{i = 1}^{m}P(s,a,s_i).
\]

Using the Dirichlet distribution as the conjugate prior of the multinomial distribution~\citep{conjugatepriors}, we can obtain a posterior distribution over the unknown parameters \(P(s,a,s_i)\) by updating the prior parameters \(\alpha_1, \dots, \alpha_m\) of the Dirichlet distribution to \(\alpha_1 + k_1, \dots , \alpha_m + k_m\). The MAP point estimate is the mode of the resulting Dirichlet distribution, computed as:

\[
    \tilde{P}(s,a,s_i) = \frac{\alpha_i - 1}{\left(\sum_{j=1}^m \alpha_j\right) - m}.
\]

We employ the MAP point estimate as point intervals \([\tilde{P}(s,a,s_i), \tilde{P}(s,a,s_i)]\). In our experiments, we initialize all Dirichlet priors to be uniform distributions with \(\alpha_i = 1\).

\subsection{UCRL2}
UCRL2 is an established reinforcement learning algorithm introduced in~\cite{DBLP:conf/nips/AuerJO08}, designed to handle the exploration-exploitation trade-off in an unknown environment. We adapt a modified version from~\cite{DBLP:conf/nips/SuilenS0022} for IMDP learning. Similar to PAC learning (see Section~\ref{sec:paclearning}), we build transition probability intervals by expanding the frequentist point estimate:

\[
    \tilde{P}(s,a,s') = \frac{\#(s,a,s')}{\#(s,a)},
\]

for a transition \((s,a,s')\) by \(\delta\):

\[
    P^{\gamma}(s,a,s') = [\max(\mu, \tilde{P}(s,a,s') - \delta), \min(\tilde{P}(s,a,s') + \delta, 1)].
\]

The interval width \(\delta\) for UCRL2 is defined as:

\[
    \delta = \sqrt{\frac{14|S| \cdot \log(2|A| \cdot |T| \cdot 1/\gamma)}{\#(s,a)}},
\]

where \(|S|\) is the number of states, \(|A|\) is the number of actions, and \(|T|\) is the total number of transitions~\citep{DBLP:conf/nips/AuerJO08,DBLP:conf/nips/SuilenS0022}. For unvisited state-action pairs with \(\#(s,a) = 0\), we use the interval \([\mu, 1]\) as in PAC learning.

}

\end{document}